
\documentclass[nohyperref, nonacm]{article}

\usepackage{microtype}
\usepackage{graphicx}
\usepackage{subfigure}
\usepackage{booktabs} 
\usepackage{centernot}
\usepackage{tikz}
\usepackage{enumitem}
\usepackage{xcolor}
\usepackage[accepted]{icml2022}
\usetikzlibrary{arrows,shapes.arrows,shapes.geometric,
	shapes.multipartb,backgrounds,decorations.pathmorphing,positioning,fit,automata}
\tikzset{
	>=stealth',
	true/.style={
		rectangle,
		draw=black, very thick,
		text width=6.5em,
		minimum height=2em,
		text centered,
		fill=gray, opacity = 0.5},
	punkt/.style={
		rectangle,
		rounded corners,
		draw=black, very thick,
		text width=6.5em,
		minimum height=2em,
		text centered},
	est/.style={
		circle,
		draw=black, very thick,
		text centered},
	shade/.style={
		circle,
		draw=black, very thick, fill=gray!50,
		text centered},
	weight/.style={
		circle,
		draw=black, very thick,
		text width=6.5em,
		minimum height=2em,
		text centered},
	pil/.style={
		->,
		thick,
		shorten <=2pt,
		shorten >=2pt,},
	double/.style={
		<->,
		thick,
		shorten <=2pt,
		shorten >=2pt,},
	dash/.style={
		dashed,
		thick,
		shorten <=2pt,
		shorten >=2pt,},
	dashdouble/.style={
		<->,
		dashed,
		thick,
		shorten <=2pt,
		shorten >=2pt,}
}

\usepackage{hyperref}



\usepackage[accepted]{icml2022}

\usepackage{amsmath}
\usepackage{amssymb}
\usepackage{mathtools}
\usepackage{amsthm}

\usepackage[capitalize,noabbrev]{cleveref}

\theoremstyle{plain}
\newtheorem{theorem}{Theorem}[section]
\newtheorem{proposition}[theorem]{Proposition}
\newtheorem{lemma}[theorem]{Lemma}

\theoremstyle{definition}
\newtheorem{definition}[theorem]{Definition}
\newtheorem{assumption}[theorem]{Assumption}
\theoremstyle{remark}
\newtheorem{remark}[theorem]{Remark}
\newtheorem{example}[theorem]{Example}

\newcommand\norm[1]{\left\lVert#1\right\rVert}

\newcommand{\bE}{\mathbb{E}}

\newcommand{\bI}{\mathbb{I}}

\newcommand{\nCI}{\centernot{\CI}}
\newcommand{\CI}{\mathrel{\perp\mspace{-10mu}\perp}}

\DeclareMathOperator*{\argmax}{arg\,max}

\usepackage[textsize=tiny]{todonotes}
\usepackage[suppress]{color-edits}

\addauthor{lg}{blue}
\addauthor{kh}{purple}
\addauthor{sw}{red}
\addauthor{ac}{teal}

\addauthor{ov}{orange}

\icmltitlerunning{Predictive Performance Comparison of Decision Policies Under Confounding}

\begin{document}

\twocolumn[
\icmltitle{Predictive Performance Comparison of Decision Policies Under Confounding}




\begin{icmlauthorlist}
\icmlauthor{Luke Guerdan}{cmu}
\icmlauthor{Amanda Coston}{msr}
\icmlauthor{Kenneth Holstein}{cmu}
\icmlauthor{Zhiwei Steven Wu}{cmu}
\end{icmlauthorlist}

\icmlaffiliation{cmu}{Carnegie Mellon University.}
\icmlaffiliation{msr}{Microsoft Research}

\icmlcorrespondingauthor{Luke Guerdan}{lguerdan@cs.cmu.edu}


\vskip 0.3in
]



\printAffiliationsAndNotice{}  


\begin{abstract}
Predictive models are often introduced to decision-making tasks under the rationale that they improve performance over an existing decision-making policy.  However, it is challenging to compare predictive performance against an existing decision-making policy that is generally under-specified and dependent on unobservable factors. These sources of uncertainty are often addressed in practice by making strong assumptions about the data-generating mechanism.  In this work, we propose a method to compare the predictive performance of decision policies under a variety of modern identification approaches from the causal inference and off-policy evaluation literatures (e.g., instrumental variable, marginal sensitivity model, proximal variable). Key to our method is the insight that there are regions of uncertainty that we can safely ignore in the policy comparison.  We develop a practical approach for finite-sample estimation of regret intervals under no assumptions on the parametric form of the status quo policy. We verify our framework theoretically and via synthetic data experiments. We conclude with a real-world application using our framework to support a pre-deployment evaluation of a proposed modification to a healthcare enrollment policy.
\end{abstract}

\begin{figure}
    \centering
    \includegraphics[trim=0.5cm .25cm 0.4cm .2cm, clip, scale=0.73]{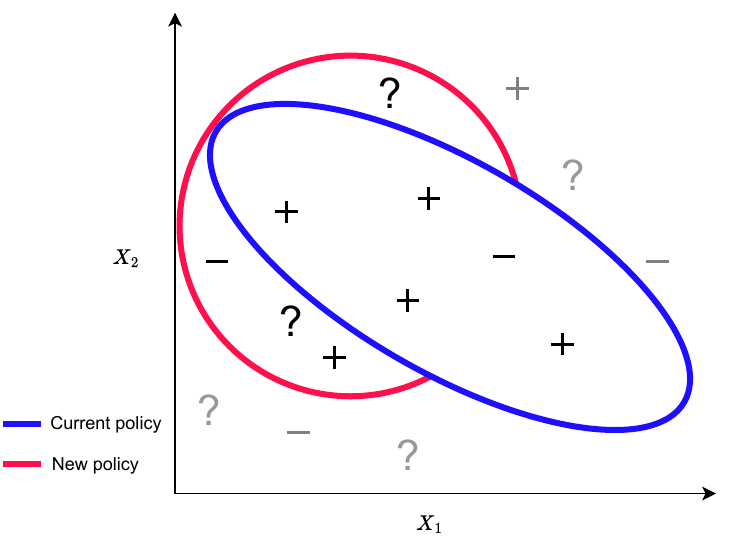}
    \vspace{-5mm}
    \caption{Illustration of uncertainty in comparing two policies in a toy setting with $X \in \mathcal{R}^2$. Points are labelled by their outcome: positive (+), negative (-) or unknown (?). Ovals denote the selection region of a policy. Points that neither policy selects (denoted by grey) are irrelevant to the comparison. Our method leverages this to reduce policy comparison uncertainty. }
    \label{fig:comparison_toy_plot}
    \vspace{-2mm}
\end{figure}

\section{Introduction}

Predictive models are often introduced under the rationale that they improve performance over an existing decision-making policy \citep{grove2000clinical}. For example, models have been developed with the goal of improving the accuracy of human decisions in domains such as healthcare, criminal justice, and education \citep{baker2021algorithmic, rambachan2021identifying}. Given the high-stakes nature of these domains, regulatory frameworks have called for organizations to provide explicit comparisons of predictive models against the status quo they are intended to replace \citep{aaa2023, johnson2022bureaucratic}. For example, consider two common policy comparison settings of real-world significance.

\begin{example}[Human vs algorithm decisions]\label{label:human_alg}
Let the status quo policy consist of human-only decisions (e.g., pre-trial release decisions \citep{kleinberg2018human}, medical testing decisions \citep{mullainathan2019machine}) made using covariates and unobserved contextual information. We would like to evaluate whether a model recommending actions by thresholding risk predictions would constitute an improvement over the current human-only policy (e.g., \citet{rambachan2021identifying, chouldechova2018case}).

\end{example}

\begin{example}[Human$+$algorithm vs algorithm decisions]\label{label:human_alg_alg}
Let the status quo policy consist of humans making decisions with the support of a predictive model, covariates, and unobserved contextual information. We would like to assess whether decisions made using the predictive model alone would yield a performance improvement over the human-algorithm combination (e.g., \citet{cheng2022child}). 
\end{example}

Both of these examples can be formalized as confounded off-policy evaluation in the contextual bandits setting \citep{jung2020bayesian, swaminathan2015counterfactual, tennenholtz2021bandits, rambachan2022counterfactual}. Given observational data collected under a confounded status quo policy, our goal is to compare predictive performance against an alternative policy which would assign different actions. \lgedit{A common strategy for off-policy evaluation under confounding involves \textit{partial identification} of policy performance within an uncertainty interval \citep{kallus2021minimax, rambachan2021identifying, rambachan2022counterfactual, pu2021estimating, zhang2021selecting}. Yet constructing informative performance intervals is challenging because it often requires imposing untestable assumptions on the structure and magnitude of unmeasured confounders impacting the status quo policy. 

In this work, we develop a novel partial identification method which reduces uncertainty in policy performance comparisons by isolating confounding-related uncertainty in the disagreement region of the action space (Figure \ref{fig:comparison_toy_plot}). Our uncertainty cancellation approach yields tighter regret intervals than existing frameworks designed for non-comparative performance evaluations \citep{rambachan2022counterfactual, namkoong2020off}, which account for redundant uncertainty in the agreement region of the action space. Our simple approach operates by (1) decomposing policy performance measures into sets of identified and partially identified comparison statistics, then (2) cancelling common partially identified terms in the regret estimand. 

We show that a diverse set of causal assumptions studied in prior off-policy evaluation literature imply uncertainty sets over partially identified comparison statistics. As a result, while existing off-policy evaluation frameworks are often tied to a fixed causal assumption (e.g., the marginal sensitivity model \citep{tan2006distributional}) that may be unjustified in some contexts, ours is interoperable with a broad family of modern identification strategies (e.g., Rosenbaum's $\Gamma$ model \citep{namkoong2020off, zhang2021selecting}, instrumental variable \citep{lakkaraju2017selective, kleinberg2018human}, proximal variable \citep{ghassami2023partial}, or the marginal sensitivity model (MSM) \citep{kallus2021minimax}).} \lgdelete{However, existing off-policy evaluation frameworks do not support comparison of predictive performance metrics in the settings outlined above. First, existing approaches target the expected potential outcome given actions selected under the new policy, i.e. $V(\pi) = \bE[Y(\pi)]$ \citep{kallus2020confounding, namkoong2020off, zhang2021selecting, uehara2022review, si2020distributionally, kallus2020confounding, hatt2022generalizing, ek2023off, tennenholtz2021bandits, ishikawa2023kernel}. Yet real-world model evaluations require fine-grained assessments of predictive performance metrics, such as the positive and negative class performance. These measures are challenging to estimate from off-policy data because they require conditioning on a partially observed potential outcome. While \citet{rambachan2022counterfactual} develop an evaluation framework targeting predictive performance measures under confounding, their approach does not support comparisons against the status quo data-generating policy. Additionally, prior frameworks for off-policy learning and evaluation rely upon a specific causal assumption (e.g., Rosenbaum's $\Gamma$ model \citep{namkoong2020off, zhang2021selecting}, instrumental variables \citep{lakkaraju2017selective, kleinberg2018human}, or the marginal sensitivity model (MSM) \citep{kallus2021minimax}). In real-world auditing settings, it is critical that evaluation frameworks are compatible with a broad set of causal assumptions to support flexibility. 

In this work, we develop the first framework supporting predictive-performance comparisons of decision-making policies under confounding. Our approach \textit{partially identifies} the performance of a proposed policy against a status quo data-generating policy. We decompose performance measures into sets of identified and partially identified comparison statistics. We show that a diverse set of causal assumptions studied in prior off-policy evaluation literature imply uncertainty sets over partially identified sufficient statistics. We develop an approach for tightly characterizing policy regret bounds by isolating uncertainty relevant to the policy comparison. Our approach leverages the key insight that we can eliminate redundant uncertainty which does not contribute to differential policy performance (Figure \ref{fig:comparison_toy_plot}).} 

We propose a flexible plug-in algorithm for finite sample estimation of regret bounds under no parametric assumptions on the data-generating policy. While this procedure inherits slow non-parametric convergence rates of regression functions used to estimate bounds on comparison statistics, we also show how to construct assumption-tailored doubly robust estimators which attain fast root-$n$ convergence rates under no parametric assumptions. We conduct synthetic experiments verifying the coverage of our regret intervals. We conclude by illustrating how our framework can be used to support a pre-deployment evaluation of a proposed modification to a healthcare enrollment policy \citep{obermeyer2019dissecting}. \lgedit{Our results demonstrate that, in some cases, our improved regret interval supports more conclusive pre-deployment assessments of decision policies than would be possible via existing off-policy evaluation approaches.} 

We make the following main contributions:

\setlist{nolistsep} 
 \begin{itemize}[noitemsep]
    \item We formulate the problem of comparative predictive performance evaluations for decision making policies under unmeasured confounding ($\S$ \ref{sec:preliminaries}).

    \item We propose a novel partial identification technique that yields informative bounds on predictive performance differences by isolating comparison-relevant uncertainty ($\S$ \ref{sec:bound_identification}). Our technique is interoperable with a range of modern identification approaches from causal inference and off-policy evaluation literature ($\S$ \ref{sec:pid_framework}). 

    \item We develop flexible methods for finite sample estimation of regret bounds under no parametric assumptions on the confounded status quo policy ($\S$ \ref{sec:bound_estimation}).
    
    \item We validate our framework theoretically and via synthetic data experiments ($\S$ \ref{sec:numeric_experiments}). We demonstrate how our framework can support a pre-deployment evaluation of a proposed healthcare enrollment policy ($\S$ \ref{sec:realworld_experiments}). We present all proofs in the Appendix.\footnote{All code for experiments is publicly available \href{https://github.com/lguerdan/icml24_predictive_performance_comparison_dps}{here}.}

\end{itemize}

\section{Related Work}\label{sec:related_work}

Off-policy evaluation (OPE) is a widely studied problem in reinforcement learning \citep{uehara2022review, jiang2016doubly, precup2000eligibility, namkoong2020off, kallus2020confounding}, contextual bandits \citep{si2020distributionally, tennenholtz2021bandits, swaminathan2015counterfactual, xu2021deep, dudik2014doubly}, and econometrics \citep{kitagawa2018should, huber2019introduction} literature. \lgdelete{To our knowledge, we offer the first off-policy evaluation framework supporting predictive performance comparisons of decision policies.} We study the confounded offline contextual bandits setting \citep{jung2020bayesian, swaminathan2015counterfactual, tennenholtz2021bandits}, in which observational data is collected under a behavior policy $\pi_0(X_i, U_i)$ which acts on covariates $X_i$ and unmeasured confounders $U_i$.

\lgedit{Existing OPE frameworks typically estimate the \textit{value function} $V(\pi) = \bE[Y(\pi(X))]$ of a new policy $\pi(X_i)$ via observational data \citep{kallus2020confounding, namkoong2020off, zhang2021selecting, uehara2022review, si2020distributionally, kallus2020confounding, hatt2022generalizing, ek2023off, tennenholtz2021bandits, ishikawa2023kernel}.\footnote{Work in the confounded offline setting typically studies updated policies that only observe covariates. When $\pi(X_i)$ also observes $U_i$ regret intervals tend to be vacuous. } In this work, we instead target predictive performance measures (e.g., Accuracy, TNR, PPV), which are often of interest during policy evaluation of algorithmic risk assessments. While OPE of predictive performance measures introduces estimand-specific challenges (i.e., conditioning on a partially-observed potential outcome), approaches developed for OPE of value functions are also applicable in the algorithmic decision-making context (e.g., doubly robust estimation, reweighting).\footnote{In the algorithmic decision-making context, the term \textit{selective labels} is often used to describe a setting in which a potential outcome is only observed under one of the possible actions \citep{lakkaraju2017selective, wei2021decision, coston2021characterizing, de2018learning}. }

In particular, \citet{rambachan2022counterfactual} develop a partial identification framework for robust learning and evaluation of algorithmic risk assessments under confounding. This framework leverages doubly-robust approaches for OPE of predictive performance measures. One natural strategy for extending this framework to the policy comparison setting involves computing bounds on $V(\pi)$ and $V(\pi_0)$ \textit{independently} before computing the regret interval by taking a difference. However, we show that this approach yields an overly conservative regret interval by introducing redundant uncertainty in the agreement region of the policy action space. We introduce an alternative $\delta$-regret interval, which we show yields more informative bounds than this baseline approach. Moreover, we extend the Mean Outcome Sensitivity Model (MOSM) introduced by \citet{rambachan2022counterfactual} to develop uncertainty sets around partially identified policy comparison statistics which are consistent with a range of modern causal assumptions (e.g., instrumental variable, MSM, Rosenbaum's $\Gamma$ sensitivity model).}

Our work also builds upon prior literature studying policy learning and selection under confounding \citep{zhang2021selecting, konyushova2021active, yang2022offline, kuzborskij2021confident, zhang2021selecting, kallus2021minimax, kallus2018confounding, hatt2022generalizing, ek2023off, gao2023confounding, balachandar2023domain}. \citet{zhang2021selecting} develop a framework for ranking individualized treatment policies under Rosenbaum's $\Gamma$ sensitivity model. \citet{namkoong2020off} also use Rosenbaum's $\Gamma$ sensitivity model to partially identify $V(\pi)$ in the sequential setting. \citet{kallus2021minimax, kallus2018confounding} develop an approach for learning minimax optimal decision-policies under the MSM. In principle, confounding-robust policy learning frameworks such as these can also be used to recover $\delta$-regret intervals against the confounded data-generating policy \citep{hatt2022generalizing, ek2023off, gao2023confounding}. However, these techniques (1) target the mean potential outcome rather than predictive performance measures, and (2) are tied to a fixed causal assumption which may not be applicable in all contexts. In contrast, our framework is interoperable with a range of assumptions (e.g., instrumental variable, MSM, proximal variable) studied in prior OPE literature.

\lgedit{Finally, our work relates to literature studying comparison of human versus algorithmic decision-making policies.} \citet{kleinberg2018human} develop a framework for comparing human decisions and algorithmic predictions under selective labels and unobservables. The contraction technique underpinning this approach compares the failure rate of judicial decisions against a predictive model under an assumption that judges have heterogeneous selection rates \citep{lakkaraju2017selective}. Later developments formalize this approach into an instrumental variable framework \citep{chen2023learning, rambachan2021identifying, rambachan2022counterfactual}. \citet{rambachan2021identifying} develop a framework for identifying systematic prediction mistakes in historical human decisions. This framework enables utility-based comparisons of human versus algorithmic decision-making policies under varying sets of econometric assumptions. \lgedit{Most recently, \citet{ben2024does} devise an experimental framework for evaluating human+algorithm hybrid workflows against the quality of decisions that a human or algorithm would make alone. This framework supports \textit{post-deployment} policy comparisons following randomized assignment to a treatment (algorithmic recommendation) or control (no algorithmic recommendation) condition. In contrast, our framework is designed to support \textit{pre-deployment} policy comparisons given observational data collected under a status quo policy.}

\section{Preliminaries}\label{sec:preliminaries}

Let $\pi_0: \mathcal{X} \times \mathcal{U} \rightarrow \mathcal{A}$ be a status quo decision-making policy assigning binary actions given measured covariates $X \in \mathcal{X}$ and unmeasured confounders $U \in \mathcal{U}$. Let $\pi: \mathcal{X} \rightarrow \mathcal{A}$ be a proposed policy that assigns binary actions only via covariates.\footnote{Our framework is compatible with new policies which are stochastic and deterministic. However we assume that $\pi$ is known.} For example, in a medical testing context, $\pi_0$ is an existing (e.g., physician based) testing policy, while $\pi$ is a proposed algorithmic policy. Let $D^{\pi_0}$, $T^{\pi}$ be random variables indicating actions selected under the status quo and proposed policies, respectively.\footnote{We sometimes omit policy superscripts over random variables to ease notation.} Let $Y(1) \in \mathcal{Y}$ be the outcome of interest to the policy comparison. This is the potential outcome which \textit{would be observed} given a positive decision ($D^{\pi_0}=1$) selected under the status quo policy \citep{rubin2005causal}. For example, $Y(1)$ denotes the disease status of a patient, or repayment if a lendee is granted a loan. Let $Y \in \mathcal{Y}$  be the binary outcome observed in observational data. In our selective labels setting \citep{lakkaraju2017selective}, $Y$ is only observed when an instance received a positive decision ($D^{\pi_0}=1$) under $\pi_0$. We make the following standard consistency assumption on $Y$ \citep{rubin2005causal}.

\begin{assumption}[Consistency]\label{assumption:consistency}
$D^{\pi_0}=1 \implies Y = Y(1)$.
\end{assumption}

For example, consistency would be violated if the testing decision for one patient impacts the disease status of another \citep{hudgens2008toward}. Additionally, we assume that each instance has some probability of both decisions under the status quo policy.

\begin{assumption}[Positivity]\label{assumption:positivity}
$p(D^{\pi_0}=a \mid X=x, U=u) > 0, \; \forall a \in \mathcal{A}, x \in \mathcal{X}, u \in \mathcal{U}$.
\end{assumption}

\subsection{Problem Formulation}\label{subsec:problem_formulation}

Given the target distribution $(X, U, D^{\pi_0}, T^{\pi}, Y(1), Y) \sim p^*(\cdot)$, our goal is to evaluate the regret $R^*(\pi, \pi_0; m) = m^*(\pi) - m^*(\pi_0)$, where $m^*: \mathcal{A} \times \mathcal{Y} \rightarrow \mathbb{R}_+$ is a policy performance measure computed with respect to $Y(1)$. We study the following performance measures, which are analogously defined for the baseline policy.

\textbf{Predictive performance:} Let $m^*_y(\pi) = p(T^{\pi} = 1 \mid Y(1) = y)$ be the positive ($y=1$) and negative ($y=0$) class predictive performance.\footnote{While our definition of $m^*_y(\pi)$ assumes that $T^{\pi} = 1$ (i.e, the TPR and FPR) the TNR $(y=0)$ and FNR $(y=1)$ can be recovered by taking $1 - R^*(\pi, \pi_0; m_y)$.} For example, a positive false positive rate regret $m^*_{y=0}(\pi) - m^*_{y=0}(\pi_0) > 0$ indicates that the proposed policy recommends tests for healthy patients more frequently than the status quo. Similarly, let $m^*_a(\pi) = \bE[Y(1) = a \mid T^{\pi}=a]$ be the positive ($a=1$) and negative ($a=0$) predictive value of $\pi$.

\textbf{Utility:} Let $u_{a,y} \in \mathbb{R}_{+}$ be the utility of outcome $y$ under action $a$. We let 

$$
m_u^*(\pi)=\bE\left[ \sum_{a, y} u_{ay} \cdot \bI \{ A^{\pi} = a, Y(1) = y \}  \right] 
$$

be the expected utility of $\pi$.\footnote{We model utilities of binary classification outcomes rather than the vector-valued potential outcome $Y(a) \in \{0,1\}^{|A|}$ studied in the standard off policy evaluation setup.} A positive utility regret $m_u^*(\pi) - m_u^*(\pi_0) > 0$ indicates that the updated policy has an overall welfare benefit in comparison to the status quo \citep{rambachan2021identifying}. For example, this measure reflects settings in which turning away sick patients incurs lower utility than testing healthy ones ($u_{01} << u_{10}$) as well as accuracy ($u_{11} = u_{00} = 1$, $u_{10} = u_{01} = 0$).

Because of the selective labels problem, the policy regret is \textit{partially identified} within the interval $R^*(\pi, \pi_0; m) \in [\underline{R}(\pi, \pi_0; m), \overline{R}(\pi, \pi_0; m)]$. Our goal in this work is to recover the most informative regret interval possible given observational data $O = \{(X_i, T^{\pi}_i, D^{\pi_0}_i, Y_i): i=1,...,n \} \sim p(X, D^{\pi_0}, T^{\pi}, Y)$ generated under the status-quo policy.

\subsection{Partial Identification of Policy Performance}

To tightly characterize policy performance differences, we introduce notation that enables us to isolate confounding-related uncertainty in performance measures. We decompose performance measures into a set of sufficient $v$-statistics $\mathbf{v} = \{ v_y(t,d): \forall y, t, d \}$, where each term $v_y(t,d) = p(T^{\pi} =t, D^{\pi_0}=d, Y(1) = y)$ is the joint probability of policy actions and the potential outcome. 

The subset $\textbf{v}_1 = \{v_y(t,1) : \forall y, t\}$ contains all $v$-statistics known from observational data. By consistency (\ref{assumption:consistency}), each term in $\mathbf{v}_1$ is given by 
$$
 p(T=t, D=1, Y(1)=1) = p(T=t, D=1, Y=1).
$$

Similarly, $\textbf{v}_0 = \{v_y(t,0) : \forall y, t\}$ contains all partially identified $v$-statistics. Each term in $\textbf{v}_0$ is bounded in the interval $v_y(t,0) \in [0, p(T=t, D=0)]$ because the potential outcome is unobserved when $D=0$. More generally, we let $v_y(t,0) \in [\underline{v}_y(t,0), \overline{v}_y(t,0)]$ be bounds on partially identified $v$-statistics. We let $\textbf{v}^*_0$ be the true value of partially identified terms under the target distribution and let 
$$
\mathcal{V}(p) = \left\{ \mathbf{v}_0 : \sum_y v_y(t,0) = p(T=t, D=0), \; \forall t  \right\} 
$$
be an uncertainty set of feasible values consistent with the observed data distribution. We can bound policy performance measures over this uncertainty set via 
\begin{align*}
    \underline{m}(\pi_0; \mathcal{V}) &= \min_{\mathbf{v}_0 \in \mathcal{V}(p)} m(\mathbf{v}_0, \mathbf{v}_1; \pi_0), \\
    \overline{m}(\pi_0; \mathcal{V}) &= \max_{\mathbf{v}_0 \in \mathcal{V}(p)} m(\mathbf{v}_0, \mathbf{v}_1; \pi_0),
\end{align*}
where $m(\mathbf{v}^*_0, \mathbf{v}_1; \pi_0)$ is a $v$-statistic decomposition of $m^*(\pi_0)$. In the following section, we show that this reduction to a set of common sufficient statistics enables uncertainty cancellation when comparing across policies.

\section{Regret Bound Identification}\label{sec:bound_identification}

We now introduce our novel approach for comparing policies under confounding. Our approach yields informative regret intervals by eliminating redundant uncertainty that does not contribute to differential policy performance. We first focus on asymptotic performance bounds, and provide finite-sample analyses in Section \ref{sec:bound_estimation}. 

A baseline approach for bounding regret involves partially identifying the performance of each policy individually, and then taking a difference across policy-specific bounds. We refer to this as the baseline regret interval.

\begin{definition}[Baseline regret interval]
The baseline regret interval over $\mathcal{V}(p)$ has lower and upper endpoints: 
\begin{align*}
\underline{R}(\pi, \pi_0; m, \mathcal{V}) &= \underline{m}(\pi; \mathcal{V}) - \overline{m}(\pi_0; \mathcal{V}), \\
\overline{R}(\pi,\pi_0; m, \mathcal{V}) &= \overline{m}(\pi; \mathcal{V}) - \underline{m}(\pi_0; \mathcal{V}).
\end{align*}
\end{definition}

In contrast, our proposed approach directly bounds the difference in the oracle regret $\delta_m(\mathbf{v}^*_0, \mathbf{v}_1) = m(\mathbf{v}^*_0, \mathbf{v}_1;  \pi) - m(\mathbf{v}^*_0, \mathbf{v}_1; \pi_0)$.

\begin{definition}[$\delta$-regret interval]

The $\delta$-regret interval over $\mathcal{V}(p)$ has lower and upper endpoints:
\begin{align*}
\underline{R}_{\delta}(\pi, \pi_0; m, \mathcal{V}) = \min_{\mathbf{v}_0 \in \mathcal{V}(p)} \delta_m(\mathbf{v}_0, \mathbf{v}_1), \\
\overline{R}_{\delta}(\pi, \pi_0; m, \mathcal{V}) = \max_{\mathbf{v}_0 \in \mathcal{V}(p)} \delta_m(\mathbf{v}_0, \mathbf{v}_1).
\end{align*}
\end{definition}

The $\delta$-regret interval yields tighter regret bounds by eliminating redundant uncertainty irrelevant to policy comparison. For example, consider the accuracy regret decomposition
\begin{align*}
\delta_u(\mathbf{v}^*_0, \mathbf{v}_1) &= \sum_y v_y(y,1-y) - v_y(1-y, y).
\end{align*}
Because $v_0(0,0) \in \mathbf{v}_0$ cancels when taking a difference across polices, this term does not contribute uncertainty to the $\delta$-regret interval, as it would with the baseline interval.\footnote{We provide additional intuition for the benefits of our uncertainty cancellation approach in Appendix \ref{appendix:uq_benefits}.} We now provide a theoretical result characterizing the improvement offered by the $\delta$-interval more generally. We let $I_{\delta}(m, \mathcal{V}) = \overline{R}_{\delta}(\pi, \pi_0; m, \mathcal{V}) - \underline{R}_{\delta}(\pi, \pi_0; m, \mathcal{V})$ and $I(m, \mathcal{V}) = \overline{R}(\pi, \pi_0; m, \mathcal{V}) - \underline{R}(\pi, \pi_0; m, \mathcal{V})$ be the length of the $\delta$-regret and baseline regret intervals, respectively.

\begin{theorem}[Regret separation]\label{thm:delta_seperation} 

Let $\Delta(m, \mathcal{V}) = I(m, \mathcal{V}) - I_{\delta}(m, \mathcal{V})$. Then the $\delta$-regret interval offers the following improvement over the baseline regret interval
\begin{align*}
\Delta(m_y, \mathcal{V})  &\geq \frac{2\alpha \cdot v_{y}(1,1)}{(\overline{\gamma}_y)^2}, \;\; \Delta(m_{a=1}, \mathcal{V}) = 0, \\
\Delta(m_u, \mathcal{V}) & \geq 2 \alpha \cdot (u_{00} + u_{01}), \\
\Delta(m_{a=0}, \mathcal{V}) &\geq \frac{2\alpha}{\max\{\psi_{0}(\pi), \psi_{0}(\pi_0)\}}, 
\end{align*}
where $\alpha = \overline{v}_{0}(0, 0) - \underline{v}_{0}(0, 0)$,  $\psi_0(\pi) =  p(A^{\pi} = 0)$, and $\overline{\gamma}_{y} = \sum_{a}\sum_{a'} \overline{v}_{y}(a,a')$.
\end{theorem}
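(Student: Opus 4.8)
The first step is to put the two interval lengths into comparable form. From the definitions of the baseline endpoints, $I(m,\mathcal V)=\big(\overline m(\pi;\mathcal V)-\underline m(\pi;\mathcal V)\big)+\big(\overline m(\pi_0;\mathcal V)-\underline m(\pi_0;\mathcal V)\big)$ is exactly the sum of the two policy-specific interval widths, whereas $I_\delta(m,\mathcal V)$ is the width of the single function $\delta_m(\cdot,\mathbf v_1)$ over $\mathcal V$. Writing $W[f]=\max_{\mathbf v_0\in\mathcal V}f-\min_{\mathbf v_0\in\mathcal V}f$ for the range operator, the task is to lower bound $\Delta=W[m(\cdot;\pi)]+W[m(\cdot;\pi_0)]-W\big[m(\cdot;\pi)-m(\cdot;\pi_0)\big]$. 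Subadditivity of $W$ already gives $\Delta\ge0$; the content of the theorem is that whatever uncertainty $m(\cdot;\pi)$ and $m(\cdot;\pi_0)$ share — which cancels in $\delta_m$ — is counted twice in the first two terms. So for each performance measure the plan is to write $m(\cdot;\pi)$ and $m(\cdot;\pi_0)$ in the partially identified coordinates $\mathbf v_0=\{v_y(t,0)\}$, isolate the common part, and carry out the resulting low-dimensional optimization. Two facts from \S\ref{sec:preliminaries} organize the cases: the constraints $\sum_y v_y(t,0)=p(T=t,D=0)$ render $\psi_0(\pi),\psi_0(\pi_0)$ and the denominator of $m_{a=1}$ point-identified, and the numerators $p(Y(1)=a,D^{\pi_0}=a)$ of $m_a(\pi_0)$ are sums of $\mathbf v_1$-terms.

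\textbf{The three elementary cases.} For $m_{a=1}$, both the numerator $p(Y(1)=1,D^{\pi_0}=1)$ and the denominator $p(D^{\pi_0}=1)$ of $m_{a=1}(\pi_0)$ lie in $\mathbf v_1$, so $m_{a=1}(\pi_0)$ is constant on $\mathcal V$; then $W[m_{a=1}(\pi_0)]=0$ and $\delta_{m_{a=1}}$ equals $m_{a=1}(\pi)$ up to a constant, so $W[\delta_{m_{a=1}}]=W[m_{a=1}(\pi)]$ and $\Delta(m_{a=1})=0$. For $m_u$, which is affine in $\mathbf v_0$, one checks that $v_0(0,0)$ (coefficient $u_{00}$) and $v_1(0,0)$ (coefficient $u_{01}$) enter $m_u(\pi)$ and $m_u(\pi_0)$ identically, so the common part $h=u_{00}v_0(0,0)+u_{01}v_1(0,0)$ cancels in $\delta_u$, while the remaining coordinates $v_0(1,0),v_1(1,0)$ enter the two policies with different coefficients; since $h$ depends on coordinates disjoint from the residuals $r_\pi,r_{\pi_0}$, we get $W[m_u(\pi)]=W[h]+W[r_\pi]$ and $W[m_u(\pi_0)]=W[h]+W[r_{\pi_0}]$, whereas $W[\delta_u]=W[r_\pi-r_{\pi_0}]\le W[r_\pi]+W[r_{\pi_0}]$, giving $\Delta(m_u)\ge2W[h]=2\alpha(u_{00}+u_{01})$. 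For $m_{a=0}=\big(v_0(0,0)+v_0(0,1)\big)/\psi_0(\pi)$ versus $\big(v_0(0,0)+v_0(1,0)\big)/\psi_0(\pi_0)$, the shared coordinate $v_0(0,0)$ appears with the two \emph{known} weights $1/\psi_0(\pi)$ and $1/\psi_0(\pi_0)$; evaluating the three widths directly and using $x+y-|x-y|=2\min(x,y)$ gives $\Delta(m_{a=0})=2\alpha\min\{1/\psi_0(\pi),1/\psi_0(\pi_0)\}=2\alpha/\max\{\psi_0(\pi),\psi_0(\pi_0)\}$, with equality.

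\textbf{The ratio case $m_y$ — the main obstacle.} Here $m_y(\pi)=\big(v_y(1,0)+v_y(1,1)\big)/p(Y(1)=y)$ and $m_y(\pi_0)=\big(v_y(0,1)+v_y(1,1)\big)/p(Y(1)=y)$, and the obstacle is that the partially identified coordinate $v_y(0,0)$ lives in the \emph{shared denominator} $p(Y(1)=y)$ of both policies, so it does not cancel additively in $\delta_{m_y}$ and $W$ no longer splits over coordinates. My plan is to restrict to the two free coordinates $(a,b)=(v_y(0,0),v_y(1,0))$ (no other component of $\mathbf v_0$ affects $m_y$), observe that $m_y(\pi)$, $m_y(\pi_0)$ and $\delta_{m_y}$ are each monotone in $a$ (decreasing) and in $b$, so all their extrema sit at vertices of the $(a,b)$-box, and then split on the sign of $\partial\delta_{m_y}/\partial a=-\big(v_y(1,0)-v_y(0,1)\big)/p(Y(1)=y)^2$ — equivalently, on whether $b<v_y(0,1)$ or $b>v_y(0,1)$ — which yields at most three cases. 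In each case the vertex algebra collapses: the $m_y(\pi)$-terms or the $m_y(\pi_0)$-terms cancel against $W[\delta_{m_y}]$, leaving $\Delta(m_y)$ equal to a sum of two one-dimensional $a$-variations of the form $(\text{numerator})\cdot\alpha/(\underline G\,\overline G)$, where $\underline G\le\overline G\le\overline\gamma_y$ because $p(Y(1)=y)=\sum_{a,a'}v_y(a,a')\le\sum_{a,a'}\overline v_y(a,a')=\overline\gamma_y$ and each numerator is at least $v_y(1,1)$; hence $\Delta(m_y)\ge2v_y(1,1)\alpha/(\overline\gamma_y)^2$. The genuinely fiddly part is this vertex/sign bookkeeping: one must verify in every case that the extra $a$-variation really survives after subtracting $W[\delta_{m_y}]$ — intuitively, because $m_y(\pi)$ and $m_y(\pi_0)$ move the same way in $a$, $\delta_{m_y}$ picks up the $a$-variation at only one extreme of $b$ whereas $W[m_y(\pi)]+W[m_y(\pi_0)]$ picks it up at both — and the coarse denominator bound $\overline\gamma_y$ is what absorbs the remaining slack and keeps the inequality clean.
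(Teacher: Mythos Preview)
Your proposal is correct and follows essentially the same route as the paper: both arguments plug in the closed-form baseline and $\delta$-regret endpoints (derived in the preceding lemmas/propositions for each metric), subtract, and simplify via the same case splits---your sign split on $b-v_y(0,1)$ for $m_y$ is exactly the paper's $\overline v_y(1,0)\gtrless v_y(0,1)$ split, and your ``common part $h$'' cancellation for $m_u$ is precisely the term the paper isolates. The only organizational difference is that the paper bounds the one-sided gap $\overline\Delta=\overline R-\overline R_\delta$ and then sets $\Delta=2\overline\Delta$ by symmetry, whereas you work directly with the width identity $\Delta=W[m(\cdot;\pi)]+W[m(\cdot;\pi_0)]-W[\delta_m]$; these are algebraically equivalent and lead to the same vertex computations and the same final bounds.
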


Observe that the improvement offered by the $\delta$-regret interval is proportional to the magnitude of uncertainty in the cancellation term $\alpha = \overline{v}_{0}(0, 0) - \underline{v}_{0}(0, 0)$. Therefore, our approach offers the largest benefit under a high degree of uncertainty around partially identified $v$-statistics.

Because regret measures inherit a monotonic dependence on $v$-statistics, we can compute closed form analytic bounds by maximizing and minimizing regret measures over uncertainty intervals around partially identified terms $[\underline{v}_y(t,0), \overline{v}_y(t,0)]$. We provide $\delta$-regret and baseline regret intervals for policy performance measures in Appendix \ref{appendix:asymptotic_regret_identification}.

\section{Mapping Causal Assumptions to Informative Regret Bounds}\label{sec:pid_framework}

The regret interval we recover without imposing additional causal assumptions on the data generating policy tends to be uninformative. Therefore, we now introduce a \textit{pointwise bounding functions} assumption which we use to tighten regret intervals by shrinking the uncertainty set around partially-identified $v$-statistics. We show that this assumption is implied by a range of causal assumptions studied in prior OPE literature (e.g., MSM, Rosenbaum’s $\Gamma$, IV) in Appendix \ref{appendix:assumption_extensions}. Thus, our $\delta$-regret interval yields tighter uncertainty quantification than the baseline interval for any causal assumption implying pointwise bounding functions (Figure \ref{fig:assumption_flow}).

\begin{assumption}[Pointwise bounding functions]\label{assumption:pbf}

Let $\underline{\tau}, \overline{\tau}: \mathcal{X} \rightarrow \left[ 0, 1 \right]$ be a pair of bounding functions satisfying 
$$
\underline{\tau}(x) \leq \bE[ Y(1) \mid D=0, X=x ] \leq \overline{\tau}(x), \;\; \forall x \in X. 
 $$   
\end{assumption}
Intuitively, this assumption requires that each individual screened out under the status quo policy has risk which is bounded by [$\underline{\tau}(x), \overline{\tau}(x)$]. \citet{rambachan2022counterfactual} show that a variant of this assumption is implied by the IV framework, the MSM, and Rosenbaum's $\Gamma$ sensitivity model.\footnote{Our pointwise bounding function assumption is also related to the notion of ``bounded outcomes'' studied by \citet{manski1990nonparametric} and ``natural outcomes'' studied by \citet{pearl2009causality}.} Next, we show that bounding functions can be used to construct intervals on partially identified $v$-statistics.

\begin{figure}
    \centering
    \includegraphics[width=\linewidth]{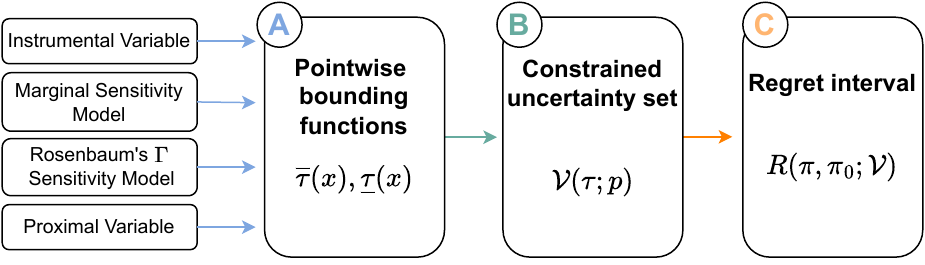}
    \caption{Flow of assumptions in our framework. (A) Traditional causal assumptions imply \textit{pointwise bounding functions} on the unobserved outcome (Appendix \ref{appendix:assumption_extensions}); (B) Pointwise bounding functions imply constrained uncertainty sets (Lemma \ref{lemma:assumption_mapping}); (C) Constrained uncertainty sets imply policy regret bounds (Section \ref{sec:bound_identification}).}
    \label{fig:assumption_flow}
\end{figure}

\begin{lemma}[Assumption mapping]\label{lemma:assumption_mapping}

Let $\underline{\tau}, \overline{\tau}: \mathcal{X} \rightarrow \left[ 0, 1 \right]$ be a pair of bounding functions satisfying Assumption \ref{assumption:pbf}. Then partially identified $v$-statistics are bounded by
$$
\bE[\underline{\tau}(x) \cdot e_0(x) \cdot \pi_t(x)] \leq v_1(t,0) \leq \bE[\overline{\tau}(x) \cdot e_0(x) \cdot \pi_t(x)],
$$
where $e_d(x) = p(D=d \mid X=x)$ and $\pi_t(x) = p(T=t \mid X=x)$. 
\end{lemma}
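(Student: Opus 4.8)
The plan is to expand the partially identified $v$-statistic $v_1(t,0) = p(T^\pi=t, D^{\pi_0}=0, Y(1)=1)$ in terms of the covariates $X$ and the conditional expectation $\bE[Y(1)\mid D=0, X=x]$, and then apply the pointwise bounds from Assumption \ref{assumption:pbf} inside the expectation. First I would write, using the tower property and consistency,
\begin{align*}
v_1(t,0) &= \bE\big[ \bI\{T^\pi = t\} \cdot \bI\{D^{\pi_0}=0\} \cdot Y(1) \big] \\
&= \bE\Big[ \bE\big[ \bI\{T^\pi=t\}\cdot\bI\{D=0\}\cdot Y(1) \,\big|\, X \big] \Big].
\end{align*}
The key structural fact I need is that, conditional on $X$, the proposed policy's action $T^\pi$ is independent of $(D^{\pi_0}, Y(1))$: since $\pi$ acts only on $X$ (and possibly exogenous randomness, as noted in the footnote that $\pi$ is known), $T^\pi \CI (D, Y(1)) \mid X$. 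This lets me factor the inner conditional expectation into $\pi_t(x) \cdot \bE[\bI\{D=0\}\cdot Y(1)\mid X=x]$. Then I further condition on $D$ inside the second factor: $\bE[\bI\{D=0\}\cdot Y(1)\mid X=x] = p(D=0\mid X=x)\cdot \bE[Y(1)\mid D=0, X=x] = e_0(x)\cdot \bE[Y(1)\mid D=0,X=x]$. Combining, $v_1(t,0) = \bE[\pi_t(x)\cdot e_0(x)\cdot \bE[Y(1)\mid D=0,X=x]]$.

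With this identity in hand, the bounds follow immediately: Assumption \ref{assumption:pbf} gives $\underline{\tau}(x) \le \bE[Y(1)\mid D=0,X=x]\le \overline{\tau}(x)$ pointwise, and since $\pi_t(x)\ge 0$ and $e_0(x)\ge 0$, multiplying through and taking expectations (which preserves the inequalities by monotonicity of $\bE[\cdot]$) yields
$$
\bE[\underline{\tau}(x)\cdot e_0(x)\cdot \pi_t(x)] \le v_1(t,0) \le \bE[\overline{\tau}(x)\cdot e_0(x)\cdot \pi_t(x)],
$$
which is exactly the claimed conclusion. I should also confirm that $\pi_t(x) = p(T^\pi=t\mid X=x)$ is the right object — for a deterministic $\pi$ this is just $\bI\{\pi(x)=t\}$, and for a stochastic $\pi$ it is the assignment probability; either way the factorization above is what makes $\pi_t(x)$ appear.

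The main obstacle — really the only non-mechanical point — is justifying the conditional independence $T^\pi \CI (D^{\pi_0}, Y(1)) \mid X$ cleanly from the setup. The paper states $\pi:\mathcal{X}\to\mathcal{A}$ and that "$\pi$ is known," so $T^\pi$ is a (possibly randomized) function of $X$ alone; I would state this as the precise sense in which the proposed policy "assigns binary actions only via covariates," and note that any internal randomization in $\pi$ is taken independent of everything else. Once that is pinned down, the rest is the tower property and the sign of the nonnegative weights $e_0(x),\pi_t(x)$. One should also double-check the boundary conventions (e.g. that $e_0(x)>0$ is guaranteed by positivity, Assumption \ref{assumption:positivity}, so the conditional expectation $\bE[Y(1)\mid D=0,X=x]$ is well-defined $p$-a.e.), but this is routine. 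I would present the argument in three short displayed lines for the identity, then one line to insert the bounds.
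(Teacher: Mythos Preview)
Your proposal is correct and takes essentially the same approach as the paper: both arguments hinge on the conditional independence $T^\pi \CI (D^{\pi_0}, Y(1)) \mid X$ (which the paper invokes exactly as you do) to express $v_1(t,0)$ in terms of $\pi_t(x)\,e_0(x)\,\bE[Y(1)\mid D=0,X=x]$ and then sandwich the inner conditional mean by $\underline{\tau},\overline{\tau}$. Your derivation of the exact identity before inserting the bounds is, if anything, a slightly cleaner ordering than the paper's, which starts from the pointwise bound and multiplies through; the paper additionally remarks on sharpness (equality when $\overline{\tau}=\mu_0$), which is not part of the lemma statement.
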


Lemma \ref{lemma:assumption_mapping} provides a convenient approach for bounding $\mathbf{v}^*_0$ because, given a pair of bounding functions, it only requires knowledge of \textit{nominal} (i.e., confounded) status quo policy probabilities $e_d(x)$ and the new policy $\pi_t(x)$. Nominal probabilities and bounding functions can be learned from observational data, while $\pi_t(x)$ is known in advance by model developers. 
Our goal is to tighten regret intervals by using bounding functions to shrink the uncertainty set around $\mathbf{v}^*_0$. Let $\mathcal{H}(v_1(t,0); \tau) = [\underline{v}_y(t,0), \overline{v}_1(t,0)]$ be the interval implied by Lemma \ref{lemma:assumption_mapping} and let $\rho_{td}=p(T=t, D=d)$. We define the constrained uncertainty set over $\mathbf{v}^*_0$ as 
\[
\mathcal{V}(p;\tau) = \left\{v_y(t,0) \colon
\begin{aligned}
& \quad v_1(0,0) \in \mathcal{H}(v_1(0,0);\tau) & \\
& \quad v_0(0,0) = \rho_{00} - v_1(0,0) & \\
& \quad v_1(1,0) \in \mathcal{H}(v_1(1,0);\tau) & \\
& \quad v_0(1,0) = \rho_{10} - v_1(1,0) &
\end{aligned}
\right\}.
\]

We only use intervals $\mathcal{H}(v_1(t,0);\tau)$ for this set definition because an analogous bound is implied over $v_0(t,0)$ by the constraint requiring that $\sum_y v_y(t,0) = \rho_{t0}$. We quantify the size of $\mathcal{V}(p;\tau)$ via the Lebesgue measure, which is defined as the cartesion product over partially identified intervals $\lambda(\mathcal{V}(p;\tau)) = |\overline{v}_1(1,0)-\underline{v}_1(1,0)||\overline{v}_1(0,0)-\underline{v}_1(0,0)|$. We now show that $\mathcal{V}(p;\tau)$ is the smallest uncertainty set one could construct based on the observational distribution and a pair of bounding functions.

\begin{theorem}[Minimality]\label{thm:minimality} $\mathcal{V}(p; \tau)$ is the minimal uncertainty set---that is, no strict subset of $\mathcal{V}(p; \tau)$ is consistent with $p(X, D, T, Y)$ and user-specified bounding functions.
\end{theorem}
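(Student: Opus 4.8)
\textbf{Proof plan for Theorem \ref{thm:minimality} (Minimality).}

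The plan is to show two things: (i) \emph{soundness}, that every $\mathbf{v}_0 \in \mathcal{V}(p;\tau)$ is in fact realizable by some joint distribution over $(X,U,D,T,Y(1),Y)$ that is consistent with the observed law $p(X,D,T,Y)$ and with Assumption \ref{assumption:pbf}; and (ii) \emph{necessity}, that every such consistent joint distribution induces a $\mathbf{v}_0$ lying in $\mathcal{V}(p;\tau)$. Together these give that $\mathcal{V}(p;\tau)$ is exactly the set of feasible values of $\mathbf{v}^*_0$, hence the smallest set that is guaranteed to contain $\mathbf{v}^*_0$: no strict subset can be consistent with the available information, since for each point we remove there is a genuine data-generating process attaining it. This is the standard ``sharpness'' argument for partial-identification regions.

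For the necessity direction I would argue as follows. Fix any joint distribution over $(X,U,D,T,Y(1),Y)$ whose $(X,D,T,Y)$-marginal equals $p$ and which satisfies Assumption \ref{assumption:pbf}. The marginal constraints $\sum_y v_y(t,0) = \rho_{t0}$ hold automatically by the law of total probability, so $v_0(t,0) = \rho_{t0} - v_1(t,0)$ for each $t$. It remains to show $v_1(t,0) \in \mathcal{H}(v_1(t,0);\tau)$. But by definition $v_1(t,0) = p(T=t, D=0, Y(1)=1) = \bE[\,\pi_t(X)\, e_0(X)\, \bE[Y(1)\mid D=0,X]\,]$ once we use that, conditional on $X$, the event $T=t$ depends only on $X$ (since $\pi$ acts on covariates alone) — more carefully, $v_1(t,0) = \bE[\,\bI\{T=t\}\,\bI\{D=0\}\,Y(1)\,] = \bE[\,\pi_t(X)\,p(D=0\mid X)\,\bE[Y(1)\mid D=0,X]\,]$, where the inner conditional expectation is taken over the appropriate conditioning including that $D=0$. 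Assumption \ref{assumption:pbf} bounds $\bE[Y(1)\mid D=0,X=x]$ pointwise between $\underline\tau(x)$ and $\overline\tau(x)$, so monotonicity of expectation gives the interval $[\bE[\underline\tau(x)e_0(x)\pi_t(x)],\ \bE[\overline\tau(x)e_0(x)\pi_t(x)]] = \mathcal{H}(v_1(t,0);\tau)$, which is precisely Lemma \ref{lemma:assumption_mapping}. Hence $\mathbf{v}_0 \in \mathcal{V}(p;\tau)$.

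For soundness (the direction that actually establishes minimality), I would explicitly construct, for an arbitrary target point $\tilde{\mathbf{v}}_0 \in \mathcal{V}(p;\tau)$, a data-generating process realizing it: keep $p(X,D,T,Y)$ fixed, and it suffices to specify a conditional law of $Y(1)$ given $(X,D=0)$ (on $D=1$ consistency pins $Y(1)=Y$). Since $\tilde v_1(0,0)$ and $\tilde v_1(1,0)$ are each attainable as $\bE[g_t(X)e_0(X)\pi_t(X)]$ for some measurable $g_t$ with $\underline\tau \le g_t \le \overline\tau$ (this is where I must be a little careful: the interval $\mathcal{H}$ is an interval of \emph{integrated} values, so I pick $g_t(x)$ by an intermediate-value / interpolation argument, e.g.\ $g_t = \lambda \overline\tau + (1-\lambda)\underline\tau$ for a suitable $\lambda \in [0,1]$, or more delicately if $T$ and $D$ both need to be matched with a single choice of $\bE[Y(1)\mid D=0,X=x]$ — note the two conditions share the function $\tau_0(x) := \bE[Y(1)\mid D=0,X=x]$, so I should choose one $g(x)=\tau_0(x)$ and verify both integrals land in their intervals, which holds because $\mathcal{H}(v_1(t,0);\tau)$ are obtained from the \emph{same} $g$ ranging over $[\underline\tau,\overline\tau]$ and the set $\mathcal{V}(p;\tau)$ was defined as the product of these intervals — so any pair in the product is jointly attainable by taking $g$ to be a pointwise convex combination with an $x$-dependent weight chosen to hit both targets, which exists by a continuity/IVT argument applied to $\lambda \mapsto (\bE[g_\lambda e_0 \pi_0], \bE[g_\lambda e_0 \pi_1])$ — actually since the two integrals scale the \emph{same} $g_\lambda$ by different nonnegative weights $\pi_t(x)e_0(x)$, hitting both simultaneously is the crux). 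Then set $\bP(Y(1)=1\mid X=x, D=0) = g(x)$; this yields a valid joint law satisfying Assumption \ref{assumption:pbf} and inducing exactly $\tilde{\mathbf{v}}_0$. Therefore $\tilde{\mathbf{v}}_0$ cannot be excluded from any valid uncertainty set, so $\mathcal{V}(p;\tau)$ has no consistent strict subset.

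\textbf{Main obstacle.} The delicate point is the soundness direction: because $v_1(0,0)$ and $v_1(1,0)$ are both functionals of the \emph{single} conditional mean function $\tau_0(x)=\bE[Y(1)\mid D=0,X=x]$, it is not immediately obvious that every point of the product set $\mathcal{V}(p;\tau)$ — which treats the two coordinates as freely varying in their respective intervals — is simultaneously achievable by one choice of $\tau_0$. The resolution is that the two coordinates are, in fact, \emph{decoupled} by $T$: conditional on $X$, the proposed policy's action $T$ is independent of everything given $X$ (it is a function of $X$, possibly with independent external randomization), so the contributions to $v_1(0,0)$ and $v_1(1,0)$ come from the \emph{same} region of $X$-space weighted by $\pi_0(x)$ vs.\ $\pi_1(x)$; matching both targets then reduces to showing that the map $\tau_0 \mapsto (\bE[\tau_0 e_0\pi_0], \bE[\tau_0 e_0\pi_1])$ from $\{\underline\tau \le \tau_0 \le \overline\tau\}$ onto $\mathbb{R}^2$ has image equal to the full rectangle $\mathcal{H}(v_1(0,0);\tau)\times\mathcal{H}(v_1(1,0);\tau)$. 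I would establish this by noting the image is convex (the domain is convex, the map affine) and contains all four corners (obtained by letting $\tau_0 = \overline\tau$ or $\underline\tau$ on the sets where $\pi_0 > 0$ versus where $\pi_1 > 0$, exploiting that $\pi$ is deterministic so $\{x : \pi_1(x) = 1\}$ and its complement partition $\mathcal{X}$), whence the image contains the rectangle; if $\pi$ is allowed to be stochastic one argues instead that $\pi_0(x) + \pi_1(x) = 1$ still makes the two weight functions complementary enough that the corners are attained. This is the step I would write out most carefully.
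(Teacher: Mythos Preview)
Your approach differs substantively from the paper's. You take the standard constructive sharpness route: exhibit, for each $\tilde{\mathbf{v}}_0 \in \mathcal{V}(p;\tau)$, an admissible law over $(X,U,D,T,Y(1),Y)$ that realizes it. The paper instead argues by contradiction on Lebesgue measure: it posits a set $\mathcal{V}^*$ with $\lambda(\mathcal{V}^*)<\lambda(\mathcal{V})$, infers $\mathcal{V}\not\subseteq\mathcal{V}^*$, restricts attention to sets parameterized by their own bounding functions $\tau^*$, and concludes that $\tau^*$ would have to be strictly tighter than $\tau$ at some $x$, which it treats as a violation. Your argument is more transparent and, in particular, correctly isolates the only subtle step---joint achievability of the pair $(v_1(0,0),v_1(1,0))$ from a \emph{single} conditional mean function $\tau_0(x)=\bE[Y(1)\mid D=0,X=x]$---which the paper's measure-based argument never confronts.

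One caution on that step: your partition argument for deterministic $\pi$ is correct (the sets $\{\pi_1(x)=1\}$ and $\{\pi_1(x)=0\}$ decouple the two integrals, so all four corners of the rectangle are hit and convexity fills it in). But the hedged extension to stochastic $\pi$ does not go through. If $0<\pi_1(x)<1$, $e_0(x)>0$, and $\underline\tau(x)<\overline\tau(x)$ on a positive-measure set, then the off-diagonal corner $(\bE[\underline\tau e_0\pi_0],\,\bE[\overline\tau e_0\pi_1])$ forces $\tau_0=\underline\tau$ and $\tau_0=\overline\tau$ simultaneously on that set, which is impossible. So for genuinely stochastic $\pi$ the achievable set is a proper convex subset of the product rectangle, and $\mathcal{V}(p;\tau)$ as defined is not sharp in the sense you are proving; your construction succeeds only after either restricting to deterministic $\pi$ (which covers the paper's motivating examples) or redefining $\mathcal{V}(p;\tau)$ as the image of $\tau_0\mapsto(\bE[\tau_0 e_0\pi_0],\,\bE[\tau_0 e_0\pi_1])$ over $\underline\tau\le\tau_0\le\overline\tau$.
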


\begin{algorithm}[tb]
   \caption{Plug-in regret bound estimator}
   \label{algorithm:plug-in}
\begin{algorithmic}
   \STATE {\bfseries Input:} Data $\mathcal{O} = \{(X_i, D^{\pi_0}_i, T^{\pi}_i, Y_i)\}_{i=1}^{n} \sim p$, Folds $K$
   \FOR{$k=1$ {\bfseries to} $K$}

   \STATE Learn $\hat{\eta}_{k} = (\hat{e}_{1,k}, \hat{\tau}_{k})$ using $\mathcal{O}_{-k}$

   \STATE Estimate $\mathcal{H}(\hat{v}_{1,k}(t,0); \hat{\tau})$ via e.q. (\ref{eq:plugin}) using $\mathcal{O}_{k}$

   \STATE Construct $\hat{\mathcal{V}}_k(p;\hat{\tau})$ 
   
   \STATE Compute $\underline{\hat{R}}_{\delta,k}(\pi, \pi_0; m, \hat{\mathcal{V}}_k), \hat{\overline{R}}_{\delta,k}(\pi, \pi_0; m, \hat{\mathcal{V}}_k)$
   
   \ENDFOR
   \STATE  Set $\underline{\hat{R}}_{\delta}(\pi, \pi_0; m, \hat{\mathcal{V}}) = \frac{1}{K} \sum_{k} \underline{\hat{R}}_{\delta,k}(\pi, \pi_0; m, \hat{\mathcal{V}}_k)$, \;
   \STATE Set $\hat{\overline{R}}_{\delta}(\pi, \pi_0; m, \hat{\mathcal{V}}) = \frac{1}{K} \sum_{k} \hat{\overline{R}}_{\delta,k}(\pi, \pi_0; m, \hat{\mathcal{V}}_k)$
\end{algorithmic}
\end{algorithm}

\section{Regret Bound Estimation}\label{sec:bound_estimation}

In this section, we construct finite sample estimates of regret intervals using observational data $\mathcal{O} = \{(X_i, D^{\pi_0}_i, T^{\pi}_i, Y_i)\}_{i=1}^{n} \sim p$ collected under the status quo policy. We estimate bounds by applying closed-form expressions for the $\delta$-regret interval over an estimate of the uncertainty set $\mathcal{\hat{V}}(p; \hat{\tau})$. We can directly compute finite-sample estimates $\hat{\mathbf{v}}_1$ for identified $v$-statistics by taking a sample average. As a result, the technical challenge underpinning regret bound estimation boils down to estimating the intervals $\mathcal{H}(\hat{v}_y(t,0); \hat{\tau}) = [\underline{\hat{v}}_y(t,0), \hat{\overline{v}}_y(t,0)]$ around $\mathbf{v}^*_0$. 

\subsection{Plug-in Estimator}

A direct approach for bound estimation involves learning $\hat{\eta} = (\hat{e}, \hat{\tau})$, then estimating $\mathcal{H}(\hat{v}_1(t,0); \hat{\tau})$ over a held-out sample by invoking Lemma \ref{lemma:assumption_mapping}.\footnote{In line with prior literature on CATE estimation \citep{kennedy2023towards}, we call this a plug-in approach because it involves directly substituting in empirical estimates for the nuisance functions.}\footnote{The term ``nuisance function'' refers to a learned function of the data which differs from our target quantity of interest.} We assume that $\pi(x)$ is known by model developers and thus does not need to be learned. We split the data into $K$ disjoint folds, where we denote $\mathcal{O}_k$, $\mathcal{O}_{-k}$, as the sample inside and outside of fold $k$, respectively. We then define the plug-in estimator over $\mathcal{O}_k$ as

\begin{equation}\label{eq:plugin}
   \hat{\overline{v}} = \frac{1}{|\mathcal{O}_k|}\sum_{x_i \in \mathcal{O}_k} \pi(x_i) \cdot \hat{e}(x_i) \cdot \hat{\overline{\tau}}(x_i) 
\end{equation}

where $\hat{\eta}$ is learned over fold $\mathcal{O}_{-k}$.\footnote{We suppress dependence on $y, t, d$ and present analysis of the upper bound to ease notation. An analogous argument holds for $\hat{\underline{v}}$.} We recover regret estimates at full data efficiency via a cross-fitting approach outlined in Algorithm \ref{algorithm:plug-in}. The following result shows consistency of $\hat{\overline{v}}$ and characterizes its convergence rate to the true upper bound $\overline{v}$, where we define ${\norm{f}^2 := \int (f(x))^2 dP(x)}$ to be the squared $L_2(O)$ norm of a function $f$.

\begin{theorem}\label{thm:pl_convergence} 
Let $f(x)=e(x) \cdot \overline{\tau}(x)$ and assume $|| \hat{f} - f  || = o_p(1)$. Then the plug-in estimator satisfies
$$
\hat{\overline{v}} - \overline{v} = O_{P}\left(\| e(x) - \hat{e}(x) \| + \| \overline{\tau}(x) - \hat{\overline{\tau}}(x) \|\right) + O_{P}\left(\frac{1}{\sqrt{n}}\right).
$$
\end{theorem}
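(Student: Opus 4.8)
The plan is to decompose $\hat{\overline v} - \overline v$ into a bias term driven by nuisance estimation error and two stochastic terms controlled by the central limit theorem and by sample splitting. Write $g(x) = \pi(x)\,e(x)\,\overline\tau(x)$ for the true integrand and $\hat g_k(x) = \pi(x)\,\hat e_k(x)\,\hat{\overline\tau}_k(x)$ for its fold-$k$ estimate, so that $\overline v = \bE[g(X)]$ by Lemma~\ref{lemma:assumption_mapping} and $\hat{\overline v}_k = |\mathcal{O}_k|^{-1}\sum_{i\in\mathcal{O}_k}\hat g_k(x_i)$. Adding and subtracting $|\mathcal{O}_k|^{-1}\sum_{i\in\mathcal{O}_k} g(x_i)$ and the conditional mean $\bE[\hat g_k(X)\mid\mathcal{O}_{-k}]$ produces the standard three-term split
\begin{align*}
\hat{\overline v}_k - \overline v
&= \Bigl(\tfrac{1}{|\mathcal{O}_k|}\textstyle\sum_{i\in\mathcal{O}_k} g(x_i) - \bE[g(X)]\Bigr)
 + \Bigl(\tfrac{1}{|\mathcal{O}_k|}\textstyle\sum_{i\in\mathcal{O}_k}(\hat g_k(x_i) - g(x_i)) - \bE[\hat g_k(X) - g(X)\mid\mathcal{O}_{-k}]\Bigr) \\
&\qquad + \bE[\hat g_k(X) - g(X)\mid\mathcal{O}_{-k}].
\end{align*}
I would then bound each piece in turn, and finish by noting that averaging over the $K=O(1)$ folds of Algorithm~\ref{algorithm:plug-in} preserves whatever rate is obtained on a single fold.

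For the first term, since $\pi, e, \overline\tau$ are all $[0,1]$-valued, $g(X)$ is a bounded random variable, so this is a centered i.i.d.\ average and the CLT (or Chebyshev) gives $O_P(|\mathcal{O}_k|^{-1/2}) = O_P(n^{-1/2})$. For the third (bias) term I would use the algebraic identity $\hat e_k\hat{\overline\tau}_k - e\overline\tau = \hat e_k(\hat{\overline\tau}_k - \overline\tau) + \overline\tau(\hat e_k - e)$, take absolute values inside the conditional expectation, use that $\pi,\overline\tau$ and (WLOG, after truncation to $[0,1]$) $\hat e_k$ are bounded by $1$, and apply Jensen/Cauchy--Schwarz ($\bE|Z|\le\norm{Z}$) to conclude $\bigl|\bE[\hat g_k(X)-g(X)\mid\mathcal{O}_{-k}]\bigr| \le \norm{\hat{\overline\tau}_k - \overline\tau} + \norm{\hat e_k - e}$, i.e.\ this term is $O_P(\norm{e - \hat e} + \norm{\overline\tau - \hat{\overline\tau}})$ — a first-order (``slow'') dependence, as expected for a non-orthogonalized plug-in.

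The main obstacle, and the step where sample splitting earns its keep, is the second (empirical-process) term. Without cross-fitting one would need a Donsker-type entropy condition on the classes containing $\hat e_k$ and $\hat{\overline\tau}_k$; instead I would condition on $\mathcal{O}_{-k}$, under which $\hat g_k$ is a \emph{fixed} function and $\mathcal{O}_k$ is an independent i.i.d.\ sample, so the term is a centered average of i.i.d.\ variables whose conditional variance is at most $|\mathcal{O}_k|^{-1}\bE[(\hat g_k(X)-g(X))^2\mid\mathcal{O}_{-k}] = |\mathcal{O}_k|^{-1}\norm{\hat g_k - g}^2 \le |\mathcal{O}_k|^{-1}\norm{\hat f_k - f}^2$. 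Since $\norm{\hat f_k - f} = o_p(1)$ by hypothesis, a conditional Chebyshev bound together with the routine ``conditional-to-unconditional'' stochastic-order lemma (if $\bE[Z_n^2\mid\mathcal{F}_n]=o_P(r_n^2)$ then $Z_n=o_P(r_n)$) shows this term is $o_P(n^{-1/2})$, and is therefore absorbed into the $O_P(n^{-1/2})$ remainder. Collecting the three bounds and averaging over folds yields exactly the claimed rate; the only other bit of care is the innocuous truncation of the plug-in nuisances to $[0,1]$, justified because the estimands $e$ and $\overline\tau$ are themselves $[0,1]$-valued.
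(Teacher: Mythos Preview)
Your proposal is correct and follows essentially the same route as the paper: the same three-term decomposition $(P_n-P)g + (P_n-P)(\hat g - g) + P(\hat g - g)$, the same CLT/sample-splitting/algebraic-identity handling of each piece, and the same final Cauchy--Schwarz bound on the bias. You are slightly more explicit than the paper (carrying the known $\pi$ factor through, spelling out the conditional Chebyshev argument for the empirical-process term, and noting the truncation), but there is no substantive difference.
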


Because policy performance measures are linear combinations ($m^*_u$) or ratios ($m^*_a$, $m^*_y$) of $v$-statistics, this result also characterizes the convergence rate of the estimated upper and lower regret bounds. Observe that the convergence rate of the plug-in estimator inherits the rate of the nuisance functions. As a result, $\hat{v}$ will tend to converge slowly when using flexible machine learning methods to fit nuisance functions under no parametric assumptions. 

\begin{figure*}
    \centering
    \includegraphics[width=.9\linewidth]{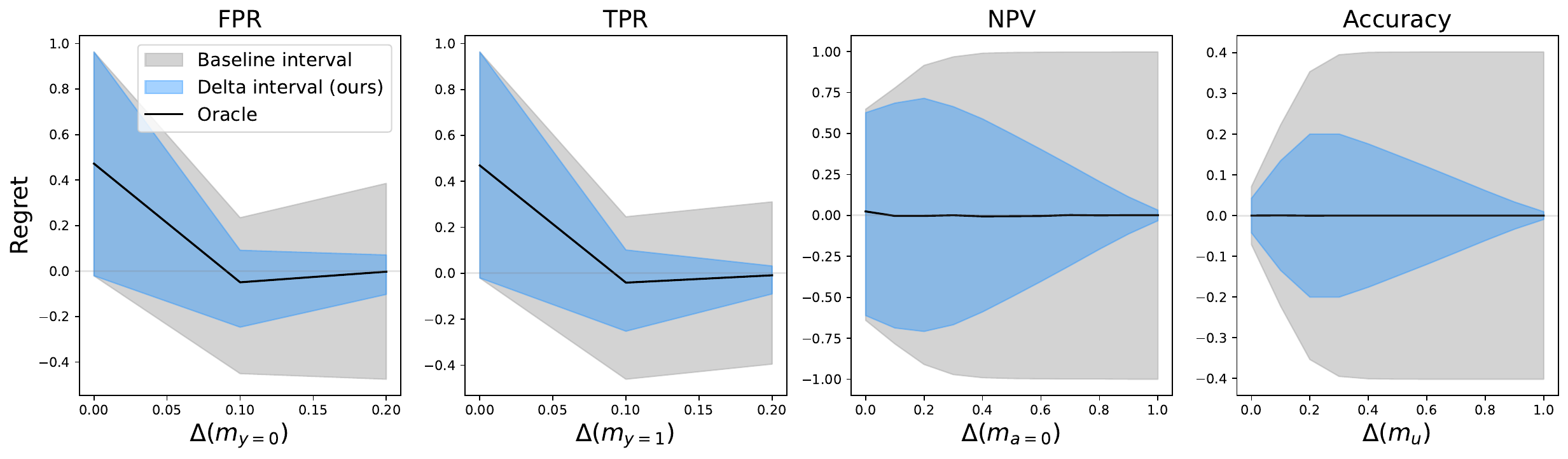}
    \caption{Improvement in bounds offered by the $\delta$-regret interval over the baseline interval. We systematically vary the relative size of $v$-statistics and plot bounds as a function of interval improvement $\Delta(m) = I(m) - I_{\delta}(m)$ characterized by Theorem \ref{thm:delta_seperation}.}
    \label{fig:bound-improvement}
\end{figure*}

\begin{figure*}[ht]
    \centering
    \begin{minipage}{0.48\textwidth}
        \centering
        \includegraphics[width=\linewidth, trim=50 10 100 60, clip]{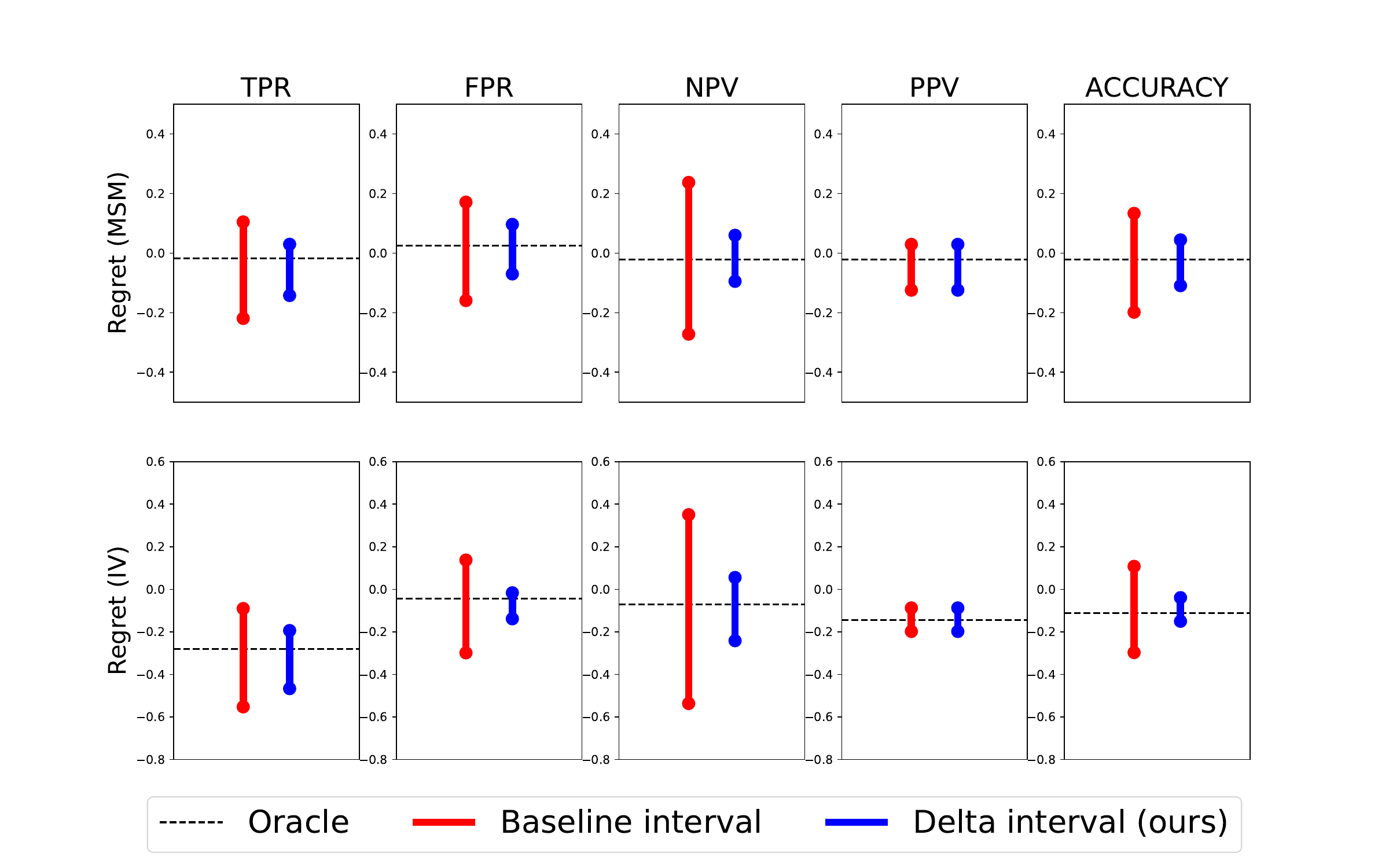}
        \caption{Comparison between $\delta$-regret and baseline regret interval end-points averaged over $N=20$ trials and $N_s=20,000$ samples. First row leverages an MSM identification assumption, while the second leverages an IV assumption.}
        \label{fig:delta_standard_comparison}
        \vspace{-4mm}
    \end{minipage}\hfill
    \begin{minipage}{0.48\textwidth}
        \centering
        \includegraphics[width=.9\linewidth]{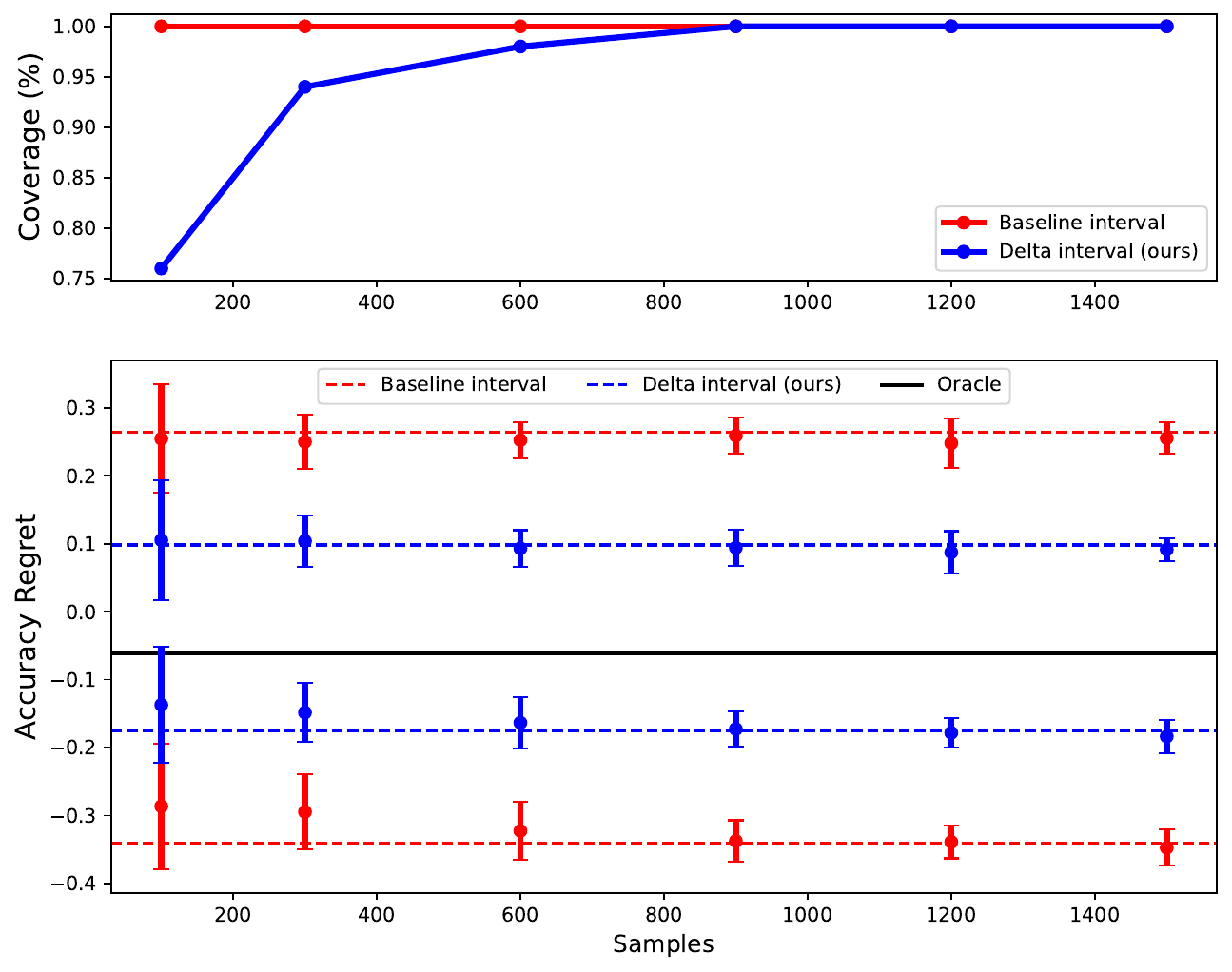}
        \caption{Top: Coverage of accuracy regret interval estimates as a function of total sample size. Bottom: 95\% bootstrap confidence intervals around upper and lower regret bounds over $N=25$ trials. Solid line indicates the oracle regret.}
        \label{fig:pl_estimation}
    \end{minipage}
\end{figure*}

\subsection{Doubly Robust Estimator}

Therefore, we propose a doubly robust approach for estimating $\hat{v}$ that corrects the bias in the plug-in estimator. Because doubly robust estimators  have an error term which is a product of nuisance function errors, they attain fast convergence rates, even when estimating nuisance functions at slow non-parametric rates \citep{kennedy2021semiparametric}. For example, to attain $n^{-1/2}$ rates for the estimator, it is sufficient to estimate nuisance functions at $n^{-1/4}$, a rate achieved by many non-parametric machine learning techniques. The form of the doubly robust estimator depends on the pointwise bounding functions implied by a causal assumption.  

We provide the doubly robust estimator for the commonly studied MSM. The MSM assumes there is some $\Lambda \geq 1$ such that the odds ratio $\pi_0(X,U))/(1-\pi_0(X,U)) \cdot (1-e(X))/e(X)$ lies within $[\Lambda^{-1}, \Lambda]$ (See \ref{assumption:msm}).
%
We define the doubly robust estimator over $\mathcal{O}_k$ as 
\begin{equation}\label{eq:doublyrobust}
   \hat{\overline{v}}_{DR} = \frac{1}{|\mathcal{O}_k|} \sum_{x_i \in \mathcal{O}_k}  \phi(O_i; \hat e,  \pi, \hat \mu_1)
\end{equation}
where  $\phi(O ; e, \pi, \mu_1) =  D \cdot Y\pi(X) \cdot\Lambda + (T-\pi(X))\cdot e(X) \cdot \mu_1(X) \cdot \Lambda$ and $\mu_1(x) = \bE[Y(1) \mid D=1, X=x]$.

Algorithm \ref{algorithm:plug-in} can be leveraged for doubly-robust estimation of policy regret bounds by substituting e.q. \ref{eq:doublyrobust} for estimation of $\mathcal{H}(\hat{v}_{1,k}(t,0); \hat{\tau})$. We show in Appendix \ref{appendix:estimation_results} that this estimator has second order error in nuisance estimation error.

\begin{figure*}[ht]
  \centering
  \begin{minipage}[t]{0.45\linewidth}
    \includegraphics[width=1\linewidth]{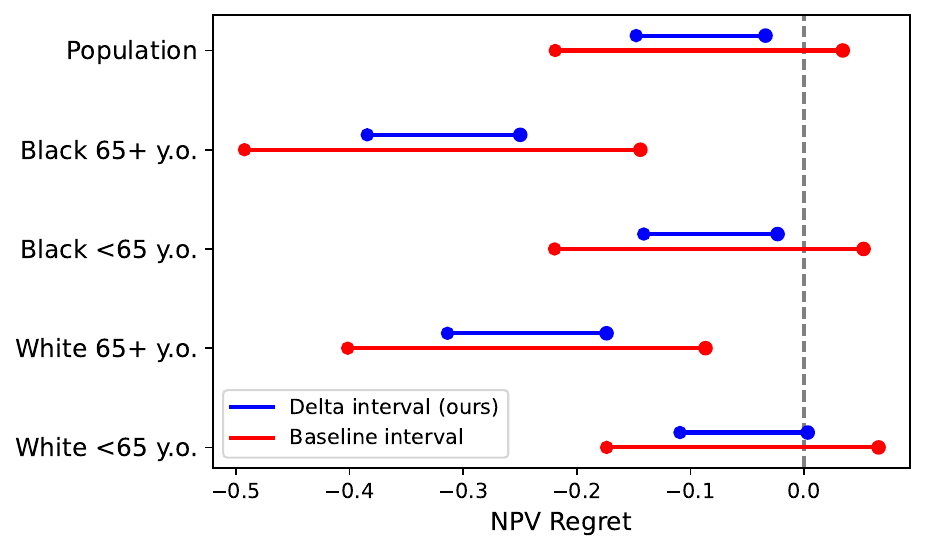}
    \caption{Comparison of the $\delta$-regret interval against the baseline for the NPV ($m_{a=0}$). The top row indicates full population bounds, while lower rows subpopulation bounds.}
    \label{fig:subgroup-plot}
  \end{minipage}
  \hspace{0.5cm} 
  \begin{minipage}[t]{0.45\linewidth}
    \includegraphics[width=1\linewidth]{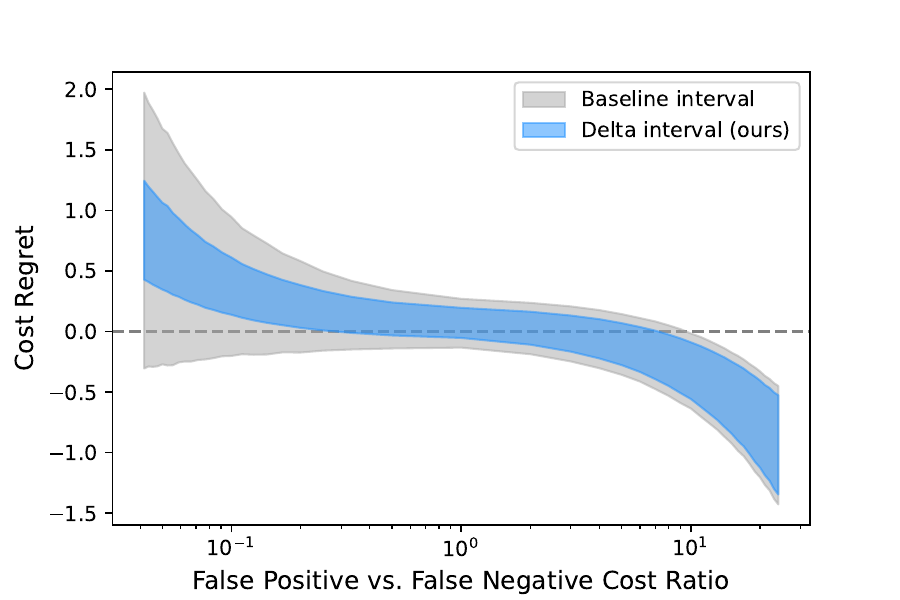}
    \caption{Expected cost regret as a function of false positive to false negative cost ratio. Left hand side ($10^{-1}$) indicates false negative cost ten times greater than false positive cost.}
    \label{fig:cost-regret}
  \end{minipage}
\end{figure*}

\section{Numerical Experiments}\label{sec:numeric_experiments}

We now show numerical experiments comparing decision policies under a known data-generating mechanism.

\textbf{Regret interval characterization.} We first characterize the uncertainty reduction offered by the $\delta$-regret interval. We simulate a range of $v$-statistic decompositions by randomly sampling $\mathbf{v}$ satisfying $\sum_{y,t,d} v_y(t,d) = 1$ and $v_0(t,d) + v_1(t,d) = \rho_{td}, \forall t, d$. This yields a collection of decompositions corresponding to valid observational distributions. We compute the analytic improvement in bounds defined in Theorem \ref{thm:delta_seperation} for each $\mathbf{v}$ and plot the results in Figure \ref{fig:bound-improvement}. We observe a monotone improvement in bounds as a function of $\Delta(m, \mathcal{V})$, with no improvement when $\Delta(m, \mathcal{V}) = 0$ and significant improvement when $\Delta(m, \mathcal{V}) = 0$ is large.

\textbf{Synthetic data experiment.} We next validate our framework by simulating data consistent with two common assumptions for off-policy evaluation: the MSM (\ref{assumption:msm}) and IV (\ref{assumption:instrument}). We draw $v$ covariates $X_i \sim \mathcal{N}(0, I_v)$, and $u$ confounders $U_i \sim \mathcal{N}(0, I_u)$, and let $V_i = (X_i, U_i)$.  We parameterize policy, outcome, and instrument probability functions with coefficients $W_{\pi_0} \in \mathbb{R}^{v+u}$, $W_{\pi} \in \mathbb{R}^{v}$, $W_{\mu_1} \in \mathbb{R}^{v+u}$, $W_{z} \in \mathbb{R}^{v \times z}$, respectively, with each drawn from a uniform distribution. We sample data from the probability functions 
\begin{align*}
\gamma(X_i) &:= \sigma'(X_i \times W_{z} ), \pi(X_i) := \sigma(X_i \times W_{\pi})\\
\pi_0(V_i, Z_i) &:= \sigma(V_i \times W_{\pi_0} + \beta_0 \cdot Z_i) \\
\mu_1(V_i, Z_i) &:= \sigma(V_i \times W_{\mu_1} + \beta_1 \cdot Z_i) \label{eq:dgp_iv_exclusion} \\
\mu_0(V_i, Z_i) &:= \begin{cases}
   \Lambda^* \cdot \mu_1(V_i), \Lambda^* \in U(\Lambda^{-1}, \Lambda) & \text { (MSM) } \\
   \sigma(V_i \times W_{\mu_0} + \beta_1 \cdot Z_i) & \; \text{ (IV) } \\
\end{cases} \\
\mu(V_i, Z_i) &:= \mu_1(V_i, Z_i)  \pi_1(V_i, Z_i) + \mu_0(V_i, Z_i)  \pi_0(V_i, Z_i)
\end{align*}
where $\sigma(x) = \frac{1}{1+e^{-x}}$ and $\sigma'(x)$ is the softmax function. We use Algorithm \ref{algorithm:plug-in} with the plug-in estimator to learn estimates estimates of the $\delta$-regret interval. We compare against the baseline regret interval by applying bounds provided in Appendix \ref{appendix:asymptotic_regret_identification} over the same $\hat{\mathcal{V}}(p, \hat{\tau})$ used to estimate the $\delta$-regret. See Appendix \ref{appendix:experiments} for additional setup details.

Figure \ref{fig:delta_standard_comparison} provides a comparison of the $\delta$-regret interval and baseline regret interval across five policy performance measures. In line with our theoretical results, we obtain tighter bounds from the $\delta$-regret across all policy performance measures apart from the positive predictive value (PPV). We show the oracle regret evaluated via $Y(1)$ in black. In Figure \ref{fig:pl_estimation}, we plot coverage of the estimated $\delta$-regret interval evaluated against the true population value, denoted by dashed lines. The bottom panel shows $95\%$ bootstrap confidence intervals around the true upper and lower $\delta$-regret and baseline regret intervals. These results show that estimates concentrate around the true regret interval as the number of samples increases. We observe that the baseline interval yields better coverage of the oracle regret in small sample settings because it is more conservative and thus more tolerant to bias around estimates of the asymptotic interval end points. Appendix \ref{appendix:experiments} contains additional experimental results stress-testing coverage under causal assumption violations.

\section{Real-World Application: Comparing Healthcare Enrollment Policies} \label{sec:realworld_experiments}

We now illustrate how our framework can be used to compare alternative healthcare enrollment policies. In medical settings, providers routinely screen patients for diseases and enroll high-risk individuals in preventative care programs. However, it is challenging to assess the performance improvement of a proposed policy because outcomes are only observed among patients enrolled under the existing policy \citep{daysal2022economic}. Confounding is a challenge in this setting because physicians often make decisions using unobserved information \citep{mullainathan2022diagnosing}. 

We leverage data released by \citet{obermeyer2019dissecting} to construct an enrollment policy comparison task. This dataset contains $\approx 48,000$ records, where each entry consists of a patient evaluated for enrollment in a high-risk care management program.\footnote{We use a synthetic version of the original dataset, which was released by \citet{obermeyer2019dissecting} to protect patient confidentiality. This dataset preserves the means and covariances of the original data.} We let $\pi_0$ be the historical enrollment policy consisting of physician decisions informed by an algorithmic risk score and interactions with the patient. We take $\pi$ to be the algorithmic policy which makes decisions by thresholding predictions of patient cost from the clinical decision support tool. The goal in this task is to assess whether the algorithm-only policy would improve upon the status quo human+algorithm policy used to collect data. We detail our setup in Appendix \ref{appendix:realworld_setup}.

\textbf{Results.} Because we do not have access to physician identifiers which can be used as an instrument, we leverage the MSM assumption for identification. We compare the baseline and $\delta$-regret intervals for the NPV in Figure \ref{fig:subgroup-plot}. The top row shows the population regret, while the bottom rows show a breakdown across subgroups.\footnote{Intervals include $95\%$ confidence intervals estimated around regret bound endpoints. We omit these from the plot for readability.} The $\delta$-regret interval excludes zero across all subpopulations apart from White patients under 65. As a result, an analysis conducted via the $\delta$-regret interval supports an interpretation that the algorithmic policy reduces the NPV in comparison to the status quo human+algorithm policy.\footnote{Given the narrow scope of our analyses and limitations of synthetic data, our findings are not intended to be a conclusive assessment of the policies evaluated by \citet{obermeyer2019dissecting}.} \textbf{Because the baseline regret interval contains zero among all patient populations except those aged 65 and over, this bounding approach supports weaker claims regarding the relative performance of decision policies.} 

In Figure \ref{fig:cost-regret}, we plot the $\delta$-regret and baseline intervals as a function of the ratio between false positive and false negative costs. The left hand region of this figure ($\approx 10^{-1}$) reflects a setting in which false negatives are ten times more costly than false positives. This regime is realistic in our healthcare enrollment scenario when turning a sick patient away from the program incurs more harm than enrolling a healthy patient. We observe the greatest improvement from our approach in the high cost false negative setting because the $v_0(0,0)$ term which cancels under our approach is heavily weighted in the expected cost calculation. Because the $\delta$-regret interval excludes zero in this regime, this supports an interpretation that the proposed policy has a higher expected cost than the status quo.

\section{Conclusion}

In this work, we propose the first framework supporting predictive performance comparisons of decision-making policies under confounding. Our approach is intended to support pre-deployment evaluations of proposed policies under a flexible set of causal assumptions. Our approach addresses sources of confounding-related uncertainty which impact model evaluations and, where possible, reduces this uncertainty via technically novel partially identification approaches. Our uncertainty cancellation approach may prove useful for more tightly characterizing performance differences under other uncertainty sources, such as missing protected attributes \citep{kallus2022assessing} and measurement error \citep{fogliato2020fairness}.

\section*{Impact Statement}

Our framework is designed to support pre-deployment evaluations of proposed decision policies. While our framework is intended to faithfully represent confounding-related sources of uncertainty impacting decision policy evaluations, it does not speak to broader measurement challenges \citep{guerdan2023ground} and ethical questions underpinning the introduction of an algorithmic system \citep{coston2023validity, rittel1973dilemmas}. Given the high-stakes contexts in which some policies are deployed (e.g., lending, healthcare, education), our framework should be applied carefully as part of a multifaceted impact assessment.

\section*{Acknowledgements}

\lgedit{We thank our anonymous reviewers, the attendees of the NeurIPS
2023 Workshop on Regulatable Machine Learning, and the attendees of Carnegie Mellon University's Fairness, Explainability, Accountability, and Transparency (FEAT) reading group for their helpful feedback. This work was supported by an award from the UL
Research Institutes through the Center for Advancing Safety of
Machine Intelligence (CASMI) at Northwestern University and the National Science Foundation Graduate Research Fellowship Program (Award No. DGE1745016).}

\bibliography{refs}
\bibliographystyle{icml2022}

\newpage
\appendix
\onecolumn
\newpage
\section{Asymptotic Regret Bounds}\label{appendix:asymptotic_regret_identification}

In this appendix, we derive $\delta$-regret and baseline regret intervals for policy performance measures. We prove Theorem \ref{thm:delta_seperation} in $\S$ \ref{subsec:seperation_proof}.

\subsubsection{$\delta$-regret bounds on utility regret.}

\begin{lemma}[$\delta_u$-regret bounds]\label{thm:udelta}

Let $m_u^*(\pi)$ be the expected utility of $\pi$ given utility values $u_{ay} \geq 0$. Let $a' = 1-a$, $\lambda_{ay} = u_{ay} - u_{a'y}$, and $\tilde{y} = \bI\{ \lambda_{11} > \lambda_{10} \}$. Then 
$$
\delta_u(\mathbf{v}_0, \mathbf{v}_1) = \sum_{ay} \lambda_{ay} \cdot v_y(a, a')
$$
and for all uncertainty sets $\mathcal{V}(p;\tau)$, the upper  $\delta_u$-regret bound is given by
$$
\overline{R}_{\delta}(\pi, \pi_0; m_u, \mathcal{V}) = \delta_u(\overline{\mathbf{v}}_0, \mathbf{v}_1), \text{ where } \overline{\mathbf{v}}_0 \in \argmax_{v_{\tilde{y}}(1,0)} \mathcal{V}(p; \tau). 
$$

\end{lemma}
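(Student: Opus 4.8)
The plan is to establish the two assertions of the lemma in turn: the exact $v$-statistic decomposition of $\delta_u$, and then the location of its maximizer over $\mathcal{V}(p;\tau)$. For the decomposition: since the action taken under $\pi$ is $T^{\pi}$ and the action under $\pi_0$ is $D^{\pi_0}$, marginalizing the joint $v_y(t,d)=p(T^{\pi}=t,D^{\pi_0}=d,Y(1)=y)$ over the other policy's action gives $m_u(\pi)=\sum_{a,y}u_{ay}\sum_d v_y(a,d)$ and $m_u(\pi_0)=\sum_{a,y}u_{ay}\sum_t v_y(t,a)$. Subtracting and collecting, for each fixed $y$, the four terms over the binary action space, the ``diagonal'' contributions $v_y(0,0)$ and $v_y(1,1)$ cancel, leaving $(u_{1y}-u_{0y})\bigl(v_y(1,0)-v_y(0,1)\bigr)$; using $\lambda_{ay}=u_{ay}-u_{a'y}$ and $\lambda_{1y}=-\lambda_{0y}$ this equals $\lambda_{0y}v_y(0,1)+\lambda_{1y}v_y(1,0)$, and summing over $y$ yields $\delta_u(\mathbf{v}_0,\mathbf{v}_1)=\sum_{ay}\lambda_{ay}v_y(a,a')$. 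This step is pure bookkeeping.

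For the maximizer, I isolate the dependence of $\delta_u$ on the partially identified coordinates. In $\sum_{ay}\lambda_{ay}v_y(a,a')$ the $a=0$ terms involve $v_y(0,1)\in\mathbf{v}_1$, which are identified by Assumption \ref{assumption:consistency}, and $v_y(0,0)$ does not appear at all --- this is exactly the uncertainty cancellation underlying the $\delta$-interval. Hence the only unidentified quantities entering $\delta_u$ are $v_0(1,0)$ and $v_1(1,0)$, and the defining constraint of $\mathcal{V}(p;\tau)$ forces $v_0(1,0)=\rho_{10}-v_1(1,0)$. Substituting, $\delta_u = C + (\lambda_{11}-\lambda_{10})\,v_1(1,0)$ for a constant $C$ independent of $\mathbf{v}_0$, so $\delta_u$ is affine in the single free parameter $v_1(1,0)\in[\underline{v}_1(1,0),\overline{v}_1(1,0)]$. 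Its maximum is thus attained at $v_1(1,0)=\overline{v}_1(1,0)$ when $\lambda_{11}>\lambda_{10}$ and at $v_1(1,0)=\underline{v}_1(1,0)$ --- equivalently $v_0(1,0)=\overline{v}_0(1,0)$, since $v_0(1,0)$ is decreasing in $v_1(1,0)$ --- when $\lambda_{11}<\lambda_{10}$ (any feasible value being optimal if $\lambda_{11}=\lambda_{10}$). In all cases this is precisely the element of $\mathcal{V}(p;\tau)$ that maximizes $v_{\tilde y}(1,0)$ with $\tilde y=\bI\{\lambda_{11}>\lambda_{10}\}$, giving $\overline{R}_{\delta}(\pi,\pi_0;m_u,\mathcal{V})=\delta_u(\overline{\mathbf{v}}_0,\mathbf{v}_1)$.

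The main obstacle is not conceptual: the work lies in the careful marginalization of the first step and in correctly matching the $\argmax_{v_{\tilde y}(1,0)}$ notation to the right interval endpoint --- in particular using that $v_0(1,0)=\rho_{10}-v_1(1,0)$ decreases in $v_1(1,0)$, so ``maximize $v_0(1,0)$'' translates to ``take $v_1(1,0)=\underline{v}_1(1,0)$''. It is also worth remarking that the claim holds for every such $\mathcal{V}(p;\tau)$ because the sum constraint $v_0(1,0)+v_1(1,0)=\rho_{10}$ is common to the whole family of constrained uncertainty sets.
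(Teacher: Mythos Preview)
Your proposal is correct and follows essentially the same approach as the paper: both derive the decomposition by marginalizing $v_y(t,d)$ and cancelling the diagonal terms, then obtain the maximizer by a case split on the sign of $\lambda_{11}-\lambda_{10}$ combined with the constraint $v_0(1,0)+v_1(1,0)=\rho_{10}$. Your explicit reduction to the one-variable affine problem $\delta_u=C+(\lambda_{11}-\lambda_{10})v_1(1,0)$ is a slightly cleaner presentation of the same optimization the paper carries out by picking endpoints for $v_1(1,0)$ and $v_0(1,0)$ separately in each case.
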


\begin{proof}
\begin{align*}
m^*_u(\pi) &= \bE\left[ \sum_{t, y} u_{ty} \cdot \bI \{ T^{\pi} = t, Y(1) = y \} \right] \\
           &= \sum_{t,y} u_{ty} \cdot \bE[\bI \{T=t, Y(1)=y \}] \\
           &= \sum_{t,y} u_{ty} \cdot p(T=t, Y(1)=y) \\
           &= \sum_{t,y} u_{ty} \cdot (v_y(t,0) + v_y(t,1)). \\
\end{align*}
By the same argument, $m^*_u(\pi_0) = \sum_{d,y} u_{dy} \cdot (v_y(0,d) + v_y(1,d))$. Let $\lambda_{ay} = u_{ay} - u_{a'y}$ and $\tilde{y} = \bI\{ \lambda_{11} > \lambda_{10} \}$. Therefore
\begin{align*}
\delta_u(\mathbf{v}^*_0, \mathbf{v}_1) &= m^*_u(\pi) - m^*_u(\pi_0) \\
&= \sum_{t,y} u_{ty} \cdot (v_y(t,0) + v_y(t,1)) - \sum_{d,y} u_{dy} \cdot (v_y(0,d) + v_y(1,d)) \\
&= \sum_{a,y} (u_{ay} - u_{a'y}) \cdot v_y(a, a') \\
&= \sum_{a,y} \lambda_{ay} \cdot v_y(a, a')
\end{align*}

Recall that $\rho_{td} = v_1(t,0) + v_0(t,0)$ by the constraint on $p(\cdot)$. Therefore, the regret is bounded within the interval 
\begin{align*}
R^*(\pi, \pi_0; m_y) &\leq \begin{cases}
\sum_{y} \lambda_{0y} \cdot v_y(0,1) + \lambda_{11} \cdot \overline{v}_1(1,0) + \lambda_{10} \cdot \underline{v}_{0}(1,0) & \lambda_{11} > \lambda_{10} \\
\sum_{y} \lambda_{0y} \cdot v_y(0,1) + \lambda_{11} \cdot \underline{v}_1(1,0) + \lambda_{10} \cdot \overline{v}_{0}(1,0) & \lambda_{11} \leq \lambda_{10} \\
\end{cases}\\
 &= \sum_y \lambda_{0y} \cdot v_y(0,1) + \lambda_{1\tilde{y}} \cdot \overline{v}_{\tilde{y}}(1,0) + \lambda_{1,1-\tilde{y}} \cdot \overline{v}_{1-\tilde{y}}(1,0), \text{ where } \tilde{y} = \bI\{ \lambda_{11} > \lambda_{10} \}
\end{align*}

The lower regret bound is symmetric.

\end{proof}

\newpage
\subsubsection{$\delta$-regret bounds on positive and negative class performance}

\begin{lemma}[$\delta_y$ regret bounds]\label{thm:ydelta}

Suppose that $p(Y(1) = y) > 0$. Then $\overline{R}_{\delta}(\pi, \pi_0; m_y, \mathcal{V}) = \delta_y(\overline{\mathbf{v}}_0, \mathbf{v}_1)$, where 
$$
\delta_y(\mathbf{v}_0, \mathbf{v}_1) = \frac{v_{y}(1,0) -  v_{y}(0, 1)}{ v_{y}(0,0) + v_{y}(1,0) + v_{y}(0,1) + v_{y}(1,1)},
$$
$$
\overline{\mathbf{v}}_0 =
\begin{cases}
   \arg \min\limits_{{v_{y}(0,0)}} \max\limits_{{v_y}(1,0)} \mathcal{V}(p; \tau), \;  \overline{v}_y(1,0) - v_y(0, 1) \geq 0 \\
   \arg \max\limits_{v_{y}(0,0)} \max\limits_{{v_y}(1,0)} \mathcal{V}(p; \tau), \; \underline{v}_y(1,0) - v_y(0,1) < 0\\
\end{cases}
$$
\end{lemma}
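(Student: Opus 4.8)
The plan is to first establish the closed-form $v$-statistic decomposition of $\delta_y(\mathbf{v}_0, \mathbf{v}_1)$, then analyze how this expression depends monotonically on the partially identified terms in $\mathbf{v}_0$, and finally identify which corner of the constrained uncertainty set $\mathcal{V}(p;\tau)$ achieves the maximum. First I would write $m^*_y(\pi) = p(T^\pi = 1 \mid Y(1) = y) = p(T^\pi = 1, Y(1)=y)/p(Y(1)=y)$. The numerator is $v_y(1,0) + v_y(1,1)$ (summing over the status-quo action $d$), and the denominator $p(Y(1)=y) = \sum_{t,d} v_y(t,d)$. The same holds for $\pi_0$ with $D^{\pi_0}$ in the first slot, giving numerator $v_y(0,1)+v_y(1,1)$ over the same denominator. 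Taking the difference, the $v_y(1,1)$ terms cancel in the numerator and the denominator is common, yielding $\delta_y(\mathbf{v}_0, \mathbf{v}_1) = \big(v_y(1,0) - v_y(0,1)\big)\big/\big(v_y(0,0) + v_y(1,0) + v_y(0,1) + v_y(1,1)\big)$, exactly as stated. Here $v_y(0,1)$ and $v_y(1,1)$ lie in $\mathbf{v}_1$ (identified), while $v_y(0,0)$ and $v_y(1,0)$ are the partially identified quantities we optimize over.

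Next I would analyze monotonicity in the two free variables $v_y(0,0)$ and $v_y(1,0)$. Writing $N = v_y(1,0) - c_1$ with $c_1 = v_y(0,1)$ fixed, and $M = v_y(0,0) + v_y(1,0) + c_2$ with $c_2 = v_y(0,1)+v_y(1,1)$ fixed and positive, we have $\delta_y = N/M$. The partial derivative with respect to $v_y(0,0)$ (which appears only in $M$) has the sign of $-N = -(v_y(1,0)-v_y(0,1))$: increasing $v_y(0,0)$ decreases $\delta_y$ when the numerator is positive and increases it when the numerator is negative. The dependence on $v_y(1,0)$ is a bit more delicate since it enters both $N$ and $M$; computing $\partial_{v_y(1,0)}(N/M) = (M - N)/M^2$, and noting $M - N = v_y(0,0) + c_1 + c_2 + c_1 = v_y(0,0) + 2v_y(0,1) + v_y(1,1) \ge 0$ — wait, more carefully $M - N = v_y(0,0) + v_y(1,0) + c_2 - v_y(1,0) + c_1 = v_y(0,0) + v_y(0,1) + v_y(1,1) + v_y(0,1) \ge 0$, so $\delta_y$ is always nondecreasing in $v_y(1,0)$. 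Hence to maximize we always push $v_y(1,0)$ to its upper value within $\mathcal{V}(p;\tau)$, i.e. $\overline{v}_y(1,0)$; but since $v_y(0,0)$ and $v_y(1,0)$ are coupled through the constraint $v_0(0,0) = \rho_{00} - v_1(0,0)$ (and analogously for the $(1,0)$ cell), we must take $v_y(1,0)$ maximal and then, given the sign of the resulting numerator $\overline{v}_y(1,0) - v_y(0,1)$ (resp. $\underline{v}_y(1,0) - v_y(0,1)$), choose $v_y(0,0)$ at the endpoint of its feasible interval that further increases the ratio — minimal if the numerator is nonnegative, maximal if it is negative. This is precisely the case split in the statement: $\arg\min_{v_y(0,0)}\max_{v_y(1,0)}$ when $\overline{v}_y(1,0) - v_y(0,1) \ge 0$, and $\arg\max_{v_y(0,0)}\max_{v_y(1,0)}$ when $\underline{v}_y(1,0)-v_y(0,1) < 0$.

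The last step is to confirm that these coordinatewise optimal choices are jointly feasible within $\mathcal{V}(p;\tau)$ and that we have not overlooked the regime where the sign of the numerator changes as $v_y(1,0)$ varies over its interval — i.e. $\underline{v}_y(1,0) - v_y(0,1) < 0 \le \overline{v}_y(1,0) - v_y(0,1)$; in that case one checks directly that taking $v_y(1,0) = \overline{v}_y(1,0)$ still gives a nonnegative numerator, so the first branch applies, and the two displayed branches are exhaustive once one observes that $\overline{v}_y(1,0) \ge \underline{v}_y(1,0)$. I expect the main obstacle to be the monotonicity bookkeeping in $v_y(1,0)$ — since it appears in both numerator and denominator, one has to verify the sign of $M - N$ carefully and make sure the coupling constraints in $\mathcal{V}(p;\tau)$ (which link $v_1(t,0)$ and $v_0(t,0)$ via $\rho_{t0}$) do not prevent simultaneously attaining the componentwise optima; the box structure of $\mathcal{V}(p;\tau)$ established in Theorem \ref{thm:minimality} is what makes this go through. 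The lower $\delta_y$-regret bound follows by the symmetric argument (flip all the min/max roles), and the FNR/TNR versions follow from the footnote remark that they equal $1 - R^*$ of the corresponding TPR/FPR quantity.
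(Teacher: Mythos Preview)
Your proposal is correct and follows the same approach as the paper: derive the $v$-statistic decomposition of $\delta_y$ by writing each policy's conditional probability as a ratio over $p(Y(1)=y)$, cancel the shared $v_y(1,1)$ term, and then optimize the resulting fraction over the box $\mathcal{V}(p;\tau)$ by a sign-of-numerator case split. The paper's own proof is considerably terser---it simply writes down the decomposition and then states the two-case upper bound without justifying the monotonicity---whereas you supply the partial-derivative argument showing $\partial_{v_y(1,0)}(N/M) = (M-N)/M^2 \ge 0$ and that $\partial_{v_y(0,0)}\delta_y$ has the sign of $-N$; this is a genuine improvement in rigor over the paper, and your treatment of the intermediate regime where the numerator's sign can flip across $[\underline{v}_y(1,0),\overline{v}_y(1,0)]$ is something the paper does not address at all. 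One small clarification: your aside about $v_y(0,0)$ and $v_y(1,0)$ being ``coupled through the constraint $v_0(0,0)=\rho_{00}-v_1(0,0)$'' is slightly misleading---that constraint couples $v_0(t,0)$ to $v_1(t,0)$ for the \emph{same} $(t,0)$ cell across $y$, not $v_y(0,0)$ to $v_y(1,0)$ for fixed $y$; for fixed $y$ these two coordinates vary independently over a product of intervals, which is why the corner argument goes through without further work.
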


Next, we derive $\delta$-regret bounds on $m_y$.

\begin{proof}

\begin{align*}
\delta(\mathbf{v}^*_0, \mathbf{v}_1) &= m^*_y(\pi) - m^*_y(\pi_0) \\
&= p(T^{\pi} = 1 \mid Y(1) = y) - p(D^{\pi_0} = 1 \mid Y(1) = y) \\
&= \frac{p(T=1, Y(1) = y)}{p(Y(1) = y)} - \frac{p(D=1, Y(1) = y)}{p(Y(1) = y)} \\
&= \frac{v_y(1,0) + v_y(1,1)}{p(Y(1) = y)} - \frac{v_y(0,1) + v_y(1,1)}{p(Y(1) = y)}\\
&= \frac{v_y(1,0) - v_y(0,1) }{v_y(0,0) + v_y(1,0) + v_y(0,1) + v_y(1,1)} \\
\end{align*}
Therefore, 
\begin{equation*}
R^*(\pi, \pi_0; m_y) <= \begin{cases}
\dfrac{\overline{v}_y(1,0) - v_y(0,1) }{\underline{v}_y(0,0) + \overline{v}_y(1,0) + v_y(0,1) + v_y(1,1)}, & \overline{v}_y(1,0) - v_y(0,1) > 0 \\
\dfrac{\overline{v}_y(1,0) - v_y(0,1) }{\overline{v}_y(0,0) + \overline{v}_y(1,0) + v_y(0,1) + v_y(1,1)}, & \overline{v}_y(1,0) - v_y(01) <= 0
\end{cases}
\end{equation*}

The lower regret bound is symmetric.

\end{proof}

\begin{remark}
    The result above implies bounds on the FPR ($y=0$) and TPR ($y=1$) regret. Taking 
    $[-\overline{R}_{\delta}(\pi, \pi_0; \mathbf{v}, m_y), -\underline{R}_{\delta}(\pi, \pi_0; \mathbf{v}, m_y)]$ recovers bounds on the TNR ($y=0$) and FNR ($y=1$) regret. 
\end{remark}

\newpage
\subsubsection{$\delta$-regret bounds on positive and negative predictive value}

\begin{lemma}
Let $m_a(\pi) = p (Y(1)= a \mid A^{\pi}=a)$ be the positive $(a=1)$ or negative $(a=0)$ predictive value of $\pi$. Let $\rho_{td} = p(T=t, D=d)$, $\sigma(a) = (1-2  a) (\rho_{10} - \rho_{01})$ and $\psi_a(\pi) = p(A^{\pi} = a)$ and assume that $(D, T) \sim p^*(\cdot)$ satisfies $ p(D=a) \cdot p(T=a) > 0$. Then
\begin{align*}
\delta_a(\mathbf{v}_0, \mathbf{v}_1) = \frac{\sigma(a) \cdot v_a(a, a)  + \psi_a(\pi_0) \cdot v_{a}(a, a') - \psi_a(\pi) \cdot v_a(a', a)}{\psi_a(\pi) \cdot \psi_a(\pi_0)}.
\end{align*}

Additionally, for all constrained uncertainty sets $\mathcal{V}(p;\tau)$, the upper $\delta_a$-regret bound is given by
\begin{align*}
\overline{R}_{\delta}(\pi, \pi_0; m_{a=0}, \mathcal{V}) &= \delta_{a=0}(\overline{\mathbf{v}}_0, \mathbf{v}_1), \; \text{ where } \overline{\mathbf{v}}_0 =
\begin{cases}
   \arg \max\limits_{{v_{y}(0,0)}} \min\limits_{{v_y}(1,0)} \mathcal{V}(p; \tau),  & \sigma(0) \geq 0 \\
   \arg \min\limits_{v_{y}(0,0)} \min\limits_{{v_y}(1,0)} \mathcal{V}(p; \tau), &  \sigma(0) < 0\\
\end{cases}, \\
\overline{R}_{\delta}(\pi, \pi_0; m_{a=1}, \mathcal{V}) &= \delta_{a=1}(\overline{\mathbf{v}}_0, \mathbf{v}_1), \; \text{ where } \overline{\mathbf{v}}_0 \in \argmax_{v_{1}(1,0)} \mathcal{V}(p; \tau).
\end{align*}

\end{lemma}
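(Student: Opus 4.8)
The plan is to first establish the closed form for $\delta_a$, and then reduce the computation of the regret endpoints to maximizing an affine function of the partially identified $v$-statistics over $\mathcal{V}(p;\tau)$.

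\textbf{Step 1 (closed form).} I would write each predictive value as a ratio of sums of $v$-statistics: marginalizing over $D$ gives $m_a^*(\pi) = p(Y(1)=a\mid T=a) = [v_a(a,0)+v_a(a,1)]/\psi_a(\pi)$, and marginalizing over $T$ gives $m_a^*(\pi_0) = [v_a(0,a)+v_a(1,a)]/\psi_a(\pi_0)$. Taking the difference over the common denominator $\psi_a(\pi)\psi_a(\pi_0)$ and splitting each numerator sum into its shared term plus a residual ($v_a(a,0)+v_a(a,1) = v_a(a,a)+v_a(a,a')$, and likewise $v_a(0,a)+v_a(1,a) = v_a(a,a)+v_a(a',a)$) collects the coefficient of $v_a(a,a)$ as $\psi_a(\pi_0)-\psi_a(\pi)$. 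The last ingredient is the identity $\psi_a(\pi_0)-\psi_a(\pi)=\sigma(a)$: expanding the two marginals in terms of $\rho_{td}=p(T=t,D=d)$ gives $p(D=a)=\rho_{0a}+\rho_{1a}$ and $p(T=a)=\rho_{a0}+\rho_{a1}$, whose difference equals $\rho_{10}-\rho_{01}$ for $a=0$ and $\rho_{01}-\rho_{10}$ for $a=1$, i.e. $(1-2a)(\rho_{10}-\rho_{01})=\sigma(a)$. This gives the stated expression for $\delta_a(\mathbf{v}_0,\mathbf{v}_1)$.

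\textbf{Step 2 (affine maximization).} The assumption $p(D=a)\,p(T=a)>0$ makes the denominator $\psi_a(\pi)\psi_a(\pi_0)$ a fixed positive constant, so $\overline{R}_\delta(\pi,\pi_0;m_a,\mathcal{V})=\delta_a(\overline{\mathbf{v}}_0,\mathbf{v}_1)$ where $\overline{\mathbf{v}}_0$ maximizes the numerator $N_a(\mathbf{v}_0)=\sigma(a)\,v_a(a,a)+\psi_a(\pi_0)\,v_a(a,a')-\psi_a(\pi)\,v_a(a',a)$ over $\mathcal{V}(p;\tau)$. Since $N_a$ is affine in $\mathbf{v}_0$, the maximizer sits at a vertex of the feasible box, and the identified terms (those with $d=1$) are constants, so only the partially identified terms (those with $d=0$) matter. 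Crucially, $\mathcal{V}(p;\tau)$ is by construction the product of the two independent intervals $v_1(0,0)\in\mathcal{H}(v_1(0,0);\tau)$ and $v_1(1,0)\in\mathcal{H}(v_1(1,0);\tau)$, with $v_0(t,0)=\rho_{t0}-v_1(t,0)$; hence $v_0(0,0)$ and $v_0(1,0)$ range independently over their own intervals and can be driven to opposite corners. Note the coefficients $\sigma(a),\psi_a(\pi_0),-\psi_a(\pi)$ depend only on $p$ and the known $\pi$, not on $\tau$, which is why the form of the maximizer holds for all $\mathcal{V}(p;\tau)$.

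\textbf{Step 3 (case analysis and lower bound).} For $a=1$ ($a'=0$) the only $d=0$ term of $N_1$ is $v_1(1,0)$, with coefficient $\psi_1(\pi_0)=p(D=1)>0$, so $N_1$ is increasing in it and $\overline{\mathbf{v}}_0\in\argmax_{v_1(1,0)}\mathcal{V}(p;\tau)$. For $a=0$ ($a'=1$) the $d=0$ terms are $v_0(0,0)$ with coefficient $\sigma(0)$ and $v_0(1,0)$ with coefficient $-\psi_0(\pi)<0$; therefore $N_0$ is decreasing in $v_0(1,0)$ (take $\min_{v_0(1,0)}$) and monotone in $v_0(0,0)$ in the direction given by the sign of $\sigma(0)$, which produces the two-way case split. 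The lower $\delta_a$-regret bound follows by exchanging every $\min$ and $\max$ in the same argument. I expect the main obstacle to be the bookkeeping in Step 1 — correctly isolating the shared $v_a(a,a)$ term and verifying the $\sigma(a)$ identity — together with the observation in Step 2 that the constrained uncertainty set factorizes across $t$, which is what licenses optimizing the two partially identified coordinates independently.
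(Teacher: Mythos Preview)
Your proposal is correct and follows essentially the same route as the paper: derive the closed form by putting the two predictive values over the common denominator $\psi_a(\pi)\psi_a(\pi_0)$ and collecting the shared $v_a(a,a)$ term (the paper writes its coefficient as $\rho_{a'a}-\rho_{aa'}$, which is exactly your $\sigma(a)=\psi_a(\pi_0)-\psi_a(\pi)$), then read off the maximizer by inspecting the sign of each coefficient on the partially identified entries. Your explicit remarks that the numerator is affine and that $\mathcal{V}(p;\tau)$ factorizes across $t$ make precise what the paper leaves implicit when it simply writes down the case-split bounds, but the argument is the same.
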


We now prove $\delta$-regret bounds for $m_a$.

\begin{proof}

Let $a' = 1 -a$. Then $\forall a \in \{ 0, 1\}$ we have that

    \begin{align*}
\delta(\mathbf{v}^*_0, v_1) &= m^*_{a}(\pi) - m^*_{a}(\pi_0) \\
&= p(Y(1) = a \mid T^{\pi} = a) - p(Y(1) = a \mid D^{\pi_0} = a) \\
&= \frac{p(T=a, Y(1) = a)}{p(T = a)} - \frac{p(D=a, Y(1) = a)}{p(D = a)} \\
&= \frac{v_a(a,a') + v_a(a,a)}{\rho_{aa} + \rho_{aa'}} - \frac{v_a(a,a) + v_a(a',a)}{ \rho_{aa} + \rho_{a'a}}\\
&= \frac{(\rho_{a'a} - \rho_{aa'}) \cdot v_a(a,a) + \psi_a(\pi_0) \cdot v_a(a,a') - \psi_a(\pi) \cdot v_a(a',a)}{\psi_0(\pi) \cdot \psi_0(\pi_0)}
\end{align*}
When $a=1$, we have that 
$$
R^*(\pi, \pi_0; m_{a=1}) \leq \frac{(\rho_{01} - \rho_{10}) \cdot v_1(1, 1)  + \psi_1(\pi_0) \cdot \overline{v}_{1}(1, 0) - \psi_1(\pi) \cdot v_1(0, 1)}{\psi_1(\pi) \cdot \psi_1(\pi_0)}
$$

When $a=0$, we have two cases. 

\begin{equation*}
R^*(\pi, \pi_0; m_{a=0}) \leq  \begin{cases}
    \dfrac{(\rho_{10} - \rho_{01}) \cdot \overline{v}_0(0, 0)  + \psi_0(\pi_0) \cdot v_{0}(0, 1) - \psi_0(\pi) \cdot \underline{v}_0(1, 0)}{\psi_0(\pi) \cdot \psi_0(\pi_0)}, &  \rho_{10} > \rho_{01} \\
    \dfrac{(\rho_{10} - \rho_{01}) \cdot \underline{v}_0(0, 0)  + \psi_0(\pi_0) \cdot v_{0}(0, 1) - \psi_0(\pi) \cdot \underline{v}_0(1, 0)}{\psi_0(\pi) \cdot \psi_0(\pi_0)}, &  \rho_{10} \leq \rho_{01}
\end{cases}
\end{equation*}

The lower regret bound is symmetric.


\end{proof}

\newpage
\subsubsection{Baseline regret bounds}\label{appendix:standard_bounds}

We now provide baseline regret bounds for policy performance measures. 
\begin{proposition}[Baseline bounds on $m_u$] Let $m_u(\pi)$ be the expected utility of $\pi$. Then the baseline upper bound on regret is given by

$$
\overline{R}(\pi, \pi_0; m_u, \mathcal{V}) = \sum_{y} u_{0y} \cdot (\overline{v}_y(0,0) + v_y(0,1) - \underline{v}_y(0,0) - \underline{v}_y(1,0)) + u_{1y} \cdot (\overline{v}_y(1,0) - v_y(0,1)),
$$
where the lower bound $\underline{R}(\pi, \pi_0; m_u, \mathcal{V})$ is symmetric.

\end{proposition}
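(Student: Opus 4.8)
The plan is to reduce to the $v$-statistic decomposition of expected utility already obtained in the proof of Lemma~\ref{thm:udelta}, and then exploit linearity of $m_u$ together with nonnegativity of the utility weights to read off the maximizer and minimizer over $\mathcal V$.

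First I would record the two decompositions from Lemma~\ref{thm:udelta}: $m_u(\pi) = \sum_{t,y} u_{ty}\bigl(v_y(t,0)+v_y(t,1)\bigr)$ and $m_u(\pi_0) = \sum_{d,y} u_{dy}\bigl(v_y(0,d)+v_y(1,d)\bigr)$, and split each into its identified part (the terms with status-quo index $1$, which lie in $\mathbf v_1$ and are held fixed) and its partially identified part (the terms in $\mathbf v_0$). Concretely, the $\mathbf v_0$-dependent part of $m_u(\pi)$ is $\sum_y\bigl(u_{0y}v_y(0,0)+u_{1y}v_y(1,0)\bigr)$, whereas the $\mathbf v_0$-dependent part of $m_u(\pi_0)$ is $\sum_y u_{0y}\bigl(v_y(0,0)+v_y(1,0)\bigr)$. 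The point to be careful about is exactly this bookkeeping: the same partially identified statistics $v_y(0,0), v_y(1,0)$ enter both performance measures, but weighted by the new-policy action utilities $u_{ty}$ in $m_u(\pi)$ versus by the status-quo action utilities $u_{0y}$ in $m_u(\pi_0)$.

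Next, by the definition of the baseline regret interval, $\overline R(\pi,\pi_0;m_u,\mathcal V)=\overline m(\pi;\mathcal V)-\underline m(\pi_0;\mathcal V)$. Because each $u_{ay}\ge 0$, both $m_u(\pi)$ and $m_u(\pi_0)$ are linear in $\mathbf v_0$ with nonnegative coefficients, so $\overline m(\pi;\mathcal V)$ is attained by sending every $\mathbf v_0$-coordinate to its upper endpoint $\overline v_y(t,0)$ and $\underline m(\pi_0;\mathcal V)$ by sending every $\mathbf v_0$-coordinate to its lower endpoint $\underline v_y(t,0)$. This gives $\overline m(\pi;\mathcal V)=\sum_y\bigl(u_{0y}(\overline v_y(0,0)+v_y(0,1))+u_{1y}(\overline v_y(1,0)+v_y(1,1))\bigr)$ and $\underline m(\pi_0;\mathcal V)=\sum_y\bigl(u_{0y}(\underline v_y(0,0)+\underline v_y(1,0))+u_{1y}(v_y(0,1)+v_y(1,1))\bigr)$.

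Finally I would take the difference. The identified term $u_{1y}v_y(1,1)$ enters $\overline m(\pi;\mathcal V)$ with a $+$ and $\underline m(\pi_0;\mathcal V)$ with a $-$, so it cancels; collecting the surviving terms by the weights $u_{0y}$ and $u_{1y}$ produces exactly the claimed formula for $\overline R(\pi,\pi_0;m_u,\mathcal V)$, and the symmetric choice of endpoints (lower endpoints for $\pi$, upper endpoints for $\pi_0$) gives $\underline R(\pi,\pi_0;m_u,\mathcal V)=\underline m(\pi;\mathcal V)-\overline m(\pi_0;\mathcal V)$. I do not expect a genuine obstacle here; the only subtlety is confirming that no case split is needed, which holds because the baseline interval optimizes over $\pi$ and $\pi_0$ separately and all utility weights are nonnegative, so each partially identified coordinate can be pushed to the relevant endpoint independently.
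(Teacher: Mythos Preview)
Your proposal is correct and follows essentially the same route as the paper: invoke the $v$-statistic decomposition of $m_u$ from Lemma~\ref{thm:udelta}, push each partially identified coordinate to its upper (resp.\ lower) endpoint using nonnegativity of the utility weights to obtain $\overline m_u(\pi;\mathcal V)$ and $\underline m_u(\pi_0;\mathcal V)$, then subtract and cancel the common $u_{1y}v_y(1,1)$ term. The paper's proof is slightly terser (it writes down $\overline m_u$ and $\underline m_u$ directly without the explicit justification via nonnegativity), but the argument is the same.
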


\begin{proof}
By the same argument provided in the proof of Theorem \ref{thm:udelta}, we have that

$$
m^*_u(\pi) = \sum_{t,y} u_{ty} \cdot (v_y(t,0) + v_y(t,1)), \;\; m^*_u(\pi_0) = \sum_{d,y} u_{dy} \cdot (v_y(0,d) + v_y(1,d))
$$
This implies
\begin{align*}
\overline{m}_u(\pi; \mathcal{V}) &= \sum_{y} u_{0y} \cdot (\overline{v}_y(0,0) + v_y(0,1)) + u_{1y} \cdot (\overline{v}_y(1,0) + v_y(1,1))\\
\underline{m}_u(\pi_0; \mathcal{V}) &= \sum_{y} u_{0y} \cdot (\underline{v}_y(0,0) + \underline{v}_y(1,0)) + u_{1y} \cdot (v_y(0,1) + v_y(1,1))
\end{align*}
Simplifying yields the result
\begin{align*}
\overline{R}(\pi, \pi_0; m_u, \mathcal{V}) &= \overline{m}_u(\pi; \mathcal{V}) - \underline{m}_u(\pi_0; \mathcal{V})\\
&= \sum_{y} u_{0y} \cdot (\overline{v}_y(0,0) + v_y(0,1) - \underline{v}_y(0,0) - \underline{v}_y(1,0)) + u_{1y} \cdot (\overline{v}_y(1,0) - v_y(0,1))
\end{align*}

\end{proof}
\begin{proposition}[Baseline bounds on $m_y$]

Let $m^*_{y}(\pi) = p(A^{\pi}=1 \mid Y(1) = y)$ be the positive $(y=1)$ or negative $(y=0)$ class predictive performance of $\pi$. Then the baseline upper regret bound on $m_y$ is given by
$$
\overline{R}(\pi, \pi_0; m_y, \mathcal{V}) = \left(\frac{\overline{v}_y(1,0) + v_y(1,1)}{\underline{v}_y(0,0) + v_y(0,1) + \overline{v}_y(1,0) + v_y(1,1)} \right) - \left(\frac{v_y(0, 1) + v_y(1,1)}{\overline{v}_y(0,0) + v_y(0,1) + \overline{v}_y(1,0) + v_y(1,1)}\right)
$$
where the lower bound $\underline{R}(\pi, \pi_0; m_y, \mathcal{V})$ is symmetric. 
\end{proposition}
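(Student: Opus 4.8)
The plan is to reduce the claim to a coordinatewise monotonicity argument in the two partially identified $v$-statistics $v_y(0,0)$ and $v_y(1,0)$. First I would recall, from the computation in the proof of Lemma~\ref{thm:ydelta}, the $v$-statistic decompositions (writing $p(Y(1)=y) = v_y(0,0) + v_y(1,0) + v_y(0,1) + v_y(1,1)$):
\begin{align*}
m^*_y(\pi) &= \frac{v_y(1,0) + v_y(1,1)}{p(Y(1)=y)}, & m^*_y(\pi_0) &= \frac{v_y(0,1) + v_y(1,1)}{p(Y(1)=y)}.
\end{align*}
By consistency (Assumption~\ref{assumption:consistency}) the cells $v_y(0,1), v_y(1,1) \in \mathbf{v}_1$ are point-identified, whereas $v_y(0,0), v_y(1,0) \in \mathbf{v}_0$ are the only quantities that vary over $\mathcal{V}$; and for a fixed class $y$ the pair $(v_y(0,0), v_y(1,0))$ sweeps the full axis-aligned rectangle $[\underline{v}_y(0,0), \overline{v}_y(0,0)] \times [\underline{v}_y(1,0), \overline{v}_y(1,0)]$, since in $\mathcal{V}(p;\tau)$ the free parameters are $v_1(0,0)$ and $v_1(1,0)$ ranging independently over their $\mathcal{H}$-intervals, and $v_0(0,0), v_0(1,0)$ are pinned affinely to them by $\sum_y v_y(t,0) = \rho_{t0}$.

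Next I would evaluate the two extrema defining the baseline interval, $\overline{R}(\pi, \pi_0; m_y, \mathcal{V}) = \overline{m}_y(\pi; \mathcal{V}) - \underline{m}_y(\pi_0; \mathcal{V})$. For $\overline{m}_y(\pi;\mathcal{V})$ I maximize $\tfrac{v_y(1,0)+v_y(1,1)}{v_y(0,0)+v_y(1,0)+v_y(0,1)+v_y(1,1)}$ over the rectangle: the objective is nonincreasing in $v_y(0,0)$ (it occurs only in the denominator) for every value of $v_y(1,0)$, so the optimum sets $v_y(0,0) = \underline{v}_y(0,0)$; the residual function of $x := v_y(1,0)$ has the form $(x+c_1)/(x+c_1+c_2)$ with $c_1 = v_y(1,1) \ge 0$, $c_2 = \underline{v}_y(0,0) + v_y(0,1) \ge 0$, which is nondecreasing in $x$, so the optimum sets $v_y(1,0) = \overline{v}_y(1,0)$; this produces exactly the first bracketed term. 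Symmetrically, for $\underline{m}_y(\pi_0;\mathcal{V})$ I minimize $\tfrac{v_y(0,1)+v_y(1,1)}{v_y(0,0)+v_y(1,0)+v_y(0,1)+v_y(1,1)}$; both free cells now appear only in the denominator, so the minimum pushes each to its upper end, giving $v_y(0,0) = \overline{v}_y(0,0)$, $v_y(1,0) = \overline{v}_y(1,0)$ and the second bracketed term. Subtracting yields the stated expression; the lower bound follows from the mirror-image argument ($\underline{R} = \underline{m}_y(\pi;\mathcal{V}) - \overline{m}_y(\pi_0;\mathcal{V})$, which interchanges every $\underline{v}$ with the corresponding $\overline{v}$), and the hypothesis $p(Y(1)=y) > 0$ keeps all denominators positive on the feasible set.

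I expect the only genuinely subtle point to be the structural fact that $(v_y(0,0), v_y(1,0))$ ranges over a true product set inside $\mathcal{V}$ for fixed $y$ — this is what licenses optimizing coordinate by coordinate, and it is immediate for $\mathcal{V}(p;\tau)$ from its product form (and for the unconstrained $\mathcal{V}(p)$ from the fact that the only constraint, $v_0(t,0)+v_1(t,0)=\rho_{t0}$, links $y=0$ with $y=1$ at a fixed $(t,0)$ but never links $v_y(0,0)$ with $v_y(1,0)$). Everything else — the sign bookkeeping showing the relevant ratio is monotone increasing in the ``positive'' partially identified cell and decreasing in the ``negative'' one — is routine, and essentially identical to the reasoning already carried out for the $\delta$-regret bounds.
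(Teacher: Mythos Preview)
Your proposal is correct and follows essentially the same approach as the paper: write each $m^*_y$ as a ratio of $v$-statistics and then optimize the two terms of the baseline regret separately over the partially identified cells. The paper's proof is terser---it simply states the decompositions and plugs in the extremal values without explicitly justifying the monotonicity or the product structure of the feasible region---whereas you spell out both the coordinatewise monotonicity argument and the fact that $(v_y(0,0),v_y(1,0))$ sweeps a full rectangle inside $\mathcal{V}(p;\tau)$; these are the right details to add and they match what the paper leaves implicit.
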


\begin{proof}

We have that
$$
m^*_y(\pi) = p(T^\pi=1 \mid Y(1) = y) = \frac{p(T^\pi=1, Y(1) = y)}{p(Y(1) = y)} = \frac{v_y(1,0) + v_y(1,1)}{v_y(0,0) + v_y(0,1) + v_y(1,0) + v_y(1,1)}
$$
And similarly, 
$$
m^*_y(\pi_0) = p(D^{\pi_0}=1 \mid Y(1) = y) = \frac{p(D^{\pi_0}=1, Y(1) = y)}{p(Y(1) = y)} = \frac{v_y(0, 1) + v_y(1,1)}{v_y(0,0) + v_y(0,1) + v_y(1,0) + v_y(1,1)}
$$
Applying the definition of baseline regret yields the result:
\begin{align*}
\overline{R}(\pi, \pi_0; \mathbf{v}, m_y) &= \overline{m}_y(\pi; \mathbf{v}) - \underline{m}_y(\pi_0; \mathbf{v})\\
&= \left(\frac{\overline{v}_y(1,0) + v_y(1,1)}{\underline{v}_y(0,0) + v_y(0,1) + \overline{v}_y(1,0) + v_y(1,1)} \right) - \left(\frac{v_y(0, 1) + v_y(1,1)}{\overline{v}_y(0,0) + v_y(0,1) + \overline{v}_y(1,0) + v_y(1,1)}\right)
\end{align*}
The lower bound follows from applying the same decomposition with $\underline{R}(\pi, \pi_0; \mathbf{v}, m_y) = \underline{m}_y(\pi; \mathbf{v}) - \overline{m}_y(\pi_0; \mathbf{v})$.    
\end{proof}

\begin{proposition}[Baseline bounds on $m_a$]

Let $m^*_{a}(\pi) = p(Y(1) = a \mid A^{\pi}=a)$ be the positive $(y=1)$ or negative $(y=0)$ predictive value of $\pi$. Let $\psi_a(\pi) = p(A^{\pi} = a)$. Then the baseline regret is upper bounded by
\begin{align*}
\overline{R}(\pi, \pi_0; m_{a=1}, \mathcal{V}) &= \frac{\overline{v}_1(1,0) + v_1(1,1)}{\psi_1(\pi)} - \frac{v_1(0,1) + v_1(1,1)}{\psi_1(\pi_0)} \\
\overline{R}(\pi, \pi_0; m_{a=0}, \mathcal{V}) &=  \frac{\overline{v}_0(0,0) + v_0(0,1)}{\psi_0(\pi)} - \frac{\underline{v}_0(0,0) + \underline{v}_0(1,0)}{\psi_0(\pi_0)} \\
\end{align*}
where the lower bounds $\underline{R}(\pi, \pi_0; m_{a=0}, \mathcal{V})$, $\underline{R}(\pi, \pi_0; m_{a=1}, \mathcal{V})$ are symmetric. 

\end{proposition}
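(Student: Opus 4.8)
The plan is to reuse the template from the proofs of the $\delta$-regret bounds: express $m_a^*(\pi)$ and $m_a^*(\pi_0)$ as ratios of sums of $v$-statistics over the nominal selection probabilities, identify which numerator terms are partially identified, and then optimize term-by-term over the uncertainty set before applying the definition of baseline regret. Concretely, I would first write $m_a^*(\pi) = p(Y(1)=a\mid T^\pi=a) = p(T=a,Y(1)=a)/\psi_a(\pi)$ and, marginalizing over $D^{\pi_0}$, obtain $p(T=a,Y(1)=a) = v_a(a,0)+v_a(a,1)$; symmetrically $m_a^*(\pi_0) = (v_a(0,a)+v_a(1,a))/\psi_a(\pi_0)$. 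This already yields, for $a=1$, $m_1^*(\pi) = (v_1(1,0)+v_1(1,1))/\psi_1(\pi)$ and $m_1^*(\pi_0) = (v_1(0,1)+v_1(1,1))/\psi_1(\pi_0)$, and for $a=0$, $m_0^*(\pi) = (v_0(0,0)+v_0(0,1))/\psi_0(\pi)$ and $m_0^*(\pi_0) = (v_0(0,0)+v_0(1,0))/\psi_0(\pi_0)$.

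The key observations are: (i) $\psi_a(\pi)=p(T=a)$ and $\psi_a(\pi_0)=p(D=a)$ are functions of the observed $(X,D)$ only, hence point-identified constants that do not vary over $\mathcal{V}(p;\tau)$; and (ii) each partially identified statistic $v_y(t,0)$ enters the relevant numerators with coefficient $+1$. Thus each of $\overline{m}_a(\pi;\mathcal{V})$ and $\underline{m}_a(\pi_0;\mathcal{V})$ is obtained by sending the free $v_y(t,0)$ terms to the appropriate endpoint of their interval. For $a=1$ only $v_1(1,0)$ is free in $m_1^*(\pi)$, and the numerator of $m_1^*(\pi_0)$ lies entirely in $\mathbf{v}_1$ and is point-identified, so $\overline{m}_1(\pi;\mathcal{V}) = (\overline{v}_1(1,0)+v_1(1,1))/\psi_1(\pi)$ and $\underline{m}_1(\pi_0;\mathcal{V}) = m_1^*(\pi_0)$. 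For $a=0$, $v_0(0,0)$ is free in $m_0^*(\pi)$ while both $v_0(0,0)$ and $v_0(1,0)$ are free in $m_0^*(\pi_0)$; because $\mathcal{V}(p;\tau)$ has product structure (its constraints couple $v_0(0,0)$ only to $v_1(0,0)$ and $v_0(1,0)$ only to $v_1(1,0)$), the two can be minimized jointly, giving $\overline{m}_0(\pi;\mathcal{V}) = (\overline{v}_0(0,0)+v_0(0,1))/\psi_0(\pi)$ and $\underline{m}_0(\pi_0;\mathcal{V}) = (\underline{v}_0(0,0)+\underline{v}_0(1,0))/\psi_0(\pi_0)$. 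Substituting into $\overline{R}(\pi,\pi_0;m,\mathcal{V}) = \overline{m}(\pi;\mathcal{V}) - \underline{m}(\pi_0;\mathcal{V})$ gives the claimed upper bounds, and the lower bounds follow by interchanging $\overline{v}\leftrightarrow\underline{v}$ in the two terms.

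There is no genuinely hard step; the proof is a direct monotonicity argument mirroring the earlier $\delta$-regret computations. The only points that require care are the two facts underlying the term-by-term optimization: that $\psi_a(\pi)$ and $\psi_a(\pi_0)$ are point-identified (so the objective is linear, hence monotone, in the free $v$-statistics), and that in the $a=0$, $\pi_0$ case the uncertainty set permits $v_0(0,0)$ and $v_0(1,0)$ to attain their lower endpoints simultaneously, which is precisely the factorization of $\mathcal{V}(p;\tau)$ remarked on just after its definition.
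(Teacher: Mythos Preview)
Your proposal is correct and follows essentially the same approach as the paper: both express $m_a^*(\pi)$ and $m_a^*(\pi_0)$ as ratios of $v$-statistic sums over the point-identified $\psi_a(\cdot)$, then optimize the free $v_y(t,0)$ terms to their interval endpoints and substitute into the baseline regret definition. Your version is slightly more explicit about why term-by-term optimization is valid (monotonicity from point-identified denominators, product structure of $\mathcal{V}(p;\tau)$), but the argument is the same.
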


\begin{proof}

We begin by showing $\overline{R}(\pi, \pi_0; m_{a=1}, \mathcal{V})$.

$$
    m^*_{a=1}(\pi) = p(Y(1) = 1 \mid T = 1) = \frac{p(Y(1) = 1, T=1)}{p(T=1)} = \frac{v_1(1,0) + v_1(1,1)}{\psi_1(\pi)}
$$

$$
    m^*_{a=1}(\pi_0) = p(Y(1) = 1 \mid D = 1) = \frac{p(Y(1) = 1, D=1)}{p(D=1)} = \frac{v_1(0,1) + v_1(1,1)}{\psi_1(\pi_0)}
$$

\begin{align*}
\overline{R}(\pi, \pi_0; m_{a=1}, \mathcal{V}) &= \overline{m}_{a=1}(\pi; \mathcal{V}) - \underline{m}_{a=1}(\pi_0; \mathcal{V}) \\
 &= \frac{\overline{v}_1(1,0) + v_1(1,1)}{\psi_1(\pi)} - \frac{v_1(0,1) + v_1(1,1)}{\psi_1(\pi_0)}
\end{align*}
The lower bound $\underline{R}(\pi, \pi_0; m_{a=1}, \mathcal{V})$ is symmetric. Similarly, for the upper bound on the negative predictive value $\overline{R}(\pi, \pi_0; m_{a=0}, \mathcal{V})$,
$$
    m^*_{a=0}(\pi) = p(Y(1) = 0 \mid T = 0) = \frac{p(Y(1) = 0, T=0)}{p(T=0)} = \frac{v_0(0,0) + v_0(0,1)}{\psi_0(\pi)}
$$

$$
    m^*_{a=0}(\pi_0) = p(Y(1) = 0 \mid D = 0) = \frac{p(Y(1) = 0, D=0)}{p(D=0)} = \frac{v_0(0,0) + v_0(1,0)}{\psi_0(\pi_0)}
$$
\begin{align*}
\overline{R}(\pi, \pi_0; m_{a=0}, \mathcal{V}) &= \overline{m}_{a=0}(\pi; \mathcal{V}) - \underline{m}_{a=0}(\pi_0; \mathcal{V}) \\
 &= \frac{\overline{v}_0(0,0) + v_0(0,1)}{\psi_0(\pi)} - \frac{\underline{v}_0(0,0) + \underline{v}_0(1,0)}{\psi_0(\pi_0)}
\end{align*}
The lower bound $\underline{R}(\pi, \pi_0; m_{a=0}, \mathcal{V})$ is symmetric.

\end{proof}

\newpage
\subsubsection{Proof of Theorem \ref{thm:delta_seperation}}\label{subsec:seperation_proof}
\begin{proof}

We will bound $\overline{\Delta}(m, \mathcal{V}) = \overline{R}(\pi, \pi_0; m, \mathcal{V}) - \overline{R}_{\delta}(\pi, \pi_0; m, \mathcal{V})$. The result over the full interval follows by taking $\Delta(m, \mathcal{V}) = 2 \cdot \overline{\Delta}(m, \mathcal{V})$ because $\overline{R}(\pi, \pi_0; m, \mathcal{V}) - \overline{R}_{\delta}(\pi, \pi_0; m, \mathcal{V}) = \underline{R}_{\delta}(\pi, \pi_0; m, \mathcal{V}) - \underline{R}(\pi, \pi_0; m, \mathcal{V})$. We begin for showing the result for the positive and negative class predictive performance. 

Case 1: $\overline{v}_{10} > v_{01}$.

\begin{align*}
\overline{\Delta}(m_y, \mathcal{V}) &= \overline{R}(\pi, \pi_0; m_y, \mathcal{V}) - \overline{R}_{\delta}(\pi, \pi_0; m_y, \mathcal{V}) \\
&= \left(\frac{v_{11} + \overline{v}_{10}}{\underline{v}_{00} + \overline{v}_{10} + v_{01} + v_{11}} - \frac{v_{11} + v_{01}}{\overline{v}_{00} + \overline{v}_{10} + v_{01} + v_{11}}\right)
- \frac{\overline{v}_{10} - v_{01}}{ \underline{v}_{00} + \overline{v}_{10} + v_{01} + v_{11}}\\
&= \frac{(v_{11} + v_{01}) \cdot (\overline{v}_{00} - \underline{v}_{00})}{ (\underline{v}_{00} + \overline{v}_{10} + v_{01} + v_{11}) \cdot (\overline{v}_{00} + \overline{v}_{10} + v_{01} + v_{11})} \\
&\geq \frac{\alpha \cdot v_{11} }{ (\overline{v}_{00} + \overline{v}_{10} + v_{01} + v_{11}) \cdot (\overline{v}_{00} + \overline{v}_{10} + v_{01} + v_{11})} \\
&= \frac{\alpha \cdot v_{11}}{(\overline{\gamma})^2}
\end{align*}

Case 2: $\overline{v}_{10} \leq v_{01}$.

\begin{align*}
\overline{\Delta}(m_y, \mathcal{V}) &= \overline{R}(\pi, \pi_0; m_y, \mathcal{V}) - \overline{R}_{\delta}(\pi, \pi_0; m_y, \mathcal{V}) \\
&= \left(\frac{v_{11} + \overline{v}_{10}}{\underline{v}_{00} + \overline{v}_{10} + v_{01} + v_{11}} - \frac{v_{11} + v_{01}}{\overline{v}_{00} + \overline{v}_{10} + v_{01} + v_{11}}\right)
- \frac{\overline{v}_{10} - v_{01}}{ \overline{v}_{00} + \overline{v}_{10} + v_{01} + v_{11}}\\
&= \frac{(v_{11} + \overline{v}_{10}) \cdot (\overline{v}_{00} - \underline{v}_{00})}{ (\underline{v}_{00} + \overline{v}_{10} + v_{01} + v_{11}) \cdot (\overline{v}_{00} + \overline{v}_{10} + v_{01} + v_{11})} \\
&\geq \frac{\alpha \cdot v_{11} }{(\overline{\gamma})^2}
\end{align*}

Next we will bound $\overline{\Delta}(m_{a=0}, \mathcal{V})$. Case 1: $\rho_{10} - \rho_{11} > 0$. Note that $\psi_0(\pi) = \rho_{01} + \rho_{00}$ and $\psi_0(\pi_0) = \rho_{10} + \rho_{00}$. We have that
\begin{align*}
\overline{\Delta}(m_{a=0}, \mathcal{V}) &= \overline{R}(\pi, \pi_0; m_{a=0}, \mathcal{V}) - \overline{R}_\delta(\pi, \pi_0; m_{a=0}, \mathcal{V}) \\
&= \left( \frac{\overline{v}_0(0,0) + v_0(0,1)}{\rho_{01} + \rho_{00}} - \frac{\underline{v}_0(0,0) + \underline{v}_0(1,0)}{\rho_{10} + \rho_{00}} \right) \\
& \;\;\;\;\;\; - \frac{(\rho_{10} - \rho_{01}) \cdot \overline{v}_0(0,0) + (\rho_{10} + \rho_{00}) \cdot v_{0}(0, 1) - (\rho_{01} + \rho_{00}) \cdot \underline{v}_0(1, 0)}{(\rho_{01} + \rho_{00}) \cdot (\rho_{01} + \rho_{00})} \\
&= \frac{(\overline{v}_0(0,0) - \underline{v}_0(0,0))}{(\rho_{01} + \rho_{00})}\\
&= \frac{\alpha}{\psi_0(\pi)}.
\end{align*}
The third equality follows from finding a common denominator and simplifying. Case 2: $\rho_{10} - \rho_{11} \leq 0$. Following the same argument, we have that $\overline{\Delta}(m_{a=0}, \mathcal{V}) = \frac{\alpha}{\psi_0(\pi_0)}$. Thus
$$
\overline{\Delta}(m_{a=0}, \mathcal{V}) \geq \min\{\frac{\alpha}{\psi_0(\pi)}, \frac{\alpha}{\psi_0(\pi_0) } \} = \frac{\alpha}{\max\{\psi_0(\pi), \psi_0(\pi_0)\}}.
$$

Next we will bound $\overline{\Delta}(m_u, \mathcal{V})$. Let $\lambda_{ay} = u_{ay} - u_{a'y}$.

Case 1: $\lambda_{11} > \lambda_{10}$.
\begin{align*}
\overline{\Delta}(m_u, \mathcal{V}) &= \overline{R}(\pi, \pi_0; m_u, \mathcal{V}) - \overline{R}_{\delta}(\pi, \pi_0;  m_u, \mathcal{V}) \\
&= (u_{00} + u_{01}) (\overline{v}_0(0,0) - \underline{v}_0(0,0)) + (u_{00} + u_{11}) (\overline{v}_0(1,0) - \underline{v}_0(1,0))
\end{align*}

Case 2: $\lambda_{11} \leq \lambda_{10}$.
\begin{align*}
\overline{\Delta}(m_u, \mathcal{V}) &= \overline{R}(\pi, \pi_0; m_u, \mathcal{V}) - \overline{R}_{\delta}(\pi, \pi_0;  m_u, \mathcal{V}) \\
&= (u_{00} + u_{01}) (\overline{v}_0(0,0) - \underline{v}_0(0,0)) + (u_{10} + u_{01}) (\overline{v}_0(1,0) - \underline{v}_0(1,0))
\end{align*}

Combining cases yields the result: 
$$
\overline{\Delta}(m_u, \mathcal{V}) \leq (u_{00} + u_{01}) \cdot (\overline{v}_0(0,0) - \underline{v}_0(0,0))
$$

The result that $\overline{\Delta}(m_{a=1}, \mathcal{V}) = \overline{R}(\pi, \pi_0; m_{a=1}, \mathcal{V}) - \overline{R}_\delta(\pi, \pi_0; m_{a=1}, \mathcal{V}) = 0$ follows directly from plugging in definitions of baseline upper regret bound and $\delta$ regret bound and simplifying. 

\end{proof}

\newpage
\section{Assumption Mapping Extensions and Proofs}\label{appendix:assumption_extensions}

In this appendix, we discuss additional causal assumptions which imply uncertainty sets over partially-identified $v$-statistics. Rosenbaum's $\Gamma$-sensitivity model \citep{rosenbaum2005sensitivity}, the proximal identification framework \citep{ghassami2023partial}, and Manski-style no assumptions bounds \citep{manski1998monotone} imply bounding functions $\tau(\cdot)$, which can be used to construct $\mathcal{V}(p;\tau)$ by invoking Lemma \ref{lemma:assumption_mapping}.

\subsection{Rosenbaum's $\Gamma$-Sensitivity Model}

Rosenbaum's \textit{$\Gamma$-sensitivity analysis model} bounds the influence of unobserved confounders on the odds of being treated versus untreated \citep{rosenbaum1987sensitivity}. \citet{namkoong2020off} leverage a sequential adaptation of this model to partially identify the value function of a new policy $\pi$ given confounded off-policy data, while \citet{zhang2020effect} leverage this model to rank individualized treatment rules under confounding. 

\begin{assumption}[$\Gamma$-sensitivity]\label{assumption:rosenbaum} For some $\Gamma \geq 1$,  $(D, T, X, U, Y(1)) \sim p^*(\cdot)$ satisfies

\begin{equation}
\Gamma^{-1} \leq \frac{P(D=1 \mid X, U=u)}{P(D=0 \mid X, U=u)} \frac{P(D=0 \mid X, U=\tilde{u})}{P(D=1 \mid X, U=\tilde{u})} \leq \Gamma
\end{equation}

for all $u, \tilde{u} \in \mathcal{U}$ and $X \in \mathcal{X}$ with probability one.
    
\end{assumption}

\begin{lemma}[\citet{rambachan2022counterfactual}]

Suppose that $(D, T, X, U, Y(1)) \sim p^*(\cdot)$  satisfies Assumption \ref{assumption:rosenbaum} for some $\Gamma \geq 1$. Then 
$$
\Gamma^{-1} \cdot \mu_1(x) \leq \mu_0(x) \leq \Gamma \cdot \mu_1(x), \;\; \forall x \in X.
$$

\end{lemma}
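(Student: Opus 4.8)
The plan is to reduce the claim to a change-of-measure statement about the unobserved confounder. Both $\mu_1(x)=\bE[Y(1)\mid D=1,X=x]$ and $\mu_0(x)=\bE[Y(1)\mid D=0,X=x]$ are expectations of the potential outcome against the conditional law of $U$ given $X=x$ and a particular decision; invoking the latent‑ignorability condition $Y(1)\perp D\mid (X,U)$ underlying the confounded‑policy setup, the tower rule over $U$ gives $\mu_d(x)=\bE[g(x,U)\mid D=d,X=x]$ for $d\in\{0,1\}$, where $g(x,u):=\bE[Y(1)\mid X=x,U=u]\in[0,1]$. So $\mu_0(x)$ and $\mu_1(x)$ integrate the \emph{same} nonnegative function $g(x,\cdot)$ against two different mixing distributions of $U$, and the whole problem becomes controlling the likelihood ratio between those two distributions.

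Next I would make that likelihood ratio explicit via Bayes' rule. Fix $x$ with $e_0(x),e_1(x)>0$ (Assumption~\ref{assumption:positivity}), let $P$ be the law of $U$ given $X=x$, let $\pi(u)=p(D=1\mid X=x,U=u)$ and $o(u)=\pi(u)/(1-\pi(u))$. Writing $R,Q$ for the laws of $U$ given $\{D=1,X=x\}$ and $\{D=0,X=x\}$, Bayes' rule gives $dR\propto\pi\,dP$ and $dQ\propto(1-\pi)\,dP$, so
\[
\frac{dQ}{dR}(u)=\frac{1-\pi(u)}{\pi(u)}\cdot\frac{\int\pi\,dP}{\int(1-\pi)\,dP}=\frac{\bar o}{o(u)},\qquad \bar o:=\frac{e_1(x)}{e_0(x)}.
\]
It then suffices to show $\bar o/o(u)\in[\Gamma^{-1},\Gamma]$ for $P$-a.e.\ $u$, since $g\ge 0$ immediately yields $\mu_0(x)=\bE_R[g(x,U)\,\tfrac{dQ}{dR}(U)]\le\Gamma\,\bE_R[g(x,U)]=\Gamma\,\mu_1(x)$ and, symmetrically, $\mu_0(x)\ge\Gamma^{-1}\mu_1(x)$.

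The hard part is this last bound, and it is where Assumption~\ref{assumption:rosenbaum} enters. The key observation is that $\bar o$ is a \emph{positively weighted average} of the pointwise odds: with $w(u)=1/(1+o(u))>0$ one has $\pi=o\,w$ and $1-\pi=w$, hence $\bar o=\big(\int o(u)w(u)\,dP\big)\big/\big(\int w(u)\,dP\big)$, so $\bar o$ lies between the essential infimum and essential supremum of $o$ under $P$. Assumption~\ref{assumption:rosenbaum} states exactly that $o(u)/o(\tilde u)\le\Gamma$ for $(P\otimes P)$-a.e.\ $(u,\tilde u)$, i.e.\ $\operatorname{ess\,sup} o\le\Gamma\operatorname{ess\,inf} o$; combining these, $\Gamma^{-1}\bar o\le o(u)\le\Gamma\bar o$ a.e., which is the required bound on $dQ/dR$. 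Plugging back in and noting that $x$ was arbitrary completes the argument.

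I expect the genuine content to be confined to that averaging argument (plus the accompanying measure‑theoretic care: using positivity to keep the denominators bounded away from $0$, working with essential sup/inf, and assuming $o$ is integrable so that $\bar o$ is well defined); everything else is bookkeeping. The one modeling point worth stating explicitly is the latent‑ignorability reduction in the first step, since without it $\bE[Y(1)\mid D=d,X,U]$ need not be free of $d$ and the comparison of $\mu_0$ and $\mu_1$ would not reduce to a pure change of measure on $U$.
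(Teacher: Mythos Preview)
Your argument is correct. The paper itself does not give a self-contained proof: it simply invokes Proposition 8.2 of \citet{rambachan2022counterfactual}, which shows that Rosenbaum's $\Gamma$-sensitivity model implies a marginal sensitivity model with $\Lambda=\Gamma$ in the binary-outcome setting, and then defers to the MSM bounding result (Lemma~\ref{lemma:msm}) to obtain $\Gamma^{-1}\mu_1(x)\le\mu_0(x)\le\Gamma\mu_1(x)$. Your route is different and more direct: rather than passing through the MSM, you cast $\mu_0(x)$ and $\mu_1(x)$ as integrals of the same nonnegative function $g(x,\cdot)=\bE[Y(1)\mid X=x,U=\cdot]$ against two $U$-laws, bound the Radon--Nikodym derivative $dQ/dR=\bar o/o(U)$ by showing $\bar o$ is a $w$-weighted average of $o(\cdot)$ and then applying the $\Gamma$-ratio constraint, and conclude by monotonicity. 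This buys you (i) a fully self-contained proof that does not appeal to an external proposition, and (ii) slightly greater generality, since it only needs $Y(1)\ge 0$ rather than binary. The one modeling hypothesis you make explicit---latent ignorability $Y(1)\CI D\mid(X,U)$---is implicit in the paper's setup (the status quo policy acts on $(X,U)$) and is equally required by the MSM-based route, so this is not an additional assumption relative to the paper.
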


This result follows from Proposition 8.2 of \citet{rambachan2022counterfactual}, which shows that Rosenbaum’s $\Gamma$-sensitivity model implies a marginal sensitivity model in binary outcome settings. Because Rosenbaum's $\Gamma$ sensitivity model does not imply \textit{sharp} bounds under the MSM, an uncertainty set constructed using $\underline{\tau}(x) = \Gamma^{-1} \cdot \mu_1(x)$, $\overline{\tau}(x) = \Gamma \cdot \mu_1(x)$ does not guarantee we recover the tightest regret interval attainable under Assumption \ref{assumption:rosenbaum}.

\subsection{Marginal Sensitivity Model}

The marginal sensitivity model (MSM) restricts the extent to which unobserved confounders impact the odds of treatment under the status quo policy \citep{tan2006distributional}. In particular, this model holds that a confounder $U \in \mathbb{R}^{k}$ exists such that decisions would be conditionally randomized if we controlled for both $X$ and $U$. As pointed out by \citet{robins2002covariance}, it suffices to assign $U=Y(1)$.

\begin{assumption}[Marginal Sensitivity Model]\label{assumption:msm}

For some $\Lambda \geq 1$, $(X, D, Y(1)) \sim p^*(\cdot)$ satisfies
\begin{equation}
\left. \Lambda^{-1} \leq \frac{p(D=1 \mid X, Y(1))}{p(D=0 \mid X, Y(1))} \cdot \frac{p(D=1 \mid X)}{p(D=0 \mid X)} \leq \Lambda \right.
\end{equation}

\end{assumption}

As in the IV setting, Assumption \ref{assumption:msm} implies bounds on the unobserved regression $\mu_0(x)$, which in turn can be used to construct $\tau(\cdot)$.

\begin{lemma}[\citet{rambachan2022counterfactual}]\label{lemma:msm} Suppose Assumptions \ref{assumption:consistency} and \ref{assumption:positivity} hold and that the MSM (\ref{assumption:msm}) is satisfied for some $\Lambda \geq 1$. Then 
$$
 \Lambda^{-1} \cdot \mu_1(x) \leq \mu_0(x) \leq \Lambda \cdot \mu_1(x), \; \forall x \in X.
$$

\end{lemma}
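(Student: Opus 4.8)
The plan is to obtain the stated two-sided bound by rewriting the odds-ratio in the MSM (Assumption \ref{assumption:msm}), evaluated at $Y(1)=1$, via Bayes' rule. Since $Y(1)\in\{0,1\}$, the regressions are conditional probabilities, $\mu_d(x)=\bE[Y(1)\mid D=d,X=x]=p(Y(1)=1\mid D=d,X=x)$ for $d\in\{0,1\}$, and Assumption \ref{assumption:positivity} ensures $p(D=d\mid X=x)>0$ so these are well-defined; Assumption \ref{assumption:consistency} is only needed so that $\mu_1$ can later be treated as a functional of the observed $(X,D,Y)$ distribution, and it plays no role in the algebra here.

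First I would apply Bayes' rule to each conditional treatment probability appearing in the MSM, giving
$$
p(D=d\mid X=x, Y(1)=1)=\frac{p(Y(1)=1\mid D=d,X=x)\,p(D=d\mid X=x)}{p(Y(1)=1\mid X=x)}=\frac{\mu_d(x)\,p(D=d\mid X=x)}{p(Y(1)=1\mid X=x)},
$$
valid whenever $p(Y(1)=1\mid X=x)>0$ (if this marginal vanishes then $\mu_0(x)=\mu_1(x)=0$ and the claim is trivial). Substituting into the odds ratio of Assumption \ref{assumption:msm} with $Y(1)=1$, the factor $p(Y(1)=1\mid X=x)$ cancels and the $p(D=d\mid X=x)$ terms cancel against the nominal odds $p(D=0\mid X=x)/p(D=1\mid X=x)$, leaving
$$
\frac{p(D=1\mid X=x, Y(1)=1)}{p(D=0\mid X=x, Y(1)=1)}\cdot\frac{p(D=0\mid X=x)}{p(D=1\mid X=x)}=\frac{\mu_1(x)}{\mu_0(x)}.
$$
The MSM then reads $\Lambda^{-1}\le \mu_1(x)/\mu_0(x)\le\Lambda$; finiteness of this ratio forces $\mu_0(x)>0$, and multiplying both inequalities through by $\mu_0(x)$ and rearranging yields $\Lambda^{-1}\mu_1(x)\le\mu_0(x)\le\Lambda\mu_1(x)$ for every $x\in X$.

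I do not expect a genuine obstacle: the argument is essentially one application of Bayes' rule, once one notices that binary $Y(1)$ turns $\mu_d$ into a conditional probability. The only points that need a line of care are the degenerate case $p(Y(1)=1\mid X=x)=0$ and the observation that the MSM itself precludes $\mu_0(x)=0$ (otherwise the odds ratio diverges), both handled above. One bookkeeping caveat: the odds ratio should be read as $\tfrac{p(D=1\mid X,Y(1))}{p(D=0\mid X,Y(1))}\cdot\tfrac{p(D=0\mid X)}{p(D=1\mid X)}$, the convention used in the main text, which is the form under which the cancellation produces exactly $\mu_1(x)/\mu_0(x)$.
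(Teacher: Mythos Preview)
Your proposal is correct and follows exactly the approach the paper indicates: the paper's own proof is the one-line remark that the lemma ``follows from Bayes' rule, and is a direct consequence of Proposition 8.1 in \cite{rambachan2022counterfactual},'' and you have simply written out that Bayes' rule computation in full. Your caveat about reading the second factor of the MSM odds ratio as $p(D=0\mid X)/p(D=1\mid X)$ (matching the main text and the standard MSM definition) is also well taken, since the appendix display of Assumption~\ref{assumption:msm} appears to contain a typo in that factor.
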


Lemma \ref{lemma:msm} follows from Bayes' rule, and is a direct consequence of Proposition 8.1 in \cite{rambachan2022counterfactual}. Moreover, because Rosenbaum's $\Gamma$ model implies bounds under the MSM \citep{rambachan2022counterfactual}, we can similarly construct bounding functions under this model by taking $\underline{\tau}(x) = \Gamma^{-1} \cdot \mu_1(x)$, $\overline{\tau}(x) = \Gamma \cdot \mu_1(x)$.

\begin{figure}[!htbp]
\parbox{.25\textwidth}{
	\vspace{0.9cm}
	\centering
	\begin{tikzpicture}[->,>=stealth',node distance=1cm, auto,]
		\node (Z) {$Z$};
		\node[right = of Z] (X) {$X$};
		\node[est, dashed, above = of X, yshift=-5mm] (U) {$U$};
		\node[below = of Z] (D) {$D$};
		\node[right = of D, xshift=1.5cm] (Y) {$Y$}; 
		\draw (U) -- (X);
		\draw (X) -- (Y);
		\draw (Z) -- (D);
		\draw (D) -- (Y);
            \draw (U) -- (D);
            \draw (X) -- (D);
            \draw (X) -- (Z);
            \draw (U) -- (Y);
	\end{tikzpicture}
	\quad \\ \bigskip (a) A DAG containing an instrumental variable $Z$.
}
\hfill
\parbox{.25\textwidth}{
	\vspace{0.9cm}
	\centering
	\begin{tikzpicture}[->,>=stealth',node distance=1cm, auto,]
		\node  (Z) {$Z$};
		\node[right = of Z] (X) {$X$};
		\node[est, dashed, above = of X, yshift=-5mm] (U) {$U$};
		\node[below = of Z] (D) {$D$};
		\node[right = of D, xshift=1.5cm] (Y) {$Y$}; 
		\draw (U) -- (Z);
		\draw (U) -- (X);
		\draw (X) -- (Y);
		\draw (Z) -- (D);
		\draw (D) -- (Y);
            \draw (U) -- (D);
            \draw (X) -- (D);
            \draw (U) -- (Y);
            \draw (X) -- (Z);
	\end{tikzpicture}\
	\quad \\ \bigskip (b) A DAG containing a treatment confounding proxy $Z$.
}
\hfill
\parbox{.25\textwidth}{
	\vspace{0.9cm}
	\centering
	\begin{tikzpicture}[->,>=stealth',node distance=1cm, auto,]
		\node (Z) {$Z$};
		\node[ right = of Z] (X) {$X$};
		\node[est,dashed, above = of X, yshift=-5mm] (U) {$U$};
		\node[below = of Z] (D) {$D$};
		\node[right = of D, xshift=1.7cm] (Y) {$Y$}; 
            \node [right = of X] (W) {$W$};
		\draw (U) -- (Z);
		\draw (U) -- (X);
		\draw (X) -- (Y);
		\draw (Z) -- (D);
		\draw (D) -- (Y);
            \draw (U) -- (D);
            \draw (X) -- (D);
            \draw (U) -- (Y);
            \draw (X) -- (Z);
            \draw (X) -- (W);
            \draw (Z) -- (Y);
            \draw (D) -- (W);
            \draw (W) -- (Y);
            \draw (U) -- (W);
	\end{tikzpicture}
	\quad \\ \bigskip (c) A DAG containing a treatment confounding proxy $Z$ and outcome confounding proxy $W$.
}
\caption{Three sets of structural assumptions on $p^*(\cdot)$ which imply bounding functions $\tau(\cdot)$.} \label{fig:assumptions}
\end{figure}
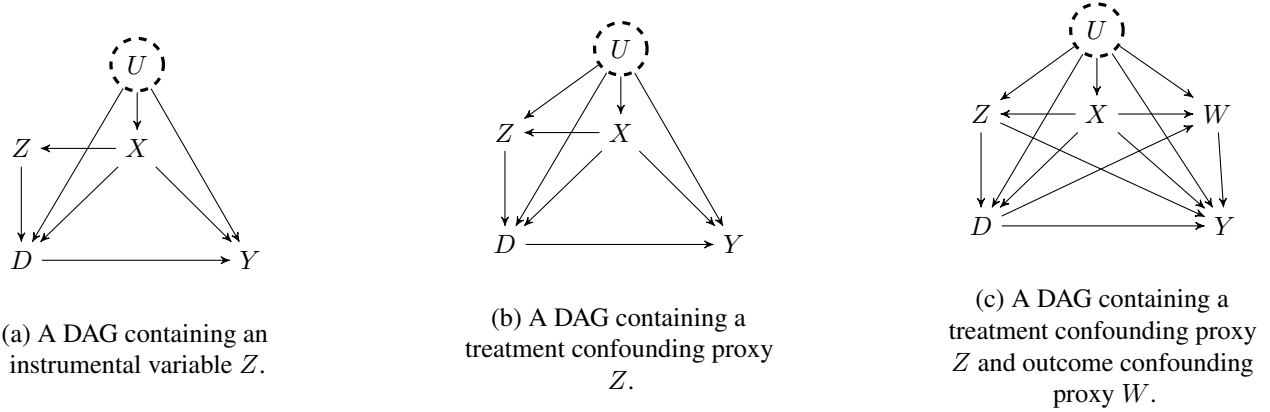

\newpage
\subsection{Instrumental Variable}

Instrumental variables can be used for identification when the status quo policy was influenced by a random source of selection rate heterogeneity \citep{lakkaraju2017selective, chen2023learning, rambachan2021identifying, kleinberg2018human}. For example, in medical testing contexts, an instrument is available when patients are randomly assigned to physicians with heterogeneous testing rates \citep{abaluck2016determinants}.

\begin{assumption}[Instrumental Variable] Let $Z$ be a finite-valued instrument satisfying
\begin{enumerate}
    \item Relevance: $Z \nCI D \mid X$
    \item IV independence: $Z \CI U$ 
    \item Exclusion restriction: $Z \CI Y \mid D, X, U$  
\end{enumerate}\label{assumption:instrument}

\end{assumption}

In the following Lemma, we show that Assumption \ref{assumption:instrument} implies sharp bounds on the unobserved outcome regression. 
\begin{lemma}[\citet{rambachan2022counterfactual, rambachan2021identifying}]\label{lemma:iv_mapping} Suppose Assumptions \ref{assumption:consistency}, \ref{assumption:positivity}, and \ref{assumption:instrument} hold. Let $\mu(x) = \bE[Y(1) \mid X=x]$ and $\mu_d(x) = \bE[Y(1) \mid D=d, X=x]$ be outcome regressions and let $e_d(x) = p(D=d \mid X=x)$ be the propensity function. Then $\forall x \in X$, $\mu_0(x)$ is bounded by
\begin{equation*}
    \frac{\underline{\mu}(x)-\mu_1(x) \cdot e_1(x)}{e_0(x)} \leq \mu_0(x) \leq \frac{\overline{\mu}(x) - \mu_1(x) \cdot e_1(x)}{e_0(x)},
\end{equation*}
where $\underline{\mu}(x)= \max_{\tilde{z} \in \mathcal{Z}}\left\{\mu_1(x, \tilde{z}) \cdot e_1(x, \tilde{z})\right\}$, $\bar{\mu}(x)=\min _{\tilde{z} \in \mathcal{Z}}\left\{ e_0(x, \tilde{z}) + \mu_1(x, \tilde{z}) \cdot e_1(x, \tilde{z})\right\}$.

\end{lemma}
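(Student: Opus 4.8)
The idea is to reduce the claimed bound on the counterfactual regression $\mu_0(x)=\bE[Y(1)\mid D=0,X=x]$ to a bound on the \emph{marginal} regression $\mu(x)=\bE[Y(1)\mid X=x]$, and then to bound $\mu(x)$ by exploiting that the instrument moves the decision but not the potential outcome. The first ingredient is purely algebraic: iterating expectations over $D$ within the stratum $X=x$ and using $e_1(x)+e_0(x)=1$,
\[
\mu(x) \;=\; \mu_1(x)\,e_1(x)\;+\;\mu_0(x)\,e_0(x),
\]
so by positivity (Assumption~\ref{assumption:positivity}) we may write $\mu_0(x)=\big(\mu(x)-\mu_1(x)\,e_1(x)\big)/e_0(x)$. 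Here $e_1(x)$ is identified, and by consistency (Assumption~\ref{assumption:consistency}) $\mu_1(x)\,e_1(x)=\bE[Y\,\bI\{D=1\}\mid X=x]$ is identified as well; the only unidentified term is $\mu(x)$, and $\mu_0(x)$ is increasing in $\mu(x)$, so any interval for $\mu(x)$ transfers directly to the stated interval for $\mu_0(x)$.

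\textbf{Bounding $\mu(x)$.} First I would show that the instrument does not change the mean potential outcome given covariates, i.e.\ $\bE[Y(1)\mid X=x,Z=z]=\mu(x)$ for every $z$ with $p(Z=z\mid X=x)>0$. Conditioning additionally on $U$ and invoking the exclusion restriction (item~3 of Assumption~\ref{assumption:instrument}, applied to $Y(1)$ via consistency) gives $\bE[Y(1)\mid X=x,Z=z,U=u]=\bE[Y(1)\mid X=x,U=u]$; IV independence (item~2) gives that the conditional law of $U$ given $(X=x,Z=z)$ coincides with its law given $X=x$; integrating out $U$ yields the claim. This is the step we borrow from \citet{rambachan2021identifying,rambachan2022counterfactual}.

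\textbf{$z$-specific decomposition and intersection.} For each such $z$, iterating expectations over $D$ within $(X=x,Z=z)$ gives $\mu(x)=\bE[Y(1)\mid X=x,Z=z]=\mu_1(x,z)\,e_1(x,z)+\mu_0(x,z)\,e_0(x,z)$, where $\mu_0(x,z):=\bE[Y(1)\mid D=0,X=x,Z=z]\in[0,1]$ because $Y(1)$ is binary. Solving for $\mu_0(x,z)$ and imposing $\mu_0(x,z)\in[0,1]$ yields, for every $z\in\cZ$,
\[
\mu_1(x,z)\,e_1(x,z)\;\le\;\mu(x)\;\le\;e_0(x,z)+\mu_1(x,z)\,e_1(x,z),
\]
where, again by consistency, $\mu_1(x,z)\,e_1(x,z)=\bE[Y\,\bI\{D=1\}\mid X=x,Z=z]$ and $e_0(x,z)$ are identified. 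Taking the tightest inequality over $z$ gives $\underline{\mu}(x)\le\mu(x)\le\overline{\mu}(x)$ with $\underline{\mu},\overline{\mu}$ exactly as defined in the statement; substituting into the identity $\mu_0(x)=\big(\mu(x)-\mu_1(x)e_1(x)\big)/e_0(x)$ and dividing by $e_0(x)>0$ gives the displayed interval. For sharpness I would exhibit, at each endpoint, a joint law of $(X,U,Z,D,Y(1))$ consistent with Assumption~\ref{assumption:instrument} and with the observed distribution $p(X,Z,D,Y)$ that attains it (e.g.\ by setting $\mu_0(x,z^\star)\in\{0,1\}$ at the extremizing $z^\star$ and filling in the remaining strata freely).

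\textbf{Main obstacle.} The delicate part is the instrument-invariance step: making precise the transfer of the exclusion restriction from the observed $Y$ (which is undefined when $D=0$) to the potential outcome $Y(1)$, and checking whether IV independence is needed conditionally on $X$ in view of the $X\to Z$ and $U\to X$ edges in the assumed graph (Figure~\ref{fig:assumptions}a); getting these measure-theoretic conditionings right is the crux, while everything else is bookkeeping with the law of total expectation and the $[0,1]$ range constraint on binary outcomes.
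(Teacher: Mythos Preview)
Your proposal is correct and follows essentially the same route as the paper: decompose $\mu(x)=\mu_1(x)e_1(x)+\mu_0(x)e_0(x)$, use the IV assumptions to argue that $\bE[Y(1)\mid X=x,Z=z]=\mu(x)$ so that the $z$-specific Manski bounds $\mu_1(x,z)e_1(x,z)\le\mu(x)\le e_0(x,z)+\mu_1(x,z)e_1(x,z)$ hold for every $z$, intersect over $z$, and solve for $\mu_0(x)$. The paper's proof is terser---it simply asserts that the $z$-specific inequality ``follows from Assumption~\ref{assumption:instrument}''---whereas you spell out the instrument-invariance step via conditioning on $U$ and integrating it out; your concern about whether IV independence should be conditional on $X$ is well-placed but not addressed in the paper either.
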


\begin{proof}
    By iterated expectations, $\mu(x) = \mu_1(x) \cdot e_1(x) + \mu_0(x) \cdot e_0(x)$. Therefore
    \begin{align}
        \mu_1(x) \cdot e_1(x) &\leq \mu(x) \leq e_0(x) + \mu_1(x)  \cdot e_1(x) \label{eq:ia}\\
        \mu_1(x,z) \cdot e_1(x,z) &\leq \mu(x) \leq e_0(x,z) + \mu_1(x,z)  \cdot e_1(x,z) \label{eq:ib} \\
         \max_{\tilde{z} \in \mathcal{Z}}\left\{\mu_1(x, \tilde{z}) \cdot e_1(x, \tilde{z})\right\} &\leq \mu(x) \leq \min_{\tilde{z} \in \mathcal{Z}}\left\{ e_0(x, \tilde{z}) + \mu_1(x, \tilde{z}) \cdot e_1(x, \tilde{z}) \right\} \label{eq:ic}
    \end{align}

    where (\ref{eq:ia}) implies sharp bounds on $\mu_0(x)$ (\citet{manski1998monotone}) and (\ref{eq:ib}) follows from Assumption \ref{assumption:instrument}. Solving for $\mu_0(x)$ yields

    \begin{equation*}
        \underline{\mu}_{0}(x) = \frac{\underline{\mu}(x)-\mu_1(x) \cdot e_1(x)}{e_0(x)}, \;\;\;\;
        \overline{\mu}_{0}(x) = \frac{\overline{\mu}(x) - \mu_1(x) \cdot e_1(x)}{e_0(x)}
    \end{equation*}

    with $\underline{\mu}(x)$, $\overline{\mu}(x)$ defined as in (\ref{eq:ic}). Note that both terms are defined because $e_0(x) > 0, \forall x \in X$ by Assumption \ref{assumption:positivity}. 
\end{proof}

\subsection{Proximal Variable}

The proximal causal inference framework relaxes the IV unconfoundedness condition imposed by the IV framework \citep{tchetgen2020introduction}. \citet{ghassami2023partial} extend this framework to support partial-identification of the average treatment effect. We show that this framework also implies point-wise bounding functions on the unobserved outcome regression. We discuss two versions of this framework -- the treatment confounded proxy (Fig \ref{fig:assumptions}.b) and treatment/outcome confounded proxy (Fig \ref{fig:assumptions}.c) -- which are most realistic in our setting. 

\begin{assumption}(Treatment confounding proxy)\label{assumption:pt_proxy} Let $Z$ be a treatment confounding proxy variable such that $(Z, Y, D, X, U) \sim p^*(\cdot)$ satisfies $Z \CI Y \mid D, X, U$.
\end{assumption}

Assumption \ref{assumption:pt_proxy} is identical to the exclusion restriction in the IV setting, but allows for a relaxation of IV independence. Therefore, this condition can be reasonable in settings where IV independence is violated (e.g., confounded assignment of instances to decision-makers). 

\begin{assumption}[Treatment confounding bridge]\label{assumption:pt_bridge} There exists a non-negative bridge function $m$ such that almost surely 
$$
\mathbb{E}[m(Z, D, X) \mid D, X, U]=\frac{p(U \mid 1-D, X)}{p(U \mid D, X)}.
$$
\end{assumption}

Note that this condition only stipulates that $m$ exists and does not require its point identification. 

\begin{lemma}\label{lemma:pt_mapping}(\citet{ghassami2023partial}) Let Assumptions \ref{assumption:positivity}, \ref{assumption:pt_proxy} and \ref{assumption:pt_bridge} hold. Let $\mu_{d}(x,z) = \bE[Y(1) \mid D=d, Z=z, X=x]$. Then $\forall x \in X$
\begin{equation}
    \min_{\tilde{z} \in Z} \{ \mu_1(x, \tilde{z})\} \leq \mu_0(x) \leq \max_{\tilde{z} \in Z} \{ \mu_1(x, \tilde{z})\}
\end{equation}
\end{lemma}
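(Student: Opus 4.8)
The plan is to exhibit the unidentified regression $\mu_0(x)=\mathbb{E}[Y(1)\mid D=0,X=x]$ as a \emph{normalized, non-negative reweighting} of the identified regressions $\mu_1(x,z)=\mathbb{E}[Y(1)\mid D=1,Z=z,X=x]$, where the reweighting is the one induced by the treatment-confounding bridge function $m$; a reweighted average of a family of numbers always lies between their infimum and supremum, which is exactly the claimed two-sided bound.

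First I would expand $\mu_0(x)$ by iterated expectations over $U$ and invoke the latent unconfoundedness encoded in the proximal DAG of Figure \ref{fig:assumptions}(b) (that is, $Y(1)\CI D \mid X,U$) together with Consistency (Assumption \ref{assumption:consistency}) to replace the counterfactual conditional by an observable one: $\mathbb{E}[Y(1)\mid D=0,X=x,U=u] = \mathbb{E}[Y(1)\mid D=1,X=x,U=u] = \mathbb{E}[Y\mid D=1,X=x,U=u]$, giving $\mu_0(x)=\int \mathbb{E}[Y\mid D=1,X=x,U=u]\,p(u\mid D=0,X=x)\,du$. Next I would apply the bridge identity (Assumption \ref{assumption:pt_bridge}) at $D=1$, namely $\mathbb{E}[m(Z,1,X)\mid D=1,X=x,U=u]=p(u\mid D=0,X=x)/p(u\mid D=1,X=x)$, to rewrite $p(u\mid D=0,X=x)$ as $\mathbb{E}[m(Z,1,X)\mid D=1,X=x,U=u]\cdot p(u\mid D=1,X=x)$ inside the integral. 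Since the exclusion restriction (Assumption \ref{assumption:pt_proxy}) makes $m(Z,1,X)$ and $Y$ conditionally independent given $(D=1,X=x,U=u)$, the product of the two conditional means equals the conditional mean of the product; collapsing the remaining integral over $u$ against $p(u\mid D=1,X=x)$ yields $\mu_0(x)=\mathbb{E}[m(Z,1,X)\,Y\mid D=1,X=x]$, and a tower-property step over $Z$ with Consistency ($\mathbb{E}[Y\mid Z,D=1,X]=\mu_1(X,Z)$) gives $\mu_0(x)=\mathbb{E}[m(Z,1,X)\,\mu_1(X,Z)\mid D=1,X=x]$.

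Finally I would establish the normalization $\mathbb{E}[m(Z,1,X)\mid D=1,X=x]=1$, either by specializing the previous display to $Y\equiv 1$ or by directly integrating the bridge identity against $p(u\mid D=1,X=x)$ (the ratio telescopes to $\int p(u\mid D=0,X=x)\,du=1$). Because $m\ge 0$, the map $z\mapsto m(z,1,x)$ thus turns the conditional law of $Z$ given $(D=1,X=x)$ into a probability measure, so $\mu_0(x)$ is a genuine average of $\{\mu_1(x,z)\}_z$; bounding each $\mu_1(x,z)$ below by $\min_{\tilde z}\mu_1(x,\tilde z)$ and above by $\max_{\tilde z}\mu_1(x,\tilde z)$ and pulling these constants out of the normalized non-negative expectation gives the result.

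The step I expect to be the main obstacle is the bridge substitution: justifying the interchange of the $u$- and $z$-integrations (Fubini) and checking integrability of $m(Z,1,X)$ against the relevant conditional laws, which is where Positivity (Assumption \ref{assumption:positivity}) and the assumed existence of a valid (non-negative, square-integrable) bridge function are essential. A secondary point requiring care is that Assumptions \ref{assumption:pt_proxy} and \ref{assumption:pt_bridge} as written only constrain $Z$, so one must state explicitly that the latent unconfoundedness $Y(1)\CI D\mid X,U$ used in the first step is supplied by the structural DAG of Figure \ref{fig:assumptions}(b).
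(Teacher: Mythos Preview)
Your argument is correct and is precisely the derivation one finds in \citet{ghassami2023partial}: represent $\mu_0(x)$ as $\mathbb{E}[m(Z,1,X)\,\mu_1(X,Z)\mid D=1,X=x]$ via the bridge identity and the exclusion restriction, verify that $m(Z,1,X)$ integrates to one under the $(D=1,X=x)$ conditional law, and conclude that $\mu_0(x)$ is a convex combination of $\{\mu_1(x,z)\}_{z}$. The paper itself does not give an independent proof of this lemma; it is quoted directly from \citet{ghassami2023partial}, so there is nothing further to compare against. Your flagging of the implicit latent-unconfoundedness condition $Y(1)\CI D\mid X,U$ (read off the DAG in Figure~\ref{fig:assumptions}(b) rather than from Assumptions~\ref{assumption:pt_proxy}--\ref{assumption:pt_bridge}) is apt and worth stating explicitly, since without it the very first step replacing $\mathbb{E}[Y(1)\mid D=0,X,U]$ by $\mathbb{E}[Y(1)\mid D=1,X,U]$ would not be licensed.
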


\begin{assumption}(Treatment and outcome confounding proxy)\label{assumption:pto_proxy} Let $Z$ be a treatment confounding proxy and $W$ be an outcome confounding proxy such that $(Z, W, Y, D, X, U) \sim p^*(\cdot)$ satisfies $Z \CI W \mid D, X, U$.
\end{assumption}

Assumption \ref{assumption:pto_proxy} enables a relaxation of the IV unconfoundedness condition. This assumption also obviates the exclusion restriction required by the IV (\ref{assumption:instrument}) and treatment confounding proxy (\ref{assumption:pt_proxy}) assumptions. Therefore, Assumption \ref{assumption:pto_proxy} applies given a confounded treatment assignment variable (e.g., the identity of a decision-maker $Z$) and a temporally-lagged outcome variable $W$. Temporally-lagged outcomes are ubiquitous in multi-step decision-making processes, where the final outcome of interest is often preceded by a series of intermediary decisions.

\begin{assumption}(Outcome confounding bridge)\label{assumption:o_bridge}
    There exists a non-negative bridge function $h$ such that almost surely $\bE[Y \mid D, X, U] = \bE[h(W, D, X)  \mid D, X, U]$
    
\end{assumption}

Intuitively, this assumption requires that $W$ is sufficiently informative of confounders such that there exists a function of $W$ which recovers the potential outcome mean as a function of $U$ \citep{miao2018confounding, ghassami2023partial}. Similarly to Assumption \ref{assumption:pt_bridge}, this condition does not require identification of $h$.

\begin{lemma}[\citet{ghassami2023partial}]\label{lemma:pp}
    Let Assumptions \ref{assumption:positivity}, \ref{assumption:pt_bridge}, \ref{assumption:pto_proxy}, and \ref{assumption:o_bridge} hold. Let $\mu_1(x) = \bE[Y(1) \mid D=1, X=x]$, $\eta_1(w) = p(W=w \mid D=1, X=x)$, $\eta_1(z) = p(Z=z \mid D=1, X=x)$, $\eta_1(w,z) = p(W=w, Z=z \mid D=1, X=x)$. Then 
\begin{equation*}
\mu_1(x) \cdot \min_{w,z} \frac{ \eta_1(w,z) }{\eta_1(w) \cdot \eta_1(z) } \leq \mu_0(x) \leq  \mu_1(x) \cdot \max_{w,z} \frac{ \eta_1(w,z) }{\eta_1(w) \cdot \eta_1(z) }, \;\; \forall x \in X.
\end{equation*}
\end{lemma}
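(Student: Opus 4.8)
The plan is to prove the bounds in two phases: first, reproduce the proximal point-identification representation of \citet{ghassami2023partial} for the counterfactual regression $\mu_0(x) = \bE[Y(1)\mid D=0,X=x]$ in terms of \emph{any} pair of bridge functions satisfying Assumptions \ref{assumption:pt_bridge} and \ref{assumption:o_bridge}; second, bound that representation using only the non-negativity of the bridge functions and two marginal normalization identities, after rewriting the formula in terms of the observed dependence ratio $\eta_1(w,z)/(\eta_1(w)\,\eta_1(z))$.

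For the first phase, I would write $\mu_0(x) = \bE[Y(1)\mid D=0,X=x]$ as the iterated expectation, over $U$ given $(D=0,X=x)$, of the inner regression $\bE[Y(1)\mid D=0,X=x,U]$. The latent unconfoundedness built into the proximal structural model lets me replace this inner regression by $\bE[Y(1)\mid D=1,X=x,U] = \bE[Y\mid D=1,X=x,U]$ (the last equality by consistency), and the outcome confounding bridge (Assumption \ref{assumption:o_bridge}) rewrites it as $\bE[h(W,1,X)\mid D=1,X=x,U]$. Next, the treatment confounding bridge (Assumption \ref{assumption:pt_bridge}) converts the outer average over $U\mid(D=0,X=x)$ into an average over $U\mid(D=1,X=x)$ with weight $\bE[m(Z,1,X)\mid D=1,X=x,U]$, so the integrand becomes a product of a conditional expectation of a function of $W$ and a conditional expectation of a function of $Z$, both given $(D=1,X=x,U)$. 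Finally, $Z\CI W\mid D,X,U$ (Assumption \ref{assumption:pto_proxy}) merges these into $\bE[h(W,1,X)\,m(Z,1,X)\mid D=1,X=x,U]$, and collapsing the iterated expectation gives $\mu_0(x) = \bE[h(W,1,X)\,m(Z,1,X)\mid D=1,X=x]$.

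For the second phase, I would expand this over the support of $(W,Z)$ given $(D=1,X=x)$ and set $a(w) = h(w,1,x)\,\eta_1(w)$ and $b(z) = m(z,1,x)\,\eta_1(z)$, which are non-negative because the bridge functions are, so that $\mu_0(x) = \sum_{w,z} \frac{\eta_1(w,z)}{\eta_1(w)\,\eta_1(z)}\,a(w)\,b(z)$. Two further iterated-expectation computations supply the normalizations: $\sum_w a(w) = \bE[h(W,1,X)\mid D=1,X=x] = \bE[Y\mid D=1,X=x] = \mu_1(x)$ via the outcome bridge, and $\sum_z b(z) = \bE[m(Z,1,X)\mid D=1,X=x] = 1$ via the treatment bridge (the conditional expectation of the bridge weight integrates the density ratio back to one). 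Since $a,b\ge 0$, replacing the kernel $\eta_1(w,z)/(\eta_1(w)\,\eta_1(z))$ by its minimum (resp.\ maximum) over the support and using $\sum_{w,z} a(w)\,b(z) = \big(\sum_w a(w)\big)\big(\sum_z b(z)\big) = \mu_1(x)$ yields the stated lower (resp.\ upper) bound. Positivity (Assumption \ref{assumption:positivity}) makes the conditioning events and the ratios well defined, and continuous proxies are handled identically with integrals in place of sums.

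The main obstacle is the first phase: the point-identification representation only emerges if the treatment bridge, the outcome bridge, and the conditional independence $Z\CI W\mid D,X,U$ are combined in the correct order, and it is easy to factor the product inside the expectation at the wrong conditioning level. A further subtlety worth stating explicitly is that we need the bridge functions only to \emph{exist} (Assumptions \ref{assumption:pt_bridge} and \ref{assumption:o_bridge}), not to be unique or identified: the representation, and hence the two-sided bound, holds for every admissible pair $(h,m)$, so the interval is valid even though it need not be sharp. Finally, the argument invokes only Assumptions \ref{assumption:positivity}, \ref{assumption:pt_bridge}, \ref{assumption:pto_proxy}, and \ref{assumption:o_bridge}; the exclusion-type condition \ref{assumption:pt_proxy} plays no role here.
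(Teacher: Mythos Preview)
The paper does not supply its own proof of this lemma; it is stated with attribution to \citet{ghassami2023partial} and left unproved in the text. There is therefore no in-paper argument to compare against.

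That said, your two-phase argument is correct and is essentially the derivation one finds in the proximal partial-identification literature. Phase~1 correctly chains latent unconfoundedness, consistency, the outcome bridge (Assumption~\ref{assumption:o_bridge}), the treatment bridge (Assumption~\ref{assumption:pt_bridge}) as a change of measure on $U$, and the conditional independence $Z\CI W\mid D,X,U$ (Assumption~\ref{assumption:pto_proxy}) to obtain $\mu_0(x)=\bE[h(W,1,X)\,m(Z,1,X)\mid D=1,X=x]$. Phase~2 then factors the joint $\eta_1(w,z)$ through the dependence ratio, uses non-negativity of $h$ and $m$, and applies the two marginal normalizations $\bE[h(W,1,X)\mid D=1,X=x]=\mu_1(x)$ and $\bE[m(Z,1,X)\mid D=1,X=x]=1$ to reach the stated bounds. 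Your remark that the argument requires only existence, not identification, of the bridge functions is also well taken.

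One small point worth flagging: you invoke latent unconfoundedness $Y(1)\CI D\mid X,U$ in Phase~1, which is indeed standard in the proximal structural model but is not explicitly listed among Assumptions~\ref{assumption:positivity}, \ref{assumption:pt_bridge}, \ref{assumption:pto_proxy}, \ref{assumption:o_bridge} in the paper. You correctly treat it as implicit in the framework, but it would be worth stating this explicitly rather than leaving it as a parenthetical.
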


\subsection{Manski Style No-Assumptions Bounds}

Manski-style no assumptions bounds on partially identified policy comparison terms \citep{manski1989anatomy, manski1998monotone} follow by setting  $\underline{\tau}(x) = 0$, $\overline{\tau}(x) = 1$. Invoking Lemma \ref{lemma:assumption_mapping} yields $\mathcal{H}(v_y(t, 0); \tau) = [0, p(D=0, T=1)]$, which is the same interval recovered by the \textit{unconstrained} uncertainty set $\mathcal{V}(p;\tau)$.

\subsection{Assumption Mapping Proofs}\label{appendix:proofs}

\subsubsection{Proof of Lemma \ref{lemma:assumption_mapping}.}

\begin{proof} For the upper bound $\overline{\tau}(x)$ we have that $\forall x \in X$
\begin{align}
\bE[Y(1) \mid D=0, X=x] &\leq \overline{\tau}(x) \nonumber \\
\bE[Y(1) \mid D=0, T=t, X=x] &\leq \overline{\tau}(x) \label{eq:gena}\\
\bE[Y(1) \mid D=0, T=t, X=x] \cdot p(D=0, T=t, X=x) &\leq  p(D=0, T=t, X=x) \cdot \overline{\tau}(x) \nonumber
\end{align}
where (\ref{eq:gena}) holds because $T \CI \{D, Y(1) \} \mid X$. Simplifying the right hand side yields 
\begin{align*}
&\bE[Y(1) \mid D=0, T=t, X=x] \cdot p(D=0, T=t, X=x) \\
&\leq  p(D=0 \mid T=t, X=x) \cdot p(T=t \mid X=x) \cdot p(X=x) \cdot \overline{\tau}(x) \\
&\leq  p(D=0 \mid X=x) \cdot p(T=t \mid X=x) \cdot p(X=x) \cdot \overline{\tau}(x) 
\end{align*}
The result follows from marginalizing over $X$
\begin{align*}
v_1(t,0) &\leq \sum_x  e_0(x) \cdot \pi_t(x) \cdot \overline{\tau}(x) \cdot p(X=x)  \\
        &= \bE[\overline{\tau}(x) \cdot e_0(x) \cdot \pi_t(x)]
\end{align*}

The upper bound is sharp because equality holds when $\overline{\tau}(x) = \bE[Y(1) \mid D=0, X=x] \; \forall x \in X$. The lower bound on $v_t(t,0)$ follows by the same argument. 
\end{proof}

\subsubsection{Proof of Theorem \ref{thm:minimality}.}

Importantly, Theorem \ref{thm:minimality} does \textit{not} imply that $\mathcal{V}(p;\tau)$ guarantees the tightest regret interval attainable from a specific causal assumption. In some cases, it may be possible to contract regret intervals further by exploiting additional information provided by a causal assumption. We discuss an example involving Rosenbaum's $\Gamma$ model above. Nevertheless, the flexibility of our assumption mapping approach is important for the real-world applicability of our framework. 

\begin{proof}

$\mathcal{V}(p; \tau)$ is minimal if (1) it is consistent with $p(X, D, T, Y)$ and  $\underline{\tau}, \overline{\tau}$ and (2) no consistent set $\mathcal{V}^*(p; \tau)$ exists such that $\lambda(\mathcal{V}^*(p; \tau)) < \lambda(\mathcal{V}(p; \tau))$. The first condition holds if $\forall \mathbf{v}_0 \in \mathcal{V}(p; \tau)$, 
\begin{align*}
v_1(0,0) &\in [\bE[\underline{\tau}(x) \cdot e_0(x) \cdot \pi_0(x)], \bE[\overline{\tau}(x) \cdot e_0(x) \cdot \pi_0(x)]]\\
v_1(1,0) &\in [\bE[\underline{\tau}(x) \cdot e_0(x) \cdot \pi_1(x)], \bE[\overline{\tau}(x) \cdot e_0(x) \cdot \pi_1(x)]]
\end{align*}

and $v_0(t,d) +  v_1(t,d) = \rho_{td}$, $\forall t, d$. This follows by definition of $\mathcal{V}(p; \tau)$. Additionally, 
$$
\sum_{d,t} p(D=d, T=t) = 1 \implies \sum_{d,t} v_0(d,t) + v_1(d,t) =1 \implies \sum_{d,t,y} v_y(d,t) = 1
$$

where the first equality follows by the law of total probability and the first implication follows by the constraint. Thus

$$
\sum_{d,y} v_y(t,d) = p(T=t), \; \sum_{t,y} v_y(t,d) = p(D=d), \; p(Y=y) = \sum_{t,y} v_y(t,1) \leq \sum_{d,t,d} v_y(t,d) \leq 1.
$$

Therefore, the constraint $v_0(t,d) + v_1(t,d) = 1 \; \forall t,d$ implies that $\mathbf{v}_0$ is consistent with other marginals of $p(X, D, T, Y)$. We will show (2) by contradiction. The Lebesgue measure over a cartesian product of intervals is given by  
$\lambda(\mathcal{V}) = \prod_{y,t} |\overline{v}_y(t,0) - \underline{v}_y(t,0)|$. Let $\mathcal{V}^*(p;f) \in [0,1]^4$ be an arbitrary set satisfying $\lambda(\mathcal{V}) > \lambda(\mathcal{V}^*)$. By a property of Lebesgue measures it follows that $A \supseteq B \implies \lambda(A) \leq \lambda(B)$. Thus  
\begin{align*}
\lambda(\mathcal{V}) > \lambda(\mathcal{V}^*) &\implies \; \mathcal{V} \not\subseteq \mathcal{V}^* \\
&\implies \exists \; \mathbf{v}_0 \in \mathcal{V} \text{ s.t. } \mathbf{v}_0 \not \in \mathcal{V}^*\\
&\implies \exists \; t \in \{ 0,1 \} \text{ s.t. } \overline{v}_1(t,0) > \overline{v}_1^*(t,0) \text{ or } \underline{v}^*_1(t,0) > \underline{v}_1(t,0).
\end{align*}

Define $s(x) = p(x) \cdot e_0(x) \cdot  \pi_t(x)$. Then
\begin{align*}
\overline{v}_1(t,0) > \overline{v}_1^*(t,0) &\implies \sum_x s(x) \cdot \overline{\tau}(x) > \sum_x s(x) \cdot \overline{\tau}^*(x) \\
&\implies \sum_x s(x) \cdot \overline{\tau}(x) - \sum_x s(x) \cdot \overline{\tau}^*(x) > 0 \\
&\implies \sum_x s(x) (\overline{\tau}(x) - \overline{\tau}^*(x)) > 0\\
&\implies \exists \; x \in X \text{ s.t. } \overline{\tau}^*(x) < \overline{\tau}(x) \\
&\implies \exists \; x \in X \text{ s.t. } \overline{\tau}^*(x) \leq \bE[Y(1) \mid D=0, X=x] \leq \overline{\tau}(x)
\end{align*}

However, the final implication violates the condition imposed by the bounding function. An analogous violation occurs for the lower bound $\underline{v}^*_1(t,0) > \underline{v}_1(t,0)$, proving the contradiction. 

\end{proof}
\newpage
\section{Theoretical Estimation Results}\label{appendix:estimation_results}

\subsubsection{Proof of Theorem \ref{thm:pl_convergence}}
\begin{proof}

Let $P_n(f)$ denote sample averages $\frac{1}{n}\sum_{i=1}^n f(O_i)$ on a separate fold of the data from that used to estimate $\hat f$. We can decompose the error term into 

$$
\hat{v} - v = \underbrace{\left(P_n-P\right) f}_{Z^*} + \underbrace{\left(P_n-P\right)(\widehat{f}-f)}_{T_1} + \underbrace{P(\widehat{f}-f)}_{T_2}.
$$

$Z^{*} = O_p(1 / \sqrt{n})$ by the central limit theorem. $T_1 = o_p(1/\sqrt{n})$ by consistency of $\hat{f}$ in the $L_2$ norm and sample splitting \citep{kennedy2020sharp}. Finally, we have for $T_2$ that 
\begin{align*}
P(\hat{f} - f)& = P\left(\hat{e}(X) \cdot \hat{\tau}(X) - e(X) \cdot\tau(X)  \right) \\
&= P\left(\hat{e}(X) \cdot \hat{\tau}(X) + \hat{e}(X) \cdot \tau(X) - \hat{e}(X) \cdot \tau(X) - e(X) \cdot\tau(X)  \right)\\
&= P\left(\hat{e}(X)(\hat{\tau}(X) - \tau(X)) + \tau(X) \cdot (\hat{e}(X) - e(X)) \right) \\
&= P\left(\hat{e}(X)(\hat{\tau}(X) - \tau(X))) + P(\tau(X) \cdot (\hat{e}(X) - e(X)) \right) \\
&\leq ||\hat{e}(x) - e(x)|| + ||\hat{\tau}(x) - \tau(x)||
\end{align*}

where the first equality follows by adding and subtracting $\hat{e}(X) \cdot \tau(X)$ and the final inequality follows by Cauchy-Schwarz. The overall convergence rate follows by combining terms. 

\end{proof}

\subsection{Rate of doubly robust estimator}

\begin{theorem}
    The doubly-robust estimator satisfies
    \begin{align*}
    \hat {\bar{v}}_{DR} - \bar{v} = O_P\Big( \norm{e(x) - \hat e(x)}\norm{\mu_1(x) - \hat \mu_1(x)}  \Big) + (P_n - P) \phi(O; \eta) + o_p(\frac{1}{\sqrt{n}})
    \end{align*}

    when $\norm{\phi - \hat \phi} = o_p(1)$.
\end{theorem}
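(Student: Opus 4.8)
The plan is to reuse the error decomposition from the proof of Theorem~\ref{thm:pl_convergence}, but to exploit that the $\phi$ in~(\ref{eq:doublyrobust}) is a Neyman-orthogonal score, so that the analogue of the bias term $T_2$ there becomes \emph{second order} in the nuisance errors rather than first order. I would work on a single evaluation fold $\cO_k$, conditioning on the complementary fold $\cO_{-k}$ on which $\hat\eta = (\hat e, \hat\mu_1)$ is fit (note $\pi$ is known, so there is no need to control sensitivity to a perturbation of $\pi$); the cross-fitted average $\tfrac1K\sum_k$ in Algorithm~\ref{algorithm:plug-in} then inherits the same rate since $K$ is fixed. The first substep is to verify the identification identity $P\phi(O;\eta) = \overline{v}$ at the true nuisances: using Assumption~\ref{assumption:consistency} ($DY = DY(1)$), the exogeneity $T \CI \{D, Y(1)\} \mid X$ used in the proof of Lemma~\ref{lemma:assumption_mapping}, and iterated expectations with $\bE[DY \mid X] = e_1(X)\mu_1(X)$ and $\bE[T \mid X] = \pi(X)$, the correction term $(T-\pi(X))\,e(X)\,\mu_1(X)\,\Lambda$ has mean zero and $P\phi(O;\eta)$ collapses to the plug-in mean $\bE[\pi(X)\,e_0(X)\,\Lambda\mu_1(X)] = \overline{v}$ implied by Lemma~\ref{lemma:msm} together with Lemma~\ref{lemma:assumption_mapping}.

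Next, using $P\phi(O;\eta) = \overline{v}$, I would decompose
\begin{align*}
\hat{\overline{v}}_{DR} - \overline{v} \;=\; \underbrace{(P_n - P)\phi(O;\eta)}_{(\mathrm{I})} \;+\; \underbrace{(P_n - P)\big(\phi(O;\hat\eta) - \phi(O;\eta)\big)}_{(\mathrm{II})} \;+\; \underbrace{P\big(\phi(O;\hat\eta) - \phi(O;\eta)\big)}_{(\mathrm{III})}.
\end{align*}
Term $(\mathrm{I})$ is exactly the $(P_n - P)\phi(O;\eta)$ term retained in the statement; it is $O_P(n^{-1/2})$ by the central limit theorem, but it is kept explicit because it is of leading order relative to the product remainder. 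For term $(\mathrm{II})$, conditionally on $\cO_{-k}$ it has mean zero and variance at most $n^{-1}\norm{\hat\phi - \phi}^2$, which is $o_p(n^{-1})$ under the hypothesis $\norm{\hat\phi - \phi} = o_p(1)$; a conditional Chebyshev argument then gives $(\mathrm{II}) = o_p(n^{-1/2})$, exactly the sample-splitting step already used in the proof of Theorem~\ref{thm:pl_convergence} (cf.\ \citet{kennedy2020sharp}).

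The main obstacle is term $(\mathrm{III}) = P\phi(O;\hat e, \pi, \hat\mu_1) - \overline{v}$. Here I would expand $\bE[\phi(O;\hat e, \pi, \hat\mu_1)]$ by iterated expectations (again using $\bE[DY\mid X] = e_1(X)\mu_1(X)$ and $\bE[T\mid X] = \pi(X)$) and regroup into a piece linear in $\hat e - e$, a piece linear in $\hat\mu_1 - \mu_1$, and a bilinear cross term. The payoff of the doubly robust construction is that the two linear pieces cancel, leaving $(\mathrm{III}) = \Lambda\,\bE\big[\pi(X)\,(\hat e(X) - e(X))\,(\hat\mu_1(X) - \mu_1(X))\big]$ up to bounded constants, after which Cauchy--Schwarz (with $\pi$ and $\Lambda$ bounded) yields $(\mathrm{III}) = O_P\big(\norm{e - \hat e}\,\norm{\mu_1 - \hat\mu_1}\big)$. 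Summing $(\mathrm{I})$, $(\mathrm{II})$, $(\mathrm{III})$ gives the claim. The delicate point — and the step I expect to require the most care — is establishing this cancellation, i.e.\ checking that the $\phi$ displayed in~(\ref{eq:doublyrobust}) really is the Neyman-orthogonal score for $\overline{v}$ under the MSM parametrization (equivalently, that its Gateaux derivative with respect to $(e,\mu_1)$ at the truth vanishes), since this is precisely what makes the bias term second order.
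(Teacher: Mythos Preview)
Your proposal is correct and follows essentially the same route as the paper: the same three-term decomposition into $(P_n-P)\phi(O;\eta)$, $(P_n-P)(\hat\phi-\phi)$, and $P(\hat\phi-\phi)$, with the empirical-process cross term handled via the sample-splitting lemma of \citet{kennedy2020sharp} and the bias term reduced to a product remainder followed by Cauchy--Schwarz. If anything, your plan is more explicit than the paper's proof about verifying the moment condition $P\phi(O;\eta)=\overline{v}$ and about why the linear pieces in $\hat e - e$ and $\hat\mu_1 - \mu_1$ cancel (the paper simply writes down the second-order remainder for term $A$ without isolating that step), so the Neyman-orthogonality check you flag as delicate is exactly the right place to focus.
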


This theorem demonstrates that the error of our estimator is a product of nuisance function errors. This enables us to achieve faster rate of convergence even when estimating the nuisance function at slow rates. For example, to obtain $n^{-1/2}$ rates for our estimator, it is sufficient to estimate the nuisance functions at $n^{-1/4}$, allowing us to use flexible machine learning methods to nonparametrically estimate the nuisance functions under smoothness or sparsity assumptions.
The theorem's condition that $\phi$ converges in probability in the $L_2(P)$ norm is mild and can be satisfied by using flexible regression methods.

\begin{proof}
Let $\eta$ indicate the nuisance functions $(e, \pi, \mu_1)$ and let $\hat  \eta $ indicate $(\hat e,  \pi, \hat \mu_1)$.
\begin{align*}
    P_n(\phi(O; \hat \eta)) - \bar{v} = \overbrace{P(\phi(O; \hat \eta) - \bar{v} )}^{A} + \overbrace{(P_n-P)(\phi(O; \hat \eta) - \phi(O; \eta)}^{B} + \overbrace{(P-P_n)(-\phi(O; \eta))}^{C}
\end{align*}    

For term A, we have that 
\begin{align*}
    P(\phi(O; \hat \eta) - \bar{v} ) =  \int  &\bar \mu(x) \Big(e(x) - \bar{e}(x)\Big)\Big(\bar \pi(x) - \pi(x)\Big) + \bar e(x) \Big(\pi(x) - \bar{\pi}(x)\Big)\Big(\bar \mu(x) - \mu(x)\Big) \\
&+ \Big(\mu(x) - \bar \mu (x)\Big) \Big(e(x) - \bar{e}(x)\Big)\Big(\bar \pi(x) - \pi(x)\Big))dP \\
&= O_P\Big( \norm{e(x) - \hat e(x)}\norm{\mu(x) - \hat \mu(x)}  \Big)
\end{align*}

where the second line applies Cauchy-Schwarz.

For term B, since $P_n$ is the empirical measure on an independent sample from $\hat{P}$, we can apply Lemma 2 of \citet{kennedy2020sharp} with our assumption that $\norm{\phi - \hat \phi} = o_{P}(1)$:
\begin{align*}
    (P_n-P)(\phi(O; \hat \eta) - \phi(O; \eta) = O_P\Big( \frac{\norm{\phi(O; \hat \eta) - \phi(O; \eta)}}{\sqrt{n}}\Big) = o_P(\frac{1}{\sqrt{n}})
\end{align*}  

Combining yields the result.
\end{proof}

\newpage
\section{Experiment Setup Details and Further Results}\label{appendix:experiments}

\subsubsection{Synthetic setup details}

We sample data from the probability functions 
\begin{align*}
\gamma(X_i) &:= \sigma'(X_i \times W_{z} ), \pi(X_i) := \sigma(X_i \times W_{\pi})\\
\pi_0(V_i, Z_i) &:= \sigma(V_i \times W_{\pi_0} + \beta_0 \cdot Z_i) \\
\mu_1(V_i, Z_i) &:= \sigma(V_i \times W_{\mu_1} + \beta_1 \cdot Z_i) \label{eq:dgp_iv_exclusion} \\
\mu_0(V_i, Z_i) &:= \begin{cases}
   \Lambda^* \cdot \mu_1(V_i), \Lambda^* \in U(\Lambda^{-1}, \Lambda) & \text { (MSM) } \\
   \sigma(V_i \times W_{\mu_0} + \beta_1 \cdot Z_i) & \; \text{ (IV) } \\
\end{cases} \\
\mu(V_i, Z_i) &:= \mu_1(V_i, Z_i)  \pi_1(V_i, Z_i) + \mu_0(V_i, Z_i)  \pi_0(V_i, Z_i)
\end{align*}
where $\sigma(x) = \frac{1}{1+e^{-x}}$ and $\sigma'(x)$ is the softmax function. We then sample $Z_i \sim \text{Multinomial}(\gamma(X_i))$, $D^{\pi_0}_i \sim \text{Bern}(\pi_0(V_i, Z_i))$, $T^{\pi}_i \sim \text{Bern}(\pi(X_i))$, and outcomes
\begin{align}
    Y_i(1) &\sim \begin{cases}   \text{Bern}({\mu_1(V_i, Z_i)}), & D^{\pi_0}_i=1 \\
         \text{Bern}({\mu_0(V_i, Z_i)}), & D^{\pi_0}_i=0 \\
    \end{cases}.
\end{align}

\textbf{Experimental Trials.} We randomly sample coefficient vectors $W_z$, $W_{\pi_0}$, $W_{\mu_1}$ and $W_{\mu_0}$ parameterizing $\pi_0$ and $\pi$ in each of the experiments reported in Section \ref{sec:numeric_experiments}. As a result, the oracle regret between the status quo and updated policy varies across experimental trials depending on the sigmoidal outcome probabilities induced by our randomly sampled coefficient vectors.

\newpage
\subsubsection{Assumption robustness experiments}

We now leverage the same synthetic setup outlined in Section \ref{sec:numeric_experiments} to stress-test the coverage of our regret intervals to assumption violations. Observe that the IV unconfoundedness condition is satisfied because $\gamma(X_i)$ does not depend on $U_i$, the relevance condition is satisfied when $\beta_0 > 0$, and the exclusion restriction is satisfied when $\beta_1 = 0$. We test robustness to violation of the IV relevance and exclusion assumptions by varying $\beta_0$, $\beta_1$, respectively. We test robustness to violation of the MSM by selecting values for $\Lambda^* \notin [\Lambda^{-1}, \Lambda]$.

In Figure \ref{fig:assmption-violation}, we provide bounds for policy performance measures as we introduce violations to the MSM and IV assumptions. The top row shows regions in which the MSM assumption $\Lambda=1.4$ is satisfied. The $\hat{R}_{\delta}$ interval yields valid coverage when $\Lambda^* \in [\Lambda^{-1}, \Lambda]$, but breaks when the assumption is violated. Observe that the relevance condition (middle row) controls the tightness of the regret interval, but does not introduce coverage violations. As a result, more heterogeneity across finite values of the instrument yields tighter uncertainty quantification, but does not impact coverage. This is inline with prior empirical evaluations conducted under the IV framework \citep{lakkaraju2017selective, kleinberg2018human}. The bottom row shows that violation of the exclusion violation does introduce coverage violations as $\beta_1$ increases. Across settings, observe the the baseline and delta regret intervals overlap for the PPV, which is inline with the results of our theoretical analysis.

\begin{figure*}[t!]
    \centering
    \includegraphics[width=\linewidth]{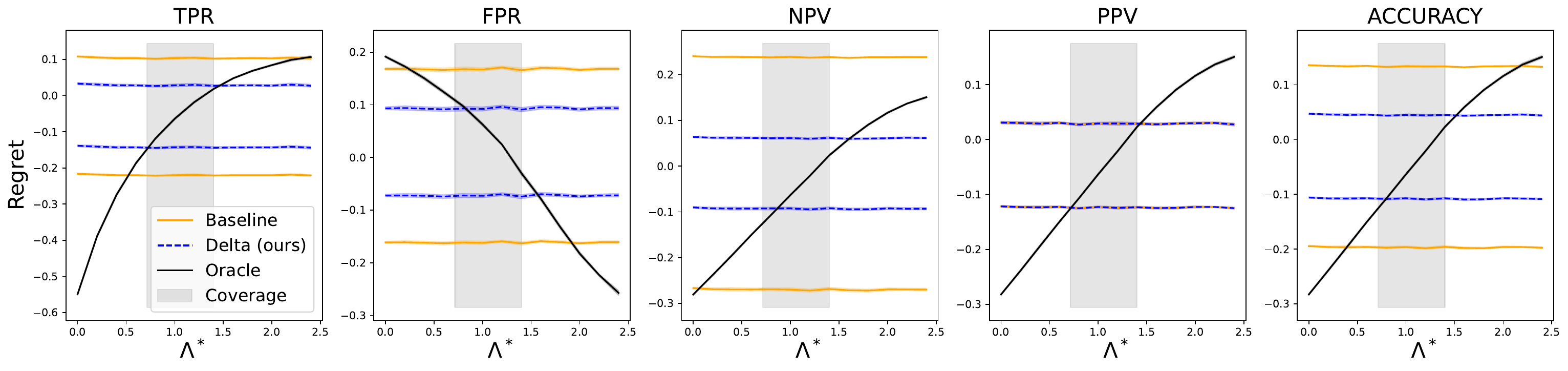}
    \centering
    \includegraphics[width=\linewidth]{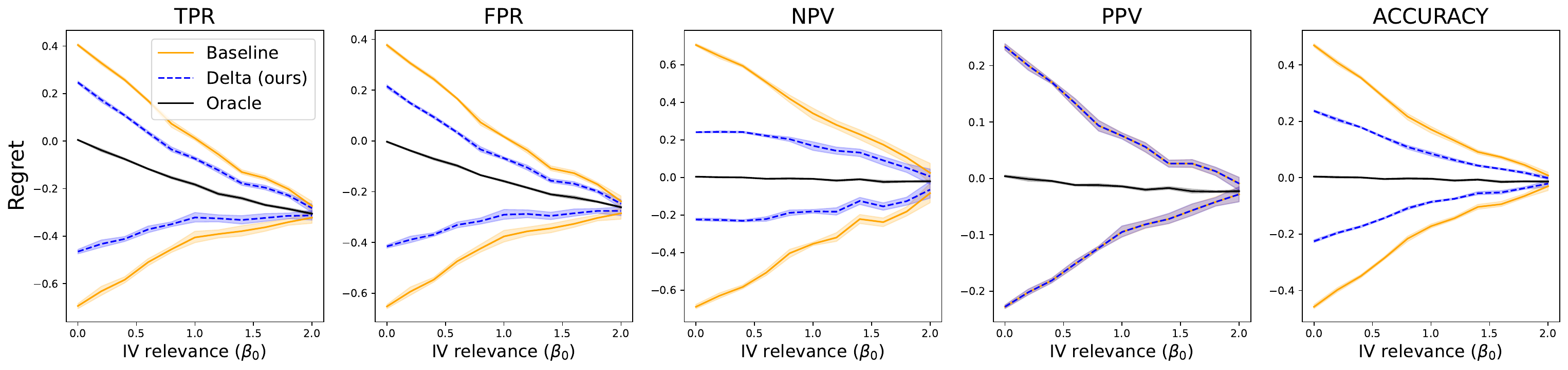}
     \centering
    \includegraphics[width=\linewidth]{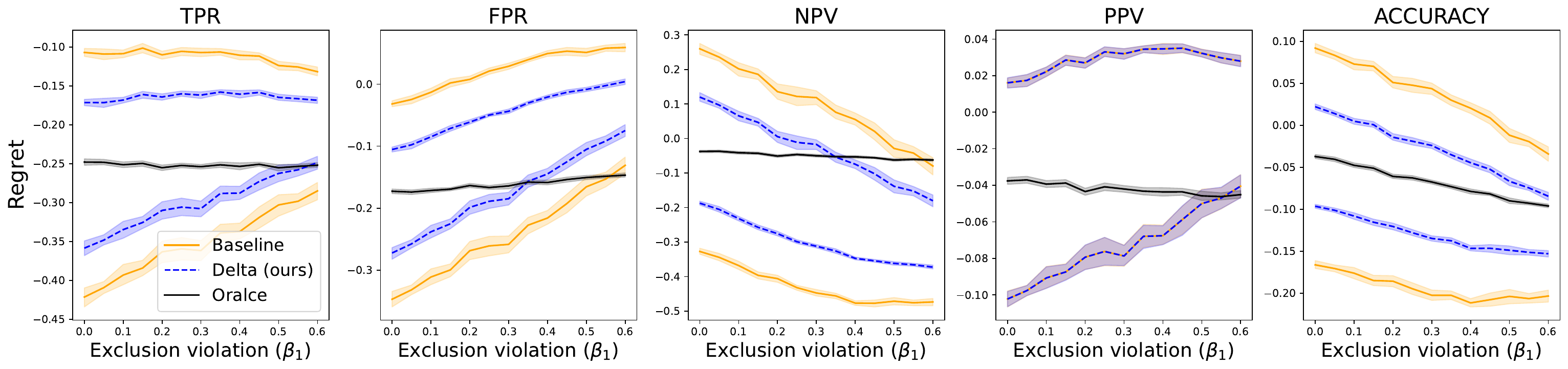}
    \caption{Top row (MSM): we fix $\Lambda=1.4$ and vary $\Lambda^*$. The area shown in grey indicates the values of $\Lambda^*$ for which the MSM assumption is satisfied. Middle row (IV): we vary $\beta_0$ controlling the relevance of $Z$ on the status quo decision-making policy. Bottom row (IV): we vary $\beta_1$ controlling the magnitude of the exclusion restriction violation. Shaded error region shows a 95\% bootstrap CI over $20$ trials.
    }\label{fig:assmption-violation}

\end{figure*}

\newpage
\subsubsection{Design sensitivity analysis}\label{appendix:realworld_setup}

\begin{figure*}[t!]
    \centering
    \includegraphics[width=\linewidth]{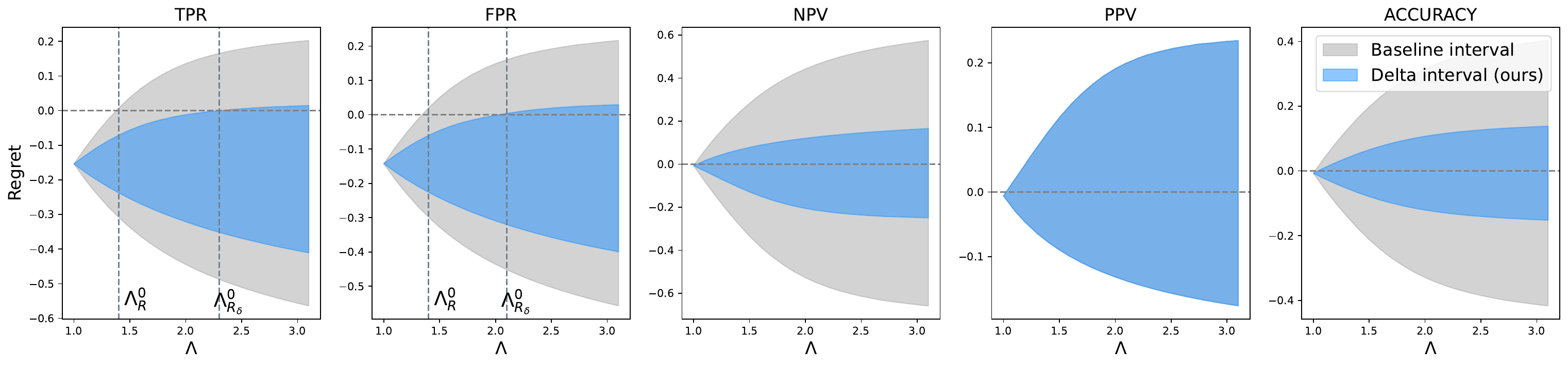}
    \vspace{-5mm}
    \caption{Design sensitivity analysis. Our $\delta$-regret interval certifies a policy performance difference up to a larger magnitude of unmeasured confounding $\Lambda^0$ for the TPR and FPR  measures.}\label{fig:design_sensitivity}
\end{figure*}

One common measure of robustness to confounding is the \textit{design sensitivity}, or the value of the sensitivity parameter at which the analysis exceeds a key threshold of interest \citep{rosenbaum1987sensitivity, rosenbaum2005sensitivity, namkoong2020off}. In our setting, we are most interested in the degree of unmeasured confounding permissible before the regret interval crosses zero. We conduct a design sensitivity analysis under the MSM by varying $\Lambda$ and measuring the value at which the $\delta$-regret and baseline intervals include zero. We denote these thresholds as $\Lambda_{R_\delta}^0$ and $\Lambda_R^0$, respectively. We leverage the same synthetic setup reported above and fit $\hat{\eta}$ via parametric logistic regression models to improve computation time. Figure \ref{fig:design_sensitivity} indicates that the improvement in our $\delta$ interval enables certifying a policy difference up to a larger magnitude of unmeasured confounding than would be possible via the baseline approach. For example, the $\delta$-interval certifies a TPR performance difference up to $\Lambda=2.3$, a significant improvement over $\Lambda=1.4$ yielded by the baseline interval.

\subsubsection{Real-world data experiments}\label{appendix:realworld_setup}

\textbf{Setup Details}. Each record of the synthetic dataset provided by \citep{obermeyer2019dissecting} contains patient demographics, health information measured in the prior year, and outcome variables measured in the following year. Only $\approx1\%$ of patients were enrolled in the care management program under the status quo policy. This low selection rate yields vacuous regret intervals without strong assumptions on confounding. Therefore, for this illustration, we construct a status quo policy with an $\approx 18\%$ selection rate ($N=2452$) by including all 452 records marked as enrolled in the program and randomly sampling 2000 unenrolled records. We fit the cost prediction model and nuisance functions using patient demographic variables, comorbidities, and prescription information measured in the preceding year. We use a linear regression to predict patient cost, and threshold predictions at the 55th percentile. This cutoff matches the threshold for physician enrollment recommendations of the deployed risk assessment \citep{obermeyer2019dissecting}. We use $40\%$ of the subsampled data to fit the cost prediction model and the remaining $60\%$ as an observational sample for the policy comparison task. Because we do not have access to physician identifiers which can be used as an instrument, we leverage the MSM assumption for identification. We use a logistic regression classifier to fit nuisance functions. We use Algorithm \ref{algorithm:plug-in} to estimate $[\underline{\hat{R}}_\delta(\pi, \pi_0; m, \mathcal{\hat{V}}), \hat{\overline{R}}_\delta(\pi, \pi_0; m, \mathcal{\hat{V}})]$ under the MSM with with $\Lambda=1.2$. We report results over $N=20$ trials with $K=2$ folds.

\newpage

\begin{figure*}
    \centering
    \includegraphics[width=1\linewidth]{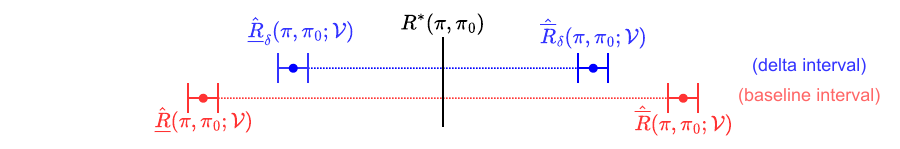}
    \caption{Two sources of uncertainty impacting policy regret bounds. (1) Confounding-related uncertainty impacts the width of the partially-identified regret interval (dotted lines). Our $\delta$-regret interval yields tighter partially-identified regret intervals than the baseline (Theorem \ref{thm:delta_seperation}). (2) Finite sample error impacts estimation of upper- and- lower interval end points, as shown by solid confidence intervals. Estimators for the $\delta$-regret interval and baseline interval have similar variance (see 95\% CI of each interval reported in Figure \ref{fig:pl_estimation}).}
    \label{fig:uncertainy-diagram}
\end{figure*}

\section{Characterizing Benefits of Uncertainty Cancellation for Off-Policy Evaluation}\label{appendix:uq_benefits}

In this Appendix, we provide additional discussion of our uncertainty cancellation approach. We begin by discussing the differential impact of \textbf{confounding-related uncertainty} and \textbf{finite sample uncertainty} on policy comparisons ($\S$ \ref{subsec:policy_uncertainty}). Our $\delta$-regret interval targets confounding-related uncertainty (Figure \ref{fig:uncertainy-diagram}). We then provide more intuition for our uncertainty cancellation approach ($\S$ \ref{subsec:intuiton}) and conclude by discussion further policy evaluation contexts which may benefit from our framework ($\S$ \ref{subsec:applications}).

\subsection{Sources of uncertainty impacting policy comparisons}\label{subsec:policy_uncertainty}

\textbf{Confounding-related uncertainty:} Let $\mu(x) = \bE[Y(1) \mid X=x]$ be the target outcome regression function. We can understand the impact of confounding related uncertainty on off-policy evaluation via the target regression decomposition 
$$
\mu(x) = \underbrace{\bE[Y(1) \mid D^{\pi_0}=1, X=x]}_{\text{Identified from observational data}} \cdot p(D^{\pi_0}=1 \mid X=x) +  \underbrace{\bE[Y(1) \mid D^{\pi_0}=0, X=x]}_{\in [0,1], \text{Unidentified from observational data}} \cdot p(D^{\pi_0}=0 \mid X=x). 
$$
Now suppose we are in the asymptotic setting with no finite sample uncertainty. Thus we know $e_d(x) = p(D^{\pi_0}=d \mid X=x)$ and the observed outcome regression 
$$
\mu_1(x) := \bE[Y(1) \mid D^{\pi_0}=1, X=x] = \bE[Y \mid D^{\pi_0}=1, X=x]
$$
from observational data.\footnote{The second equality follows by Assumption \ref{assumption:positivity}} However, because $D^{\pi_0} \nCI Y(1) \mid X$ due to confounding, the unobserved outcome regression $\mu_0(x) := \bE[Y(1) \mid D^{\pi_0}=0, X=x]$ is bounded within the worst-case interval $0 \leq \mu_0(x) \leq 1, \; \forall x \in X. $\newline

As we show in Figure \ref{fig:outcome_bounds}, bounds around the target regression $\mu(x) = e_1(x) \cdot \mu_1(x)  + (1-e_1(x)) \cdot \mu_0(x)$ widen as $e_1(x)$ decreases. This is because \textit{decreasing} the propensity score \textit{increases} the weighting of the unidentified term $\mu_0(x)$ in the regression decomposition. Therefore, the tightness of bounds around $\mu(x)$ depend on (1) the propensity function $e_d(x)$ and (2) the tightness of bounds around $\mu_0(x)$. The bounds around partially-identified $v$-statistics which we use to construct regret intervals inherits the same dependence on these terms (Lemma \ref{lemma:assumption_mapping}). As a result, \textbf{asymptotic regret intervals are wider when the status quo policy has a lower selection rate, regardless of the amount of data available for estimation.}

\textbf{Finite-sample uncertainty:} Finite sample uncertainty also impacts our ability to estimate regret interval endpoints. When less data is available, confidence intervals around upper and lower regret intervals will tend to be wider. As a result, although both the baseline and $\delta$ intervals have valid asymptotic coverage for the true regret (Appendix \ref{appendix:asymptotic_regret_identification}), coverage can be violated under bias in estimates of the regret interval end points. Figure \ref{fig:pl_estimation} shows confidence intervals around regret interval end-points as a function of dataset size. Our doubly-robust estimation approach improves data-efficiency of our regret estimator under no parametric assumptions on $\pi_0(x,u)$. We show that our DR estimator attains fast $\sqrt{n}$-rates in Appendix \ref{appendix:estimation_results}.

\begin{figure}[ht]
  \centering
    \includegraphics[width=.5\linewidth]{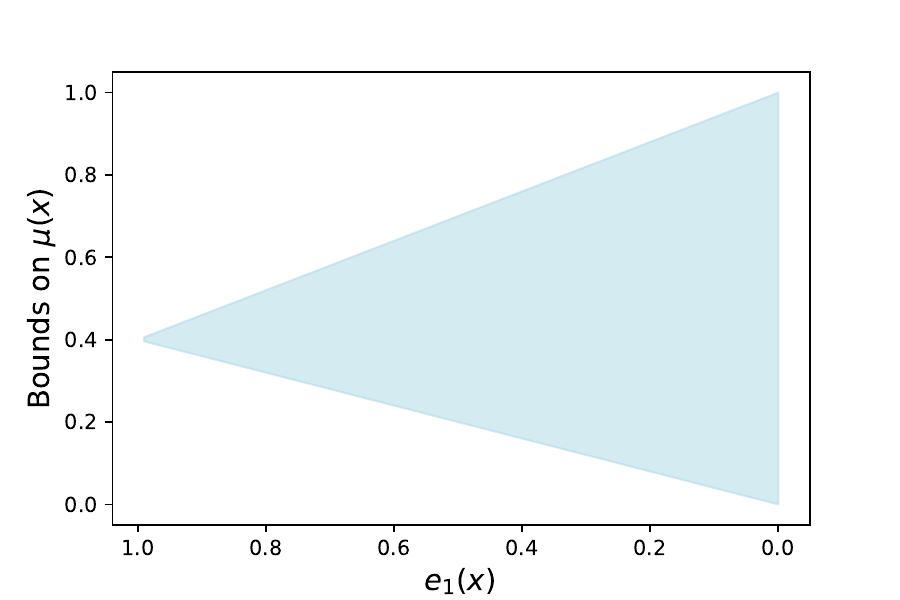}
     \vspace{-5mm}
    \caption{Bounds around $\mu(x) := \bE[Y(1)\mid X=x]$ widen as the propensity score (i.e., the selection rate under the status quo policy) $e_1(x)$ decreases. Recall that $\mu(x) = e_1(x) \cdot \mu_1(x)  + e_0(x) \cdot \mu_0(x)$. We fix $\mu_1(x) = .4$, let $\mu_0(x) \in [0,1]$, and let $e_1(x)$ vary from $0$ to $1$. Bounds around the target regression increase as the weighting on the unobserved regression $\mu_0(x)$ increases.}
    \label{fig:outcome_bounds}
\end{figure}

\newpage
\textbf{Synthetic data example.} We illustrate the relationship between confounding related uncertainty and finite sample error via a synthetic data experiment following a similar setup as the one outlined in Appendix \ref{appendix:experiments}. We construct subgroups by defining two protected attributes with two levels each. The first correlates with $X_1$, while the second correlates with $X_2$. This yields four intersectional subgroups G1-G4 with varying sizes ($\omega$) and selection rates under the status quo policy ($\gamma$).

Figure \ref{fig:fnr-synthetic-results} compares the $\delta$-regret interval and the baseline interval via worst case (WC) and instrumental variable (IV) partial identification strategies. We see that groups with lower selection rates under the status quo policy (e.g., G1, G2) have wider asymptotic bounds than those with higher selection rates (e.g., G3, G4). This indicates that the key driver of uncertainty at the asymptotic level is confounding. We can also examine statistical uncertainty in subgroup regret estimates by inspecting the size of confidence intervals. We observe that smaller subgroups (e.g., G1, G3) have larger variance in regret estimates than larger subgroups (e.g., G2, G4, full population).\footnote{We can isolate subgroup size ($\omega$) as the source of this uncertainty because we fix selection rates across subgroups such that G1($\gamma$)=G2($\gamma$), G3($\gamma$)=G4($\gamma$).}  Importantly, we observe that a small subgroup with a large selection rate (e.g., B5, $\gamma=.65$, $\omega=.08$) has tighter bounds than a larger subgroup with a lower selection rate (e.g., Population, $\gamma=0.5$, $\omega=1.0$). In line with our discussion above, this indicates that subgroup selection rates under $\pi_0$ are a key driver of uncertainty in asymptotic regret intervals.

\begin{figure}[ht]
  \centering
    \includegraphics[width=.9\linewidth]{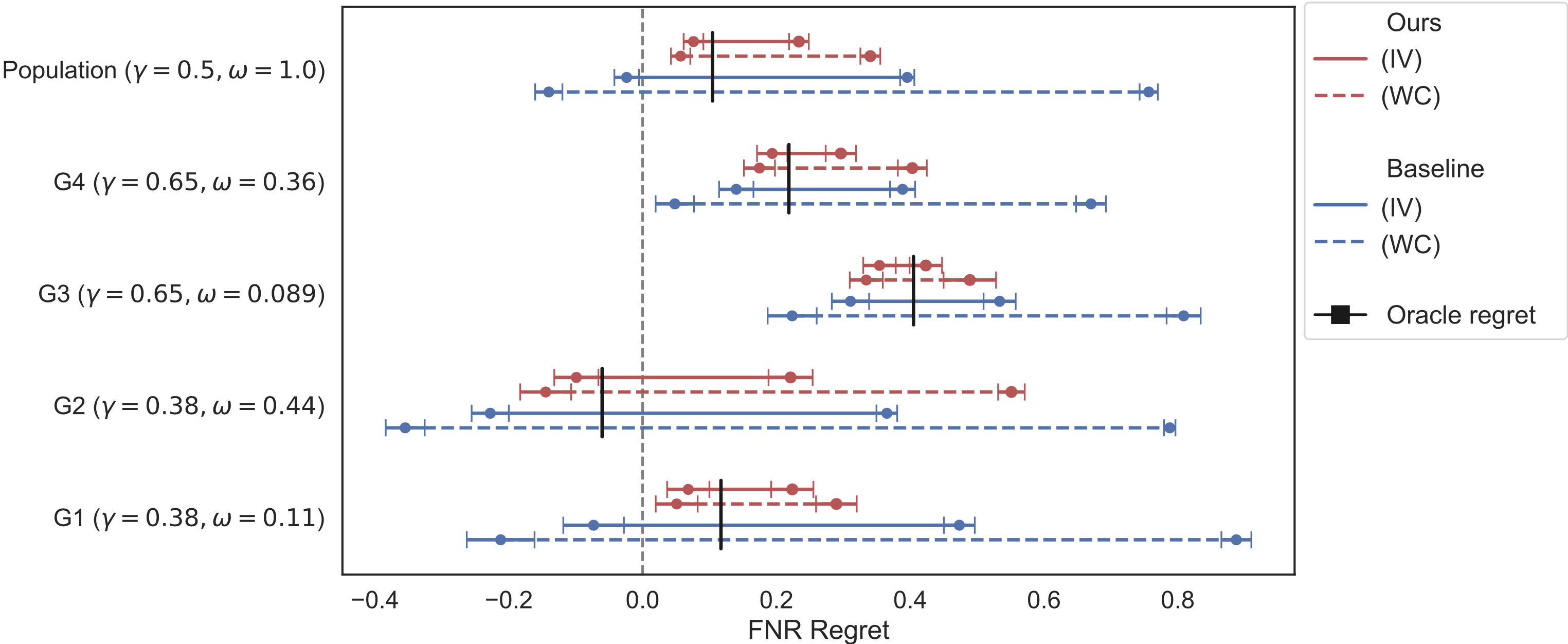}
    \caption{Top row indicates full population bounds, while lower rows show regret over subgroups of varying selection rates ($\gamma$) and sizes ($\omega$). Horizontal bars indicate asymptotic regret bounds under confounding. We show statistical uncertainty over $N=10$ runs by plotting 95\% confidence intervals centered at each upper and lower asymptotic bound.}
    \label{fig:fnr-synthetic-results}
\end{figure}
\newpage

\begin{figure}[ht]
  \centering
    \includegraphics[width=.4\linewidth]{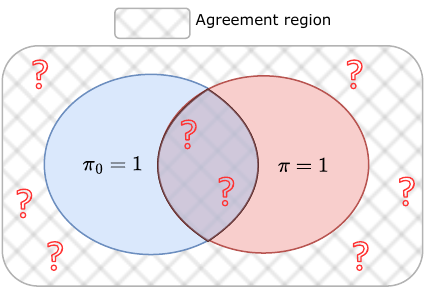}
    \caption{Uncertainty in the agreement region of the policy comparison (denoted via red question marks) cancels in comparative policy performance analyses.}
    \label{fig:uncertainty_cancellation}
\end{figure}

\subsection{Building intuition for uncertainty cancellation}\label{subsec:intuiton}

The intuition for our approach is that we can safely ignore redundant uncertainty in the agreement region of the policy comparison (Figure \ref{fig:uncertainty_cancellation}). We see this concretely via the accuracy regret decomposition
\begin{align*}
R^*(\pi, \pi_0; m_u) &= m^*_u(\pi) - m^*_u(\pi_0) \\
&= p(T^{\pi} = Y(1)) - p(D^{\pi_0} = Y(1)) \\
&= ( \mathbin{\textcolor{red}{v_1(1,1)}} + v_1(1,0) + v_0(0,1) + \mathbin{\textcolor{red}{v_0(0,0)}} )  - ( \mathbin{\textcolor{red}{v_1(1,1)}} + v_1(0,1) + v_0(1,0) + \mathbin{\textcolor{red}{v_0(0,0)}} ) \\
&= v_1(1,0) + v_0(0,1) - v_1(0,1) - v_0(1,0). \\
\end{align*}
Observe that the agreement terms \textcolor{red}{$v_1(1,1)$} and \textcolor{red}{$v_0(0,0)$} cancel when we take the difference across policies. The agreement region ($\pi_0=1, \pi=1$) at the center of the Venn Diagram \textit{does not} add uncertainty in our context because $Y(1)$ is observed when $\pi_0=1$. Therefore, cancellation of $v_1(1,1)$ does not improve regret bounds. However, the agreement region in the complement space ($\pi_0=0, \pi=0$) \textit{does} contribute to uncertainty because $Y(1)$ is unobserved when $\pi_0=0$. Therefore cancellation of $v_0(0,0)$ is the main driver of the performance improvement. Theorem \ref{thm:delta_seperation} formalizes this notion by showing that the improvement in the tightness of the $\delta$-regret interval is proportional to the amount of uncertainty in the \textcolor{red}{$v_0(0,0)$} term -- i.e., $\alpha = \overline{v}_0(0,0) - \underline{v}_0(0,0)$.

We next turn to the positive predictive value regret for an example where uncertainty cancellation \textit{does not} improve bounds
\begin{align*}
R^*(\pi, \pi_0; m_{a=1}) &= m^*_{a=1}(\pi) - m^*_{a=1}(\pi_0) \\
&= p(Y(1)=1 \mid T^{\pi} =1 ) - p(Y(1)=1 \mid D^{\pi_0}=1 ) \\
&= \frac{p(T=1, Y(1) = 1)}{p(T = 1)} - \frac{p(D=1, Y(1) = 1)}{p(D = 1)} \\
&= \frac{v_1(1,0) + \mathbin{\textcolor{red}{v_1(1,1)}}}{\rho_{11} + \rho_{10}} - \frac{\mathbin{\textcolor{red}{v_1(1,1)}} + v_1(0,1)}{ \rho_{11} + \rho_{01}}\\
\end{align*}
where $\rho_{td} = p(T=t, D=d)$. Because only \textcolor{red}{$v_1(1,1)$} cancels in this decomposition, we observe no improvement from the $\delta$-regret interval for this performance measure.

More generally, we will observe a performance improvement from the $\delta$-regret interval in any off-policy evaluation context where there is uncertainty arising from the agreement region of the policy comparison. \textbf{Importantly, this insight holds for any causal assumption (e.g., Rosenbaum's $\Gamma$, MSM, IV, Proximal) which can be expressed as pointwise bounding functions, which we detail in Appendix \ref{appendix:assumption_extensions}}. 

\subsection{Other potential applications of uncertainty cancellation}\label{subsec:applications}

While our work is concerned with policy evaluation under unmeasured confounding, other sources of uncertainty can also complicate policy comparisons. Our $\delta$-regret interval may support partial identification under these uncertainty sources.\footnote{We offer these examples as illustrative of our technique and acknowledge further technical development would be required to develop these approaches in practice.}

\textbf{Measurement Error and Noisy Labels.} In many cases, labels available for model evaluation are observed under measurement error or label noise \citep{scott2013classification, xia2019anchor, angluin1988learning, guerdan2023ground}. A common setup in this setting is to model the difference between the true outcome $Y^* \in \{0,1\}$ and its proxy $Y \in \{0,1\}$ via the false negative rate $\beta = p(Y=0 \mid Y^*=1)$ and false positive rate $\alpha = p(Y=1 \mid Y^*=0)$. However, when conducting comparative performance analyses, it may be possible to disregard measurement error in the agreement regions of the policy action space. This may tighten partial identification bounds studied in prior work \citep{fogliato2020fairness}.

\textbf{Missing Protected Attributes.} Protected attributes are sometimes unavailable for fairness assessments due to data collection or regulatory constraints \citep{kallus2022assessing, coston2019fair}. However, we may wish to compare fairness statistics of alternative policies under missing protected attributes. If a partial identification approach is used to bound fairness characteristics under missing protected attributes \citep{kallus2022assessing}, it may be possible to tighten bounds by studying fairness differences over the disagreement region of the policy space.


\end{document}